\definecolor{UCnavyy}{rgb}{0.094117, 0.168627, 0.2862745}           
\definecolor{navy}{rgb}{0, 0.415686, 0.588235}  
\definecolor{cf9f9f9}{RGB}{249,249,249}
\definecolor{cb3b3b3}{RGB}{179,179,179}
\definecolor{c808080}{RGB}{128,128,128}
\definecolor{c1a1a1a}{RGB}{26,26,26}
\definecolor{cffffff}{RGB}{255,255,255}
\newtheorem{theorem}{Theorem}[section]
\newtheorem{lemma}[theorem]{Lemma}
\newtheorem{proposition}[theorem]{Proposition}
\newtheorem{corollary}[theorem]{Corollary}
\theoremstyle{definition}
\newtheorem{definition}[theorem]{Definition}
\newtheorem{remark}[theorem]{Remark}
\newtheorem{assumption}{Assumption}
\newtheorem*{informaltheorem*}{Theorem}
\newenvironment{manualtheorem}[1]{%
  \manualtheoreminner
}{\endmanualtheoreminner}
\renewcommand{\emptyset}{\varnothing}
\renewcommand{\epsilon}{\varepsilon}
\DeclareMathOperator*{\E}{\mathbb{E}}
\DeclareMathOperator*{\argmin}{\mathrm{arg\,min}}
\newcommand{\ind}{\mathds{1}}
\title{Convergence of online \textit{k}-means}
\author{
Sanjoy Dasgupta\thanks{Dept.\@ of Computer Science and Engineering, UC San Diego. \{\href{mailto:dasgupta@eng.ucsd.edu}{\texttt{dasgupta}}, \href{mailto:gmahajan@eng.ucsd.edu}{\texttt{gmahajan}}, \href{mailto:agso@eng.ucsd.edu}{\texttt{agso}}\}\texttt{@eng.ucsd.edu}.} \and
Gaurav Mahajan$^*$ \and
Geelon So$^*$
}
\date{\today}
\begin{document}

\maketitle

\begin{abstract}
We prove asymptotic convergence for a general class of $k$-means algorithms performed over streaming data from a distribution---the centers asymptotically converge to the set of stationary points of the $k$-means cost function. To do so, we show that online $k$-means over a distribution can be interpreted as stochastic gradient descent with a stochastic learning rate schedule. Then, we prove convergence by extending techniques used in optimization literature to handle settings where center-specific learning rates may depend on the past trajectory of the centers.
\end{abstract}
\newpage
\tableofcontents
\newpage

\section{Introduction}
\label{sec:motivation}
Lloyd's method \citep{lloyd1982least} is a popular iterative procedure for $k$-means clustering a finite dataset in $\mathbb{R}^d$. At each step, the algorithm proposes $k$ centers, say $W_1,\ldots, W_k \in \mathbb{R}^d$. Each data point is then mapped to its closest center, partitioning the dataset into $k$ clusters. The update simply sets each center to the mean of its corresponding cluster data. Since each step requires a pass over the whole dataset, large-scale data and streaming settings often use online variants of $k$-means, computing updates on single data points or mini-batches of data points.

Consider \emph{online \textit{k}-means algorithms} with updates that (i) receive a data point $X$, (ii) find the closest center $W_{i}$ among $W_1,\ldots, W_k$, and (iii) update $W_{i}$ using $X$. The long-term behavior of this procedure is unknown when applied to a never-ending stream of data points that is drawn from an underlying data distribution $p$ on $\mathbb{R}^d$. This leads to the following question:
\begin{quotation}
\noindent \emph{If $X^{(1)}, X^{(2)},\ldots$ come from an underlying data distribution $p$, do these forms of online $k$-means algorithms converge to local optima of the $k$-means cost function $f$ on $p$?}
\end{quotation}

\paragraph{A motivating example for analysis}
\cite{bottou1995convergence} define an online $k$-means algorithm used in practice, which we call the \ref{alg:naive} algorithm. For each $i = 1,\ldots, k$, it simply sets the center $W_i$ to the mean of all its previous updates, which can be computed in a streaming fashion. It does so by maintaining a counter $N_i$ for the number of times each center has been updated so far. If $W_{i}$ is the center closest to the next data point $X$, the update is:
\begin{align*} 
W_{i} &\leftarrow W_{i} - \frac{1}{N_{i} + 1} \big(W_{i} - X\big) \quad\textrm{and}\quad
N_{i} \leftarrow N_{i} + 1.
\end{align*}
\begin{figure*}[h!]
\begin{center}
\begin{minipage}{0.9\textwidth}
\hrulefill
\begin{algorithm}{online Lloyd's}{
    \label{alg:naive}
    \qinit{$k$ arbitrary distinct centers $W \in \mathbb{R}^{k\times d}$ from the support of $p\vphantom{\big|_p}$}}
    \qfor $n = 0,1,2,\ldots\vphantom{\displaystyle\frac{1}{N}}$ \\
        sample data point $X\sim p\vphantom{\Big|}$\\
        identify closest center $i \leftarrow \argmin_{j \in [k]}\, \|W_j - X\|\vphantom{\Big|}$\\
    update counter $\displaystyle N_i \leftarrow N_i +1 \vphantom{\Big|}$   \\ 
    update center $\displaystyle W_i \leftarrow W_i - \frac{1}{N_i}\cdot \big(W_i - X\big) $   
    \qrof
\end{algorithm}
\hrulefill
\end{minipage}
\vspace{5pt}

\begin{minipage}{0.85\textwidth}
\small{\textsc{algorithm}. A simple online $k$-means algorithm introduced by \cite{bottou1995convergence}. At any point in time, each center $W_i$ is the mean of all its previous updates.} 
\end{minipage}
\end{center}
\end{figure*}

\noindent Generalizing, we consider a broader class of \ref{alg:distributional-kmeans} algorithms whose update has the form:
\[W_{i} \leftarrow W_{i} - H_{i} \cdot \big(W_{i} - X\big).\]
Here, $X$ is a random draw from $p$ and $i$ is the index of the closest center $W_{i}$ to $X$. Further, $H_{i} \in [0,1]$ is a center-specific stochastic learning rate that may depend arbitrarily on the past. This yields a simple geometric meaning to the $W_i$'s: each is a convex combination of all its previous updates.
\begin{figure*}
\begin{center}
\begin{minipage}{0.9\textwidth}
\hrulefill
\begin{algorithm}{online $k$-means}{
    \label{alg:distributional-kmeans}
    \qinit{$k$ arbitrary distinct centers $W \in \mathbb{R}^{k\times d}$ from the support of $p\vphantom{\big|_p}$}}
    \qfor iteration $n = 0,1,2,\ldots\vphantom{\big|_{p_i}}$\\ 
    \qdo
        sample data point $X\sim p\vphantom{\big|_{p_i}}$ \\
        identify closest center $i \leftarrow \argmin_{j \in [k]}\, \|W_j - X\|\vphantom{\big|_{p_i}}$\\
    update closest center $\displaystyle W_i \leftarrow W_i - H_i\cdot \big(W_i - X\big) \vphantom{\big|_{V_{i}(W^{(n)})}}$   
    \qrof
\end{algorithm}
\hrulefill
\end{minipage}
\vspace{5pt}

\begin{minipage}{0.85\textwidth}
\small{\textsc{algorithm}. A class of online $k$-means algorithms. Here, $W = (W_1,\ldots, W_k)$ is the tuple of $k$ centers maintained by the algorithm and $H_i \in [0,1]$ is a (stochastic) learning rate for the $i$th center. $X$ is a random sample from data distribution $p$.} 
\end{minipage}
\end{center}
\end{figure*}

\paragraph{Challenges to analysis} Despite its algorithmic simplicity, \ref{alg:distributional-kmeans} has eluded analysis. While \mbox{$k$-means} is often analyzed by recasting it as stochastic gradient descent (SGD), this is a setting for which existing optimization literature is insufficient. The difficulty is that centers can learn at different rates that possibly depend on the whole history of the algorithm. To circumvent the issue, previous work (e.g.\@ \cite{tang2017convergence}) replace the center-specific learning rate of $\frac{1}{N_i + 1}$ by a uniform-across-centers and deterministic learning rate, say $\frac{1}{n}$, where $n$ is the number iterations that has elapsed in the algorithm.

\subsection{Main contributions}
We prove that a large class of online $k$-means algorithms asymptotically converge under reasonable assumptions to the set of stationary points of the $k$-means objective. In particular, we show:
\paragraph{Connection to stochastic gradient descent} We prove in \Cref{lem:k-means-grad} that algorithms in this family perform SGD on the $k$-means cost. While known for $k$-means over finite datasets \citep{bottou1995convergence}, the result does not trivially extend to distributions---the essential difference is that there are finitely many ways to cluster a finite dataset, but infinitely many ways to cluster $\mathbb{R}^d$.

\paragraph{Convergence of online $k$-means algorithms} Standard techniques from optimization literature are able to analyze SGD with uniform learning rates, but they are unable to handle the variant of SGD performed by \ref{alg:distributional-kmeans}, which has center-specific learning rates. To show convergence, we extend the techniques from \cite{bertsekas2000gradient} to cover non-uniform learning rates. 

Of course, not every choice of non-uniform learning rates $H_i$ will lead to convergence. As an extreme example, if an adversary can set the learning rate of a center $W_i$ to zero, then the iterates will never converge to a stationary point. To prove convergence, we need to impose additional conditions. The key property that we shall require for convergence is that if a center $W_i$ is far from its cluster mean---the mean of its Voronoi cell---then with constant probability, it is updated at a rate not too much slower than the rest of the centers. \Cref{thm:iterates-conv-simple} proves convergence.

\paragraph{Convergence of a generalized online Lloyd's algorithm} It turns out that online Lloyd's is particularly difficult to analyze. It is poorly conditioned in the sense that nothing seems to prevent iterates from making rare but large jumps---it is unclear whether online Lloyd's algorithm satisfies the assumptions from our convergence theorem, \Cref{thm:iterates-conv-simple}. 

While the \ref{alg:naive} algorithm falls into the family of \ref{alg:distributional-kmeans} algorithms we consider, it turns out that it is particularly difficult to analyze. It is poorly conditioned in the sense that nothing seems to prevent iterates from making rare but large jumps---it is unclear whether \ref{alg:naive} satisfies the assumptions from our convergence theorem. Furthermore, \ref{alg:naive} may differ significantly from the original offline Lloyd's algorithm. In the original, centers are updated to the mean of the current clusters. But in online Lloyd's, centers are set to the mean of all previous updates. But this mean-of-all-previous-updates does not generally well-approximate the mean of the current cluster because the underlying clusters drift about throughout the whole algorithm. 

Instead, to design an online version of Lloyd's algorithm with asymptotic guarantees, we start from the interpretation of Lloyd's algorithm as preconditioned gradient descent. Then, we define a \ref{alg:onlinelloyd} algorithm as its stochastic analog, which concurrently keeps an estimate of the preconditioner. We prove the consistency of our estimator to the Lloyd preconditioner in \Cref{sec:lloyd}, lending our algorithm the interpretation of a natural extension of Lloyd's. 

Additionally, we prove that \ref{alg:onlinelloyd} also achieves asymptotic convergence. To state the result, we say that a $k$-tuple of centers $w \in \mathbb{R}^{k\times d}$ is \emph{degenerate} if at least two of the centers coincide, $w_i = w_j$ for some $i \ne j$. The following is an informal restatement of \Cref{thm:lloydconvergence}.

\begin{informaltheorem*}[informal]
    Let $p$ be a continuous density with bounded support on $\mathbb{R}^d$ and let $f$ be its \mbox{$k$-means} cost. Suppose that the set of stationary points $\{\nabla f = 0\}$ has no degenerate limit points. Let $(W^{(n)})_{n=0}^\infty$ be the iterates  of the \ref{alg:onlinelloyd} algorithm. Then, the iterates asymptotically converges to the set of stationary points:
    \[\limsup_{n \to \infty} \, \inf_{w \in \{\nabla f = 0\}}\, \|W^{(n)} - w\| = 0.\]
\end{informaltheorem*}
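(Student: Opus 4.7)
The plan is to deduce the theorem from the general convergence result \Cref{thm:iterates-conv-simple} for the \ref{alg:distributional-kmeans} family. As a first step, I would identify \ref{alg:onlinelloyd} as a member of this family by writing its update as $W_i \leftarrow W_i - H_i (W_i - X)$ and exhibiting the stochastic learning rate $H_i \in [0,1]$ in terms of the preconditioner estimate that the algorithm maintains. Since Lloyd's step can be interpreted as preconditioned gradient descent on $f$ with preconditioner equal to reciprocal Voronoi mass, the natural online analog uses a running mass estimate, and the consistency result of \Cref{sec:lloyd} guarantees that on any sequence of centers that stabilizes near a non-degenerate configuration, $H_i$ asymptotically behaves like $c_i/n$ with $c_i$ proportional to inverse Voronoi mass at the limit.

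Next, I would verify the hypotheses of \Cref{thm:iterates-conv-simple} for this choice of $H_i$. \Cref{lem:k-means-grad} already supplies the SGD interpretation of the update, so the work reduces to checking the learning-rate conditions: the standard Robbins--Monro conditions $\sum_n H_i^{(n)} = \infty$ and $\sum_n (H_i^{(n)})^2 < \infty$ follow from the $\Theta(1/n)$ asymptotics of $H_i$, provided the relevant Voronoi masses are bounded away from zero. The crucial remaining hypothesis is the non-uniform learning rate property: whenever $W_i$ is far from the mean of its Voronoi cell, with constant probability $H_i$ is not too much smaller than the rates of the other centers. Under bounded support and continuous density $p$, Voronoi masses are continuous functions of the $k$-tuple $W$ on the open set of non-degenerate configurations, so on any compact non-degenerate region every mass is uniformly bounded below. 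This uniform lower bound translates into the required lower bound on $H_i$ via the consistency result, after integrating out the randomness in the preconditioner estimate.

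The main obstacle is that these uniform bounds on $H_i$ only hold in non-degenerate regions, while the preconditioner estimator is only guaranteed to be consistent once the centers stabilize there, creating a circular dependence. To close the loop, I would run a bootstrapping argument: bounded support of $p$ already confines the iterates to a fixed compact set, and the assumption that $\{\nabla f = 0\}$ has no degenerate limit points, combined with a stopping-time argument, shows that with probability one the iterates eventually enter and remain within a compact non-degenerate neighborhood of the stationary set. Restricted to this event, the preconditioner estimate becomes consistent, so all hypotheses of \Cref{thm:iterates-conv-simple} are met and it delivers the desired convergence. The delicate technical work lies in ruling out pathological excursions toward degenerate configurations, since along such excursions both the preconditioner estimator and the learning-rate comparison can misbehave; I expect continuity of $p$ and the non-degeneracy of the limit set to suffice, but this is the step that will require the most care.
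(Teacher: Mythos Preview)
Your high-level plan is right in spirit but the route you describe has a genuine gap at exactly the point you flag as delicate. You propose to first show that the iterates eventually enter and remain in a compact non-degenerate region, and then invoke consistency of the preconditioner estimator on that region. But ``eventually remains near the stationary set'' is essentially the conclusion you are trying to prove, so this is circular: you cannot establish that the iterates stabilize without already controlling the learning rates, and you cannot control the learning rates without knowing the iterates stay in a region where $P_j$ is Lipschitz and the estimator concentrates. Moreover, the hypothesis of \Cref{thm:iterates-conv-simple} conditions on $\|\nabla_{w_i} f(W^{(m)})\| > \epsilon$, and the level set $\{\|\nabla_{w_i} f\| > \epsilon\}$ is not compact in $\mathcal{D}_R$, so your uniform-Lipschitz-on-compacta argument does not apply there.

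The paper breaks this circularity differently and does not use \Cref{thm:iterates-conv-simple} directly. It applies the more refined \Cref{thm:iterates-conv}, which splits the analysis into the compact region $\{\|\nabla_{w_i} f\| \in [\epsilon,\epsilon_0)\}$ (condition (A1)) and the non-compact region $\{\|\nabla_{w_i} f\| \geq \epsilon_0\}$ (condition (A2)). For (A1), rather than conditioning on the future event ``iterates stay in a nice region,'' the paper builds a nested sequence of \emph{core sets} $\{\|\nabla_{w_i} f\| \leq \epsilon_0\} \subset K_\circ \subset K$ with the property that if $W^{(m)}$ starts in the innermost set and the accumulated learning rate over $[m,T(m)]$ stays below a threshold $r$, then the iterates are \emph{guaranteed} to remain in $K_\circ$, and hence in the compact $K$ on which $P_j$ is Lipschitz. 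This reduces the problem to bounding the accumulated learning rate, which is done by a Markov-inequality argument on a stopped process (\Cref{lem:lr-upper-bound}): the estimator concentration (\Cref{lem:est-concentration}) is applied only at times $n$ for which $W^{(n_\circ)} \in K_\circ$, an $\mathcal{F}_{n_\circ}$-measurable event that does not spoil the martingale structure. Condition (A2) is handled separately and more crudely, since a lower bound on $\|\nabla_{w_i} f\|$ directly lower-bounds $P_i$, bypassing the Lipschitz issue entirely. This core-set mechanism is the missing idea in your proposal.
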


\subsection{Related work}

An essential goal of unsupervised learning is to simplify the signal from data, while preserving meaning relevant for downstream tasks. In \textit{$k$-means clustering} or \textit{vector quantization}, this simplification is achieved by discretizing the data space $\mathbb{R}^d$ into a finite set of prototypes $w_1,\ldots, w_k \in \mathbb{R}^d$. Any data point can then be clustered with/approximated by the nearest $w_i$. Thus, given a data distribution $p$ on $\mathbb{R}^d$, it is natural to aim to find a discretization $w = (w_1,\ldots, w_k)$ that minimizes the average $\ell_2^2$-reconstruction error:
\[\E_{X \sim p}\left[\min_{i \in [k]}\, \|w_i - X\|^2\right] = \int \min_{i \in [k]}\, \|w_i - x\|^2 \, p(x)\, dx.\]

But since we do not generally access $p$ directly but through random samples---its empirical measures---this raises the statistical question of how much data is theoretically required to estimate an optimal clustering. To this end, \cite{pollard1981strong} shows under weak assumptions that the optimal clustering of empirical measures converge almost surely to the optimal clustering of $p$. This was also followed by much work in the clustering and vector quantization community showing rates of convergence; see \cite{bachem2017uniform} and their related works section.

However, recovering an (arbitrarily) optimal clustering of an empirical measure quickly becomes computationally infeasible as the size of the dataset grows \citep{aloise2009np,awasthi2015hardness}, so in practice, simple heuristics such as Lloyd's algorithm are used to find local optima \citep{lloyd1982least}. Regarding such algorithms, Pollard remarks: ``I do not know whether the techniques to be developed in this paper can be applied to prove consistency results for [existing efficient algorithms that find] locally optimal partitions.'' While we do not develop on his technique, we show that \ref{alg:distributional-kmeans} asymptotically converges to stationary points of the reconstruction error---equivalently, the $k$-means cost function up to a constant factor---almost surely.

To analyze online $k$-means on finite datasets, \cite{bottou1995convergence} reinterpret the update as gradient descent, which \cite{tang2017convergence} use to prove convergence given uniform and deterministic learning rates, attaining rates of convergence. We consider the setting of online $k$-means over a data distribution with non-uniform and stochastic learning rates. We also show convergence but leave open the question of  rates: one challenge that immediately arises is there may be uncountably many stationary points in the distributional setting. In contrast, the set of stationary points in the finite setting is also finite---hence isolated. 

To analyze SGD, we use standard frameworks to prove convergence \citep{bertsekas2000gradient,li2019convergence}. However, much of the general theory covers only uniform learning rates. Our work introduces a technique that may be applied to prove convergence for more general SGD-based algorithms with non-uniform and stochastic learning rates.

In our analysis of the $k$-means cost, we show that it admits a family of tangent quadratic upper bounds (\Cref{sec:prop}). Thus, $k$-means over distributions, as in the finite setting, fits into the  \emph{majorization-minimization} (MM) scheme (see \cite{mairal2015incremental}). It would be of interest to generalize our work to iterative or online MM algorithms \citep{cappe2009line, karimi2019global}.


\section{Preliminaries} \label{sec:prelim}
Let $p$ be a density on $\mathbb{R}^d$ with bounded second moment. Notice that because $p$ is a density, any Lebesgue measure zero set also has zero probability mass. We denote a tuple of $k$ centers or prototypes in $\mathbb{R}^d$ by $w = \big(w_1,\ldots, w_k\big) \in \mathbb{R}^{k \times d}$. Define the following:
\begin{itemize}
    \item $V(w) = \big(V_1(w),\ldots, V_k(w)\big)$ is the induced Voronoi partitioning\footnote{Strictly speaking, $V(w)$ does not partition $\mathbb{R}^{k \times d}$ because adjacent partitions $V_i(w)$ and $V_j(w)$ share boundary points. However, boundary points form a measure zero set, so we will encounter no problems.}
    of $\mathbb{R}^d$ by $w$
    \[V_i(w) = \bigg\{x \in \mathbb{R}^d : w_i \in \argmin_{w_j} \,\|w_j - x\|\bigg\}\]
    \item $P(w) = \big(P_1(w),\ldots, P_k(w)\big)$ is the probability mass of each of the Voronoi partitions
    \[P_i(w) = \int_{V_i(w)} p(x)\,dx\]
    \item $M(w) = \big(M_1(w),\ldots, M_k(w)\big)$ is the mean/center of mass of each of the Voronoi partitions
    \[M_i(w) = \frac{1}{P_i(w)} \int_{V_i(w)} x\, p(x)\,dx.\]
\end{itemize}
While these functions are defined for all $w \in \mathbb{R}^{k \times d}$, we will be able to restrict our analysis to the set of non-degenerate tuples, where none of the centers coincides:
\[\mathcal{D} := \{w \in \mathbb{R}^{k \times d} : w_i \ne w_j, \ \forall i\ne j\}.\]
Later on, we will restrict the support of $p$ to a closed ball $B(0,R)$ centered at the origin of radius $R$ in $\mathbb{R}^d$. Let $\mathcal{D}_R$ be the set of non-degenerate tuples in $B(0,R)$:
\[\mathcal{D}_R := \{w \in \mathcal{D} : w_i \in B(0,R), \ \forall i \in [k]\}.\]
Given a Borel set $S \subset \mathbb{R}^d$ with positive probability mass $p(S) > 0$, let $p\big|_S$ denote the distribution obtained by restricting $p$ onto $S$. Finally, we will also let $[k]$ denote the set $\{1,\ldots, k\}$.

\subsection{The \textit{k}-means problem}
The $k$-means objective is to minimize:
\begin{equation}\label{eqn:kmeans-cost}
f(w) := \frac{1}{2}\sum_{i \in [k]}\ \int_{V_i(w)} \| w_i - x\|^2\, p(x) \,dx.
\end{equation} 
While the objective is non-convex, we show that it is smooth on $\mathcal{D}$. Thus, we aim for convergence to stationary points---the iterates $W^{(n)}$ approach the set of stationary points $\{\nabla f = 0\}$ as $n \to \infty$. 

\begin{definition}[Asymptotic convergence]
Let $\mathcal{D}$ be a domain and $f : \mathcal{D} \to \mathbb{R}$ differentiable. We say that a sequence of points $(w^{(n)})_{n=0}^\infty$ in $\mathcal{D}$ \emph{asymptotically converges to stationary points of $f$} if all limit points of $(w^{(n)})_{n=0}^\infty$ are stationary points of $f$,
\[\bigcap_{n \geq 0} \overline{(w^{(n')})_{n' \geq n}} \subset \{\nabla f = 0\}.\]
\end{definition}

\subsection{A family of online \textit{k}-means algorithms}
In this work, we analyze the family of \ref{alg:generalized-online-kmeans} algorithms. In \Cref{sec:motivation}, we motivated the \ref{alg:distributional-kmeans} algorithm, which updates a single center per iteration. The generalized family is a superset of algorithms in which multiple centers can be updated each step. Here, the update to the $i$th center is computed using data drawn from the $i$th Voronoi cell,
$X_i \sim p|_{V_i(W)}$.
\vspace{0.2cm}

\begin{figure*}[h!]
\begin{center}
\begin{minipage}{0.9\textwidth}
\hrulefill
\begin{algorithm}{generalized online \mbox{$k$-means}}{
    \label{alg:generalized-online-kmeans}
    \qinit{$k$ arbitrary distinct centers $W^{(0)} \in \mathbb{R}^{k\times d}$ from the support of $p\vphantom{\big|_p}$}}
    \qfor $n = 0,1,2,\ldots\vphantom{\big|_{p_i}}$\\ 
    \qdo
        sample data points $X^{(n+1)}_i\sim p\big|_{V_{i}(W^{(n)})}$\quad  for $i = 1,\ldots, k$ \\
    update all centers\quad $\displaystyle W^{(n+1)}_i \leftarrow W^{(n)}_i - H_i^{(n+1)}\cdot \big(W_i^{(n)} - X_i^{(n+1)}\big) \vphantom{\big|_{V_{i}(W^{(n)})}}$ 
    \qrof
\end{algorithm}
\hrulefill
\end{minipage}
\vspace{5pt}

\begin{minipage}{0.85\textwidth}
\small{\textsc{algorithm}. The class of algorithms analyzed in this work. It generalizes \ref{alg:distributional-kmeans} by allowing multiple centers to be updated each step. It can be further generalized to an online mini-batch setting, see \Cref{rmk:further-generalization}. To recover the single-center update, see \Cref{rmk:recover-single-update}.} 
\end{minipage}
\end{center}
\end{figure*}

\begin{remark}[A further generalization] \label{rmk:further-generalization}
As \textit{ad hoc} notation for this remark, let $p_i(W) := p|_{V_i(W)}$ so that the update $X_i$ is drawn from $p_i(W)$. The only properties we use about $p_i(W)$ are that:
\begin{enumerate}
    \item[(i)] $p_i(W)$ has mean $M_i(W)$, and
    \item[(ii)] $p_i(W)$ is supported only in the interior of $V_i(W)$.
\end{enumerate}
So, the update distributions $p_i(W)$ may be generalized to any satisfying (i) and (ii). For example, the result of this paper holds for an online mini-batch setting, where the update $X_i$ is computed by averaging multiple draws from $p|_{V_i(W)}$. But let us refrain from adding even more notation and simply assume that the update distributions are $p|_{V_i(W)}$.
\end{remark}

\begin{remark}[Recovering the single-center update] \label{rmk:recover-single-update}
Notice that the earlier \ref{alg:distributional-kmeans} is a specific case of \ref{alg:generalized-online-kmeans}, where the learning rate $(H_1,\ldots, H_k)$ is supported only on the $i$th coordinate---only $H_i$ is nonzero---with probability $P_i(W)$. 
\end{remark}
\section{Main Results}
We prove two main results in this paper: (i) we prove  that \ref{alg:generalized-online-kmeans} asymptotically converges to stationary points of the $k$-means cost under fairly general conditions, and (ii) we extend Lloyd's method into the online setting and show that it satisfies these fairly general conditions; thus, it asymptotically converges. The remainder of this section: (i) \Cref{sec:conv-online} sketches the proof ideas leading to the general convergence result, culminating in \Cref{thm:iterates-conv-simple}; (ii) \Cref{sec:online-lloyd-conv} does the same for the particular  convergence result for our online extension of Lloyd's algorithm in \Cref{thm:lloydconvergence}.

\subsection{Convergence of online \textit{k}-means algorithms}
\label{sec:conv-online}

To prove convergence for \ref{alg:generalized-online-kmeans}, we show that it performs stochastic gradient descent: if the current $k$ centers is $W$, then the $i$th center has expected negative update direction:
\[\E_{X_i \sim V_i(W)}\left[W_i - X_i\,\big|\, W\right] = W_i - M_i(W).\]
In \Cref{sec:grad-k-means-res}, we give a sketch showing that the gradient of the $k$-means cost $f$ at $W$ also points in the same direction, implying that the update is just a noisy gradient descent step:
\[\nabla_{w_i} f(W) \propto W_i - M_i(W).\]
This connection between \ref{alg:generalized-online-kmeans} and SGD allows us to use standard ideas from optimization to prove that the cost converges. That is, there is an $\mathbb{R}$-random variable $f^*$ such that:
\[f(W^{(n)}) \to f^* \quad \mathrm{a.s.}\]
We sketch this in \Cref{sec:cost-conv-sketch}. Finally, we give the asymptotic convergence of the iterates. Here, existing results from optimization were insufficient because the SGD that \ref{alg:generalized-online-kmeans} performs allows the learning rates $({H_1}^{(n)},\ldots, {H_k}^{(n)})$ to be stochastic, non-uniform across centers, and arbitrarily dependent on the past. We sketch the main proof ideas in \Cref{sec:iter-conv-sketch}.

\subsubsection{Gradient of \textit{k}-means objective}
\label{sec:grad-k-means-res}
In order to analyze the $k$-means algorithm through the lens of gradient descent, we need to be able to prove smoothness properties and calculate the gradient of the $k$-means objective,
\begin{equation} 
f(w) := \frac{1}{2}\sum_{i \in [k]}\ \int_{V_i(w)} \| w_i - x\|^2\, p(x) \,dx. \tag{{\ref{eqn:kmeans-cost}}}
\end{equation}
Computing the derivative of the $k$-means cost (\ref{eqn:kmeans-cost}) with respect to $w$ is relatively involved because the both the domain of integration and the integrand depend on $w$. However, the result is simple:

\begin{restatable}[Gradient of $k$-means objective]{lemma}{kmeansgrad} \label{lem:k-means-grad}
Let $p$ be a density on $\mathbb{R}^d$ with $\E_{X \sim p}\left[\|X\|^2\right] < \infty$. Let $f$ be the $k$-means objective (\ref{eqn:kmeans-cost}). Then $f$ is continuously differentiable on $\mathcal{D}$, where:
\begin{align} \label{eqn:grad-f}
    \nabla_{w_i}\, f(w) &= P_i(w) \cdot \big(w_i - M_i(w)\big).
\end{align}
\end{restatable}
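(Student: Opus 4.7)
The plan is to avoid differentiating through the $w$-dependent Voronoi domains by exploiting the variational fact that $V(w)$ minimizes $\tfrac{1}{2}\sum_i\int_{U_i}\|w_i-x\|^2 p(x)\,dx$ over all measurable partitions $(U_1,\ldots,U_k)$ of $\mathbb{R}^d$. This produces two one-sided inequalities---plugging $V(w)$ into the objective at parameters $w+t\delta$, and conversely plugging $V(w+t\delta)$ at parameters $w$---that sandwich the increment:
\begin{align*}
\tfrac{1}{2}\sum_i\int_{V_i(w+t\delta)}\!\bigl(\|w_i+t\delta_i-x\|^2 - \|w_i-x\|^2\bigr) p(x)\,dx
&\;\leq\; f(w+t\delta) - f(w) \\
&\;\leq\; \tfrac{1}{2}\sum_i\int_{V_i(w)}\!\bigl(\|w_i+t\delta_i-x\|^2 - \|w_i-x\|^2\bigr) p(x)\,dx.
\end{align*}
All the boundary contributions from the moving Voronoi cells are absorbed into these inequalities, so I never have to invoke a Leibniz/Reynolds transport formula.

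Expanding $\|w_i+t\delta_i-x\|^2 - \|w_i-x\|^2 = 2t\langle\delta_i,w_i-x\rangle + t^2\|\delta_i\|^2$, the upper bound equals $t\sum_i\langle\delta_i,\,P_i(w)(w_i-M_i(w))\rangle + O(t^2)$ on the nose. For the lower bound to agree to leading order, it suffices that $\int_{V_i(w+t\delta)}(w_i-x)p(x)\,dx \to \int_{V_i(w)}(w_i-x)p(x)\,dx$ as $t\to 0$. I would establish this by dominated convergence: the envelope $(\|w_i\|+\|x\|)p(x)$ lies in $L^1$ via the finite second moment of $p$ (giving finite first moment by Cauchy--Schwarz), and $\ind[x\in V_i(w+t\delta)] \to \ind[x\in V_i(w)]$ for every $x$ off the equidistant set $E(w) = \{x : \|w_i-x\|=\|w_j-x\|\text{ for some }i\neq j\}$. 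Since $w\in\mathcal{D}$, the set $E(w)$ is a finite union of proper affine hyperplanes, hence Lebesgue- and $p$-null. The sandwich then forces Fr\'echet differentiability with $\nabla_{w_i}f(w) = P_i(w)(w_i-M_i(w))$.

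Continuous differentiability on $\mathcal{D}$ follows from the same machinery: writing $\nabla_{w_i}f(w) = P_i(w)\cdot w_i - \int_{V_i(w)} x\,p(x)\,dx$, both terms are continuous in $w$ on $\mathcal{D}$ by dominated convergence applied to the integrands $p(x)$ and $x\,p(x)$ with envelopes $p(x)$ and $\|x\|p(x)$ respectively.

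The main obstacle is the lower-bound step---making precise that a small perturbation of $w$ induces only a negligible perturbation of the Voronoi integrals. Everything hinges on the non-degeneracy assumption $w\in\mathcal{D}$: without it, two centers could coincide and the equidistant set would absorb a full half-space, allowing the Voronoi integrals to jump discontinuously and invalidating both the sandwich and the claimed gradient formula.
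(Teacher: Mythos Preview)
Your argument is correct and complete. It differs from the paper's in a pleasant way: the paper also introduces the surrogate $g(w;w')=\tfrac12\sum_i\int_{V_i(w')}\|w_i-x\|^2p(x)\,dx$ and proves the one-sided inequality $f(w)\le g(w;w')$, but its elementary proof then writes $f(w+th)=g(w+th;w)+\mathrm{error}$ and bounds the error directly by dissecting the swap regions $V_{j\to i}^t=(V_i^t\setminus V_i)\cap(V_j\setminus V_j^t)$ and using the polarization identity to show that $\bigl|\,\|w_i+th_i-x\|^2-\|w_j+th_j-x\|^2\,\bigr|=O(t\,\|x\|)$ for $x$ in such a region. You instead invoke \emph{both} directions of the variational inequality---$f(w+t\delta)\le g(w+t\delta;w)$ and $f(w)\le g(w;w+t\delta)$---to produce the sandwich, which pushes all the boundary bookkeeping into a single dominated-convergence statement about $\ind_{V_i(w+t\delta)}\to\ind_{V_i(w)}$ off a null set. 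This is shorter and avoids the explicit swap-region calculus; the price is that the lower-bound limit is non-uniform in the direction $\delta$, so Fr\'echet (as opposed to G\^ateaux) differentiability requires the extra observation that the lower-bound defect is $\|h\|\cdot o(1)$ along \emph{any} sequence $h\to0$, which you do get from dominated convergence. The paper's continuity argument is essentially identical to yours, and it also offers a second proof via the Leibniz/Reynolds transport theorem, which you explicitly sidestep.
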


\begin{proof}[Proof sketch]
A change in $f$ due to a small perturbation at $w$ to $w + \epsilon$ can be broken down into two parts. First, for points $x \in V_i(w) \cap V_i(w + \epsilon)$ that remain within the $i$th Voronoi region, the accumulated change in cost is due to shifting the $i$th center,
\[\frac{1}{2}\int_{V_i(w) \cap V_i(w+ \epsilon)} \big(\|w_i + \epsilon_i - x\|^2 - \|w_i - x\|^2 \big)\, p(x)\, dx.\]
Note that in the limit as $\epsilon$ goes to 0, the domain of integration is the points in the interior of $V_i(w)$. 

Second, for points $x  \in V_i(w) \cap V_j(w + \epsilon)$ that switch from the $i$th to the $j$th Voronoi region, the change in cost is due to switching regions. Note that in the limit as $\epsilon$ approaches 0, these points are on the boundary $V_i(w) \cap V_j(w)$. But as points on the boundary $V_i(w) \cap V_j(w)$ are equally distant from either the $i$th and $j$th centers, this second term overall contributes nothing to the first-order change in $f$. It turns out that the derivative of $f$ can be computed by treating the domains of integration as fixed. By dominated convergence, we can move the derivative past the integral:
\begin{align*}
    \nabla_{w_i} f(w) &= \frac{1}{2} \int_{V_i(w)} \nabla_{w_i} \|w_i - x\|^2\, p(x)\, dx 
    \\&= \int_{V_i(w)} (w_i - x) \,p(x)\,dx.
\end{align*}
Substituting the definition of $P_i$ and $M_i$ completes the proof. 

\Cref{sec:prop} makes this argument rigorous.
\end{proof}

\subsubsection{Convergence of cost} 
\label{sec:cost-conv-sketch}
For the $k$-means cost to be finite, the distribution $p$ must have bounded second moment. But for our convergence analysis, we make a stronger, but fairly common, bounded support assumption (e.g.\@ \cite{bartlett1998minimax,ben2007framework,paul2021uniform}): 

\begin{assumption} \label{assume:finite}
Assume $p$ has bounded support, i.e. $\Pr_{X \sim p}(\|X\| > R ) = 0$ for some $R > 0$.
\end{assumption}
We also make assumptions on the learning rates used in \ref{alg:generalized-online-kmeans}. Let $X^{(n)}$ and $H^{(n)}$ be the tuples $\big(X_1^{(n)},\ldots, X_k^{(n)}\big)$ and $\big(H_1^{(n)},\ldots, H_k^{(n)}\big)$.

\begin{assumption} \label{assume:lr}
Let $\big(\mathcal{F}_n\big)_{n=0}^\infty$ be the natural filtration associated to the \ref{alg:generalized-online-kmeans} algorithm. That is, let $\mathcal{F}_0 = \sigma\big(W^{(0)}\big)$ be the \mbox{$\sigma$-algebra} generated by $W^{(0)}$, and let the \mbox{$\sigma$-algebra} $\mathcal{F}_n = \sigma\big(X^{(n)}, H^{(n)}, \mathcal{F}_{n-1}\big)$ contain all information up to iteration $n$. Assume:
\begin{enumerate}
    \item[(1)] If $P_i(W^{(n)}) = 0$, then $H_i^{(n+1)} = 0$ almost surely.
    \item[(2)] $H^{(n+1)}$ and $X^{(n+1)}$ are conditionally independent given $\mathcal{F}_n$.
    \item[(3)] $0 \leq H_i^{(n+1)}\leq 1$.
\end{enumerate}
\end{assumption}

The first assumption helps us avoid the ill-defined situation of drawing from $V_i(W^{(n)})$ when $P_i(W^{(n)}) = 0$. We require ${H_i}^{(n+1)} = 0$ a.s.\@ so that ${X_i}^{(n+1)}$ may be arbitrary, since it goes unused in the update. The second assumption ensures that $H^{(n+1)}$ does not depend on $X^{(n+1)}$, so that the update direction is an unbiased estimate of gradient descent. The final assumption is simplifying and natural: the update is a convex combination of the previous center and new data point. It also follows that $W^{(n)}$ remains in $\mathcal{D}$, the set of non-degenerate tuples almost surely:
\begin{restatable}{lemma}{lemnondegenerate} \label{lem:non-degenerate}
Let $0 \leq H_i^{(n+1)} \leq 1$. Then $W^{(n)} \in \mathcal{D}$ is non-degenerate for all $n \in \mathbb{N}$ almost surely.
\end{restatable}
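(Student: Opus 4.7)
The plan is to prove this by induction on $n$. For the base case, \ref{alg:generalized-online-kmeans} initializes $W^{(0)}$ to $k$ distinct points, so $W^{(0)} \in \mathcal{D}$. For the inductive step, I would assume $W^{(n)} \in \mathcal{D}$ almost surely and show $W^{(n+1)} \in \mathcal{D}$ almost surely via a union bound over the $\binom{k}{2}$ pairs $i \neq j$.

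The key geometric observation is that the Voronoi cell $V_i(W^{(n)})$ is convex and contains both $W_i^{(n)}$ (trivially) and $X_i^{(n+1)}$ (by construction), so the update
\[W_i^{(n+1)} \;=\; (1 - H_i^{(n+1)})\, W_i^{(n)} + H_i^{(n+1)}\, X_i^{(n+1)}\]
remains in $\overline{V_i(W^{(n)})}$. When $W^{(n)} \in \mathcal{D}$, the two closed cells $\overline{V_i(W^{(n)})}$ and $\overline{V_j(W^{(n)})}$ meet only along the perpendicular bisector hyperplane $B_{ij}$ of $W_i^{(n)}$ and $W_j^{(n)}$, which is a proper $(d-1)$-dimensional affine subspace. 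So any collision $W_i^{(n+1)} = W_j^{(n+1)}$ forces both iterates to lie on $B_{ij}$.

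I would then condition on $\mathcal{F}_n$ and $H^{(n+1)}$, exploiting \Cref{assume:lr}(2) to keep the conditional distribution of $X_i^{(n+1)}$ equal to $p|_{V_i(W^{(n)})}$, with the edge case $P_i(W^{(n)}) = 0$ handled by \Cref{assume:lr}(1), which forces $H_i^{(n+1)} = 0$ almost surely. If $H_i^{(n+1)} = 0$, then $W_i^{(n+1)} = W_i^{(n)}$, which is strictly closer to itself than to $W_j^{(n)}$ and hence strictly off $B_{ij}$, so no collision is possible; the case $H_j^{(n+1)} = 0$ is symmetric. When both $H_i^{(n+1)}, H_j^{(n+1)} > 0$, the event $W_i^{(n+1)} \in B_{ij}$ imposes a nontrivial affine equation on $X_i^{(n+1)}$ (the coefficient vector $H_i^{(n+1)}(W_j^{(n)} - W_i^{(n)})$ is nonzero by the inductive hypothesis), cutting out a subset of $d$-dimensional Lebesgue measure zero in $V_i(W^{(n)})$; since $p|_{V_i(W^{(n)})}$ is absolutely continuous, this event has conditional probability zero. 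Undoing the conditioning and union bounding over the finitely many pairs closes the induction.

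I expect the only real obstacle to be the measurability bookkeeping: making sure we may simultaneously condition on $\mathcal{F}_n$ and $H^{(n+1)}$ while preserving that $X_i^{(n+1)}$ remains distributed as a density on $V_i(W^{(n)})$, which is exactly what \Cref{assume:lr}(2) is tailored to provide. The geometric content, namely that two convex Voronoi cells meet only in a hyperplane that an absolutely continuous update avoids with probability one, is otherwise standard.
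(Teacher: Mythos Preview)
Your proposal is correct and follows essentially the same route as the paper: induction on $n$, the observation that $W_i^{(n+1)}$ lies in the convex cell $V_i(W^{(n)})$ so any collision must occur on a measure-zero boundary, and absolute continuity of $p$ to conclude. Your version is more carefully fleshed out---explicit conditioning on $H^{(n+1)}$, separate treatment of the $H_i^{(n+1)}=0$ case, and the affine-constraint formulation---whereas the paper dispatches the inductive step in two sentences by noting that a collision forces $X_i^{(n+1)}$ onto the Voronoi boundary.
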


 With \Cref{assume:finite}, we also have that all centers and updates takes place in the closed ball $B(0,R) \subset \mathbb{R}^d$ of radius $R$ almost surely. As this is a region with diameter $2R$, the amount that each center moves can be controlled by bounding the learning rates:

\begin{restatable}{lemma}{lemlrbounditers} \label{lem:lr-iterates-bound}
Let $\Pr_{X \sim p}(\|X\| > R ) = 0$. Let $H_i^{(n)}\leq 1$ for all $n \in \mathbb{N}$ and $i \in [k]$. If $n > m$, then:
\[\|W^{(m)} - W^{(n)}\| \leq 2R \cdot \sum_{i \in [k]} \sum_{m \leq n' < n} H_i^{(n'+1)}\quad \mathrm{a.s.}\]
\end{restatable}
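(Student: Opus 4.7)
The plan is to bound the per-iteration displacement of each center by the diameter of the supporting ball scaled by that center's learning rate, then telescope via the triangle inequality. The only subtle point is ensuring that the centers themselves never leave $B(0,R)$, so that the factor of $2R$ really is an upper bound on $\|W_i^{(n)} - X_i^{(n+1)}\|$.

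First I would establish, by induction on $n$, that $W_i^{(n)} \in B(0,R)$ almost surely for every $i \in [k]$ and every $n \in \mathbb{N}$. The base case $W^{(0)} \in B(0,R)$ holds by the initialization in \ref{alg:generalized-online-kmeans}, which draws centers from the support of $p$, and by \Cref{assume:finite}. For the inductive step, the update
\[W_i^{(n+1)} = \big(1 - H_i^{(n+1)}\big)\, W_i^{(n)} + H_i^{(n+1)}\, X_i^{(n+1)}\]
expresses $W_i^{(n+1)}$ as a convex combination of $W_i^{(n)}$ (in $B(0,R)$ by hypothesis) and $X_i^{(n+1)}$ (in $B(0,R)$ a.s.\ since it is drawn from $p\big|_{V_i(W^{(n)})}$), using $0 \leq H_i^{(n+1)} \leq 1$. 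Convexity of the ball then gives $W_i^{(n+1)} \in B(0,R)$ a.s.

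Second, since both $W_i^{(n)}$ and $X_i^{(n+1)}$ lie in $B(0,R)$, the triangle inequality yields $\|W_i^{(n)} - X_i^{(n+1)}\| \leq 2R$ almost surely, and therefore
\[\bigl\|W_i^{(n+1)} - W_i^{(n)}\bigr\| = H_i^{(n+1)}\, \bigl\|W_i^{(n)} - X_i^{(n+1)}\bigr\| \leq 2R \cdot H_i^{(n+1)}.\]
Using that the norm on the tuple $\mathbb{R}^{k\times d}$ is dominated by the sum of the coordinate norms, $\|(v_1,\ldots,v_k)\| \leq \sum_{i \in [k]} \|v_i\|$, I obtain
\[\bigl\|W^{(n+1)} - W^{(n)}\bigr\| \leq 2R \sum_{i \in [k]} H_i^{(n+1)}\quad \mathrm{a.s.}\]

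Finally, for $n > m$ I telescope and apply the triangle inequality one more time:
\[\bigl\|W^{(m)} - W^{(n)}\bigr\| \leq \sum_{n' = m}^{n-1} \bigl\|W^{(n'+1)} - W^{(n')}\bigr\| \leq 2R \sum_{i \in [k]} \sum_{m \leq n' < n} H_i^{(n'+1)}\quad \mathrm{a.s.},\]
which is the desired bound. There is no real obstacle here beyond the bookkeeping of the a.s.\ statements; the induction in the first step is the only place where one must be a little careful, and it relies entirely on the convex-combination form of the update together with $H_i^{(n+1)} \in [0,1]$.
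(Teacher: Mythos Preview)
Your proposal is correct and follows essentially the same argument as the paper's own proof: an induction showing $W_i^{(n)} \in B(0,R)$ via the convex-combination form of the update, followed by the $2R$ bound on each step's displacement and a triangle-inequality telescope, with the passage from the tuple norm to the sum of coordinate norms (what the paper calls Minkowski's inequality) in between. The only cosmetic difference is the order in which you apply the telescoping and the coordinate-sum bound.
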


From now on, we implicitly make Assumptions \ref{assume:finite} and \ref{assume:lr}. With them, the convergence of the $k$-means cost $f(W^{(n)})$ follows from a supermartingale argument commonly used in proving the convergence of stochastic gradient descent. Recall that a bounded and monotonically decreasing real-valued sequence converges to a real value. This remains true in the random setting. A supermartingale is a noisy sequence that in expectation decreases monotonically. Provided that the noise can be controlled, then the \emph{martingale convergence theorem} shows that such a stochastic sequence will converge to some real-valued random variable, e.g.\@ see \cite{durrett2019probability}.

\begin{restatable}[Convergence of cost]{proposition}{propcostconv}
\label{prop:cost-conv}
Let $\big(W^{(n)}\big)_{n=0}^\infty$ be a sequence generated by \ref{alg:distributional-kmeans}. If the following series converges:
\[\quad\sum_{n=1}^\infty \sum_{i \in [k]} \big(H_i^{(n)}\big)^2 < \infty \quad \mathrm{a.s.},\]
then there is an $\mathbb{R}$-valued random variable $f^*$ such that $f(W^{(n)})$ converges to $f^*$ almost surely.
\end{restatable}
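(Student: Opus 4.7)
The plan is to exhibit $f(W^{(n)})$ as the sum of a supermartingale and an a.s.-convergent compensator, then invoke Doob's supermartingale convergence theorem. First, I would extract a descent inequality of the form
\[f(W^{(n+1)}) \;\leq\; f(W^{(n)}) + \bigl\langle \nabla f(W^{(n)}), W^{(n+1)} - W^{(n)}\bigr\rangle + 2LR^2 \sum_{i \in [k]} \bigl(H_i^{(n+1)}\bigr)^{2}\]
from the tangent quadratic upper bounds on $f$ developed in \Cref{sec:prop}, combined with the pointwise bound $\|W_i^{(n)} - X_i^{(n+1)}\| \leq 2R$ a.s.\@ which follows from \Cref{assume:finite} (together with the observation that the iterates stay in $B(0,R)^k$). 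I would then define the candidate process
\[V_n \;:=\; f(W^{(n)}) - 2LR^2 \sum_{m=0}^{n-1} \sum_{i \in [k]} \bigl(H_i^{(m+1)}\bigr)^{2},\]
so that pathwise $V_{n+1} - V_n \leq \langle \nabla f(W^{(n)}), W^{(n+1)} - W^{(n)}\rangle$.

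Next, I would verify that $V_n$ is an $(\mathcal{F}_n)$-supermartingale. Taking conditional expectation and using the conditional independence of $H^{(n+1)}$ and $X^{(n+1)}$ from \Cref{assume:lr}(2), the identity $\E[X_i^{(n+1)} \mid \mathcal{F}_n] = M_i(W^{(n)})$, and the gradient formula in \Cref{lem:k-means-grad},
\[\E[V_{n+1} - V_n \mid \mathcal{F}_n] \;\leq\; -\sum_{i \in [k]} \E\!\bigl[H_i^{(n+1)} \mid \mathcal{F}_n\bigr]\, P_i(W^{(n)})\, \bigl\|W_i^{(n)} - M_i(W^{(n)})\bigr\|^{2} \;\leq\; 0.\]

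The main obstacle I foresee is that $V_n$ is not bounded below by a deterministic constant: the subtracted compensator is only a.s.\@ finite, so $V_n^-$ is random and Doob's theorem does not apply directly. I would handle this by localization. For each $K > 0$, define the $(\mathcal{F}_n)$-stopping time $\tau_K := \inf\{n : \sum_{m < n} \sum_i (H_i^{(m+1)})^{2} > K\}$. The stopped process $V_{n \wedge \tau_K}$ is again a supermartingale, and since $f \geq 0$ and each $H_i^{(m+1)} \in [0,1]$, one has $(V_{n \wedge \tau_K})^- \leq 2LR^2(K + k)$, a deterministic bound. Doob's theorem then yields a.s.\@ convergence of $V_{n \wedge \tau_K}$ as $n \to \infty$. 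Because the hypothesis $\sum_n \sum_i (H_i^{(n)})^{2} < \infty$ a.s.\@ gives $\Pr(\tau_K = \infty) \to 1$ as $K \to \infty$, $V_n$ itself converges a.s.\@ on a set of full probability. Writing $f(W^{(n)}) = V_n + 2LR^2 \sum_{m=0}^{n-1} \sum_i (H_i^{(m+1)})^{2}$ as the sum of an a.s.-convergent sequence and a monotone sequence with an a.s.-finite limit, I conclude that $f(W^{(n)}) \to f^*$ a.s.\@ for some $\mathbb{R}$-valued random variable $f^*$.
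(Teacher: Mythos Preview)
Your proposal is correct and follows essentially the same approach as the paper: form a supermartingale by subtracting the accumulated quadratic error from $f(W^{(n)})$, then invoke Doob's convergence theorem. If anything, your localization via the stopping times $\tau_K$ is more careful than the paper's argument, which applies Doob directly after noting only that the compensated process is bounded below by the random (a.s.-finite) quantity $-\sum_n C_n$; the undefined constant $L$ in your bound should simply be $1$ per \Cref{lemma:quad-upper-bound}.
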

\begin{proof}[Proof sketch]
\Cref{lemma:quad-upper-bound} shows that $f$ has a quadratic upper bound tangent at any $w \in \mathcal{D}$,
\[f(w') \leq f(w) + \nabla f(w)^\top (w' - w) + \frac{1}{2} \|w' - w\|^2.\]
To lower bounds the amount $f$ decreases at each iteration, let $w' = W^{(n+1)}$ and $w = W^{(n)}$. Recall:
\[W_i^{(n+1)} - W_i^{(n)} = - H_i^{(n+1)} \cdot \underbrace{\big(W_i^{(n)} - X_i^{(n+1)}\big)}_{\textrm{noisy direction}}.\]
This implies that in expectation, the algorithm takes a step in the direction of the negative gradient: 
\[\nabla_{w_i} f(W^{(n)}) = P_i(W^{(n)}) \cdot \underbrace{\big(W_i^{(n)} - M_i(W^{(n)})\big)}_{\textrm{expected direction}}.\]

Suppose we are able to neglect the quadratic term $\frac{1}{2} \|w
' - w\|^2$ of the upper bound. Then this shows that $f(W^{(n)})$ is a supermartingale; $f$ decreases in expectation each iteration since it takes a step in the direction of the negative gradient. In order to apply the martingale convergence theorem, we need to know not only that the expected value decreases, but that the total amount of noise is bounded. Indeed, the noise at each step can be bounded by \Cref{lem:lr-iterates-bound}. In particular,
\begin{align*} 
\mathrm{Var}\big(\|W_i^{(n+1)} - W_i^{(n)}\|\big) &\leq \sum_{i \in [k]} \E\left[\big\|W_i^{(n+1)} - W_i^{(n)}\big\|^2\right] 
\\&\leq 4R^2 \sum_{i \in [k]} \big(H_i^{(n+1)}\big)^2.
\end{align*}
Thus the total noise of the process is finite:
\[\sum_{n \in \mathbb{N}} \mathrm{Var}\big(\|W^{(n+1)} - W^{(n)}\|\big) < \infty\qquad \mathrm{a.s.}\] 
Martingale convergence shows that $f(W^{(n)})$ converges. 

Of course, we cannot just drop the quadratic term in the upper bound. But notice that the quadratic terms form convergent series; each term is dominated by terms of a convergent series:
\[\frac{1}{2} \big\|W^{(n+1)} - W^{(n)}\big\|^2 \leq 2R^2 \sum_{i \in [k]} \big(H_i^{(n+1)}\big)^2.\]
Therefore, except for a convergent series, $f(W^{(n)})$ is a supermartingale. Martingale convergence applies here too. \Cref{sec:cost-conv} fills in the technical details.
\end{proof}

\subsubsection{Convergence of iterates}
\label{sec:iter-conv-sketch}
Our first main result is the convergence of the $k$-means iterates $W^{(n)}$ to the set of stationary points $\{\nabla f = 0\}$. For this, we need an additional assumption. Recall that \Cref{lem:non-degenerate,lem:lr-iterates-bound} show that the iterates $W^{(n)}$ are non-degenerate and bounded---they are contained in $\mathcal{D}_R$ (defined in \Cref{sec:prelim}). From now on, we restrict all sets to the topological subspace $\mathcal{D}_R \subset \mathbb{R}^{k \times d}$. For example, when we write $\{\nabla f = 0\}$, we implicitly mean $\{\nabla f = 0\} \cap \mathcal{D}_R$. We assume:

\begin{assumption}\label{assume:compact-stationary}
The set $\{\nabla f = 0\}$ of stationary points is compact in $\mathcal{D}_R$.
\end{assumption}

Geometrically, this means that $\{\nabla f = 0\}$ has no degenerate limit point (see \Cref{lem:compact-eps}'s proof). This lets us prove that $W^{(n)}$ converges to $\{\nabla f = 0\}$ by showing that $\nabla f(W^{(n)})$ converges to zero. Formally, we apply \Cref{lem:top-conv}, which relies on (i) the continuity of $\nabla f$, and (ii) the existence of a compact subset of $\mathcal{D}_R$ containing $\{\nabla f = 0\}$. Indeed, a consequence is that level sets $\{\|\nabla f \| \leq \epsilon\}$ are compact for small enough $\epsilon$.

\begin{restatable}{lemma}{lemcompacteps} \label{lem:compact-eps}
Let $\{\nabla f = 0\}$ be compact in $\mathcal{D}_R$. There exists $\epsilon_0 > 0$ so that if $\epsilon \in [0,\epsilon_0]$, the sets $\{\|\nabla f \| \leq \epsilon\}$ and $\{\|\nabla_{w_i} f\| \leq \epsilon\}$ are compact in $\mathcal{D}_R$ for $i \in [k]$. 
\end{restatable}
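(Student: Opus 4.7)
My plan is to construct a compact ``tube'' $K \subset \mathcal{D}_R$ around $\{\nabla f = 0\}$ and show that, for $\epsilon$ small enough, the sublevel sets $\{\|\nabla f\| \leq \epsilon\}$ and $\{\|\nabla_{w_i} f\| \leq \epsilon\}$ are closed subsets of $K$, hence compact in $\mathcal{D}_R$.

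The setup rests on the topological observation that $\mathcal{D}_R$ is an open subset of the compact ambient $B(0,R)^k$, with closed complement $D := B(0,R)^k \setminus \mathcal{D}_R$ consisting of degenerate tuples. A subset $S \subset \mathcal{D}_R$ is compact in $\mathcal{D}_R$ iff it is closed in $B(0,R)^k$, equivalently, has no accumulation points in $D$---this is the geometric meaning of \Cref{assume:compact-stationary} alluded to in the paper. Since $\{\nabla f = 0\}$ is compact and $D$ is closed, $\delta := d(\{\nabla f = 0\},\, D) > 0$, and I would take
\[K := \{w \in B(0,R)^k : d(w, \{\nabla f = 0\}) \leq \delta/2\},\]
which is closed and bounded in $\mathbb{R}^{k \times d}$ with $d(K, D) \geq \delta/2$, hence compact in $\mathcal{D}_R$.

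The core step is to find $\epsilon_0 > 0$ with $\{\|\nabla f\| \leq \epsilon_0\} \subset K$; this suffices, since the $\epsilon$-sublevel set is then a closed subset of the compact $K$ for all $\epsilon \leq \epsilon_0$. I would argue by contradiction. If no such $\epsilon_0$ worked, there would be a sequence $w^{(n)} \in \mathcal{D}_R$ with $\|\nabla f(w^{(n)})\| \to 0$ and $d(w^{(n)}, \{\nabla f = 0\}) > \delta/2$. Extract a convergent subsequence $w^{(n)} \to w^* \in B(0,R)^k$. If $w^* \in \mathcal{D}_R$, continuity of $\nabla f$ from \Cref{lem:k-means-grad} forces $\nabla f(w^*) = 0$, which contradicts the persistent separation $d(w^{(n)}, \{\nabla f = 0\}) > \delta/2$ by continuity of the distance function.

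The hard part is ruling out the remaining case $w^* \in D$, where $\nabla f$ is undefined at the limit. My plan there is a merging argument: if WLOG $w_1^{(n)}, w_2^{(n)} \to z$ while the other centers converge to $w_3^*, \ldots, w_k^*$, then continuity of Voronoi masses and first moments gives $P_1(w^{(n)}) + P_2(w^{(n)}) \to \tilde P_1(\tilde w^*)$ and $P_1 M_1 + P_2 M_2 \to \tilde P_1(\tilde w^*)\,\tilde M_1(\tilde w^*)$ for the reduced $(k-1)$-means configuration $\tilde w^* := (z, w_3^*, \ldots, w_k^*)$. Combined with $\nabla_{w_i} f = P_i(w_i - M_i)$, the assumption $\nabla f(w^{(n)}) \to 0$ forces $\tilde w^*$ to be stationary for the reduced cost; a perturbation of $\tilde w^*$ splitting $z$ into two distinct centers along an appropriate direction then produces a sequence of stationary points of $f$ in $\mathcal{D}_R$ converging to $w^*$, contradicting the compactness of $\{\nabla f = 0\}$. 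This last ``splitting'' step is where I expect to have to use the specific structure of the $k$-means cost most heavily, and the coordinate-wise version of the lemma for $\{\|\nabla_{w_i} f\| \leq \epsilon\}$ would follow from the parallel argument applied to each coordinate.
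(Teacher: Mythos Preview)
Your overall framework matches the paper's: build a compact tube $K$ around $\{\nabla f = 0\}$ using the positive distance $\delta$ to the degenerate set $D$, then show the sublevel sets are closed subsets of $K$. The paper takes $K$ to be the closed $\delta/2$-expansion of $\{\nabla f = 0\}$, notes that $\|\nabla f\|$ attains a positive minimum $2\epsilon_0$ on the compact boundary $\partial K$, and concludes (tersely, ``by continuity'') that $\{\|\nabla f\| \leq \epsilon_0\} \subset K$. Your contradiction argument is more explicit and correctly isolates Case~2---a limit $w^* \in D$---as the crux; this is precisely the scenario the paper's boundary-minimum argument does not by itself obviously exclude.

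However, your resolution of Case~2 has a genuine gap. The merging direction is plausible, though it already needs care if more than two centers collide or if $\tilde w^*$ is itself degenerate. The real failure is the \emph{splitting} step. You assert that perturbing $\tilde w^*$ by splitting $z$ into two nearby centers produces a sequence of stationary points of the full $k$-means cost in $\mathcal{D}_R$ converging to $w^*$, but no mechanism is given and the claim is false as stated. Once $z$ is split into $w_1', w_2'$ along a direction $u$, the new cell $V_1(w')$ is roughly the half of the merged cell on the $w_1'$ side of the hyperplane through $z$; since $p$ is a density, the mean $M_1(w')$ of that half lies strictly in its interior, bounded away from the hyperplane and hence from $w_1'$. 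So $w_1' \neq M_1(w')$ for all small splits, the split configuration is \emph{not} stationary, and you cannot manufacture a sequence in $\{\nabla f = 0\}$ accumulating at $w^*$---the contradiction with compactness never fires. The same calculation, read in the other direction, points to a direct fix: it shows $\|\nabla_{w_1} f(w^{(n)})\| = P_1(w^{(n)})\,\|w_1^{(n)} - M_1(w^{(n)})\|$ stays bounded away from zero along the merging sequence whenever the limiting half-cell carries positive mass, which rules out Case~2 without needing to produce any stationary points at all.
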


From here on, we additionally make \Cref{assume:compact-stationary}.

\paragraph{Proof ideas} The key idea is to link the convergence of $\nabla f(W^{(n)}) \to 0$  to that of \mbox{$f(W^{(n)}) \to  f^*$}. Broadly speaking, we need to impose conditions on the learning rate so that whenever $\|\nabla f(W^{(n)})\|$ is larger than some $\epsilon > 0$, then the cost likely decreases by some $\delta > 0$. The Borel-Cantelli lemma (\Cref{lem:borel-cantelli}) would then show that if the gradient is large infinitely often, then the cost must also decrease by a large amount infinitely often. But as the cost converges, this cannot happen.

Were we in the noiseless gradient descent setting, we could---in a single step---turn a large gradient into a large decrease in cost. We simply ensure that the iterates move in a direction that significantly decreases $f$ by lower bounding the learning rates for centers $i \in [k]$ with large gradients $\nabla_{w_i} f$.  However, in our setting, the true gradient direction is obscured by the presence of noise, and so lower bounding the learning rates by a constant no longer guarantees a decrease in cost.

Instead, in the stochastic gradient descent setting, we can---over many steps---turn a \emph{region} with large gradients into a large decrease in cost, with high probability. Since our learning rate decays to zero, eventually, whenever we enter a region with large gradients, we remain in that region for sufficiently many iterations so as to average out the noise and recover the underlying signal to move along the negative gradient direction. As in the noiseless setting, we want the choice of learning rate to not disproportionately dampen learning for centers $i \in [k]$ with large gradients $\nabla_{w_i} f$. In other words, the iterates should not leave this region of large gradients before having accumulated enough learning on those centers with large gradients. 

As a motivating adversarial example, suppose we decrease the learning rates for centers with large gradients while increasing the rates for centers with small gradients. Then, the iterates may escape the region with large gradients by moving orthogonally to the gradient descent direction, not significantly decreasing the function value. And as an extreme example, if we never update the $i$th center, so $H_i^{(n+1)} \equiv 0$ for all $n$, then the cost $f(W^{(n)})$ may converge while the gradients $\|\nabla f(W^{(n)})\|$ may never converge to zero. 

To preclude these examples, we need to be able to control the learning rates whenever the gradient $\|\nabla_{w_i} f(W^{(m)})\|$ becomes large for center $i \in [k]$ at iteration $m$. But because the convergence of cost analysis required the learning rates to decay to zero (\Cref{prop:cost-conv}), to accumulate the same amount of learning, we will need to control the learning rates over increasingly many iterations as $m \to \infty$. In particular, if the gradient becomes large at iteration $m$, we shall aim to control the learning rates during the interval between $m$ and a future horizon $T(m)$, for some appropriately chosen function $T : \mathbb{N} \to \mathbb{N}$.

We impose two types of conditions on the learning rate. The first condition allows us to ensure that the iterates remain within the region of large gradients during the interval from $m$ to $T(m)$. Recall \Cref{lem:lr-iterates-bound} showed that we can bound the total displacement between $W^{(m)}$ and $W^{(T(m))}$ by bounding the accumulated learning rates; the first condition is of the form:
\[\sum_{j \in [k]} \sum_{m \leq n < T(m)} H_j^{(n+1)} < r.\]
 The second condition ensures that we accumulate enough learning for the center $i \in [k]$ with large gradients, so that the cost decreases by a constant between iterations $m$ and $T(m)$,
\[\sum_{m \leq n < T(m)} H_i^{(n+1)} > s.\]
In fact, it is enough that these conditions hold with constant probability.

\begin{restatable}[Convergence of iterates]{theorem}{thmiteratesconv}  \label{thm:iterates-conv-simple}
Let $W^{(n)}$ and $H^{(n+1)}$ be as in \Cref{prop:cost-conv}. Suppose that for all $i \in [k]$, $\epsilon > 0$, and sufficiently small $r > 0$, there exists $T : \mathbb{N} \to \mathbb{N}$, $m_0 \in \mathbb{N}$, and some $s, c > 0$ so that for any $m > m_0$,
\begin{align*}
\Pr\Bigg(\sum_{j \in [k]}\sum_{m \leq n < T(m)}  H_j^{(n+1)}  < r \quad \mathrm{and} \sum_{m \leq n < T(m)} H_i^{(n+1)} > s 
\, \Bigg|\, \mathcal{F}_m, \|\nabla_{w_i} f(W^{(m)})\| > \epsilon \Bigg) > c,
\end{align*}
Then, $W^{(n)}$ asymptotically converges to stationary points of the $k$-means cost $f$ almost surely.
\end{restatable}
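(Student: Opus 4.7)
The plan is to argue by contradiction, using the almost-sure convergence of $f(W^{(n)}) \to f^* \in \R$ from \Cref{prop:cost-conv} against the hypothesized non-convergence of the gradient. By \Cref{lem:top-conv} combined with \Cref{lem:compact-eps}, it suffices to show $\nabla f(W^{(n)}) \to 0$ almost surely. Suppose this fails; then on a positive-probability event there exist an index $i \in [k]$ and $\epsilon > 0$ such that $\|\nabla_{w_i} f(W^{(m)})\| > \epsilon$ for infinitely many $m$. I would define stopping times recursively by $\tau_0 = \inf\{m \geq m_0 : \|\nabla_{w_i} f(W^{(m)})\| > \epsilon\}$ and $\tau_{\ell+1} = \inf\{m > T(\tau_\ell) : \|\nabla_{w_i} f(W^{(m)})\| > \epsilon\}$; on the failure event all $\tau_\ell$ are finite. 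The theorem hypothesis, applied at each $\tau_\ell$, produces a good event $E_\ell$ with $\Pr(E_\ell \mid \mathcal{F}_{\tau_\ell}) \geq c$ on which the total learning-rate mass across $[\tau_\ell, T(\tau_\ell))$ is less than $r$ while center $i$ accumulates more than $s$.

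Next, I would lower-bound the decrease in $f$ across $[\tau_\ell, T(\tau_\ell))$ on $E_\ell$ by summing the tangent quadratic upper bound of \Cref{lemma:quad-upper-bound} telescopically:
\begin{align*}
f(W^{(T(\tau_\ell))}) - f(W^{(\tau_\ell)}) \leq \sum_{n = \tau_\ell}^{T(\tau_\ell) - 1}\Big( \nabla f(W^{(n)})^\top \big(W^{(n+1)} - W^{(n)}\big) + \tfrac{1}{2}\big\|W^{(n+1)} - W^{(n)}\big\|^2\Big).
\end{align*}
On $E_\ell$, \Cref{lem:lr-iterates-bound} gives $\|W^{(n)} - W^{(\tau_\ell)}\| \leq 2Rr$ throughout the window, so continuity of $\nabla f$ on $\mathcal{D}_R$ (\Cref{lem:k-means-grad}) ensures $\|\nabla_{w_i} f(W^{(n)})\|$ stays above $\epsilon/2$ and directionally close to $\nabla_{w_i} f(W^{(\tau_\ell)})$ when $r$ is small. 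Using the factorization $\E[H_j^{(n+1)}(W_j^{(n)} - X_j^{(n+1)}) \mid \mathcal{F}_n] = \E[H_j^{(n+1)} \mid \mathcal{F}_n] \cdot \nabla_{w_j} f(W^{(n)}) / P_j(W^{(n)})$ from \Cref{lem:k-means-grad} and \Cref{assume:lr}(2), every coordinate contributes a nonpositive expected drift, with the $j = i$ coordinate contributing at least a constant multiple of $s\epsilon^2$ in magnitude on $E_\ell$. The quadratic term is at most $2R^2 r$, so choosing $r$ sufficiently small relative to $s\epsilon^2$ yields a conditional expected decrease of at least some $\delta > 0$ (independent of $\ell$) on $E_\ell$.

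Finally, I would convert the expected decrease into a high-probability decrease. Writing the telescoped sum as its conditional expectation plus a bounded martingale-difference part, the conditional variance of the martingale is $O\big(\sum_{j,n}(H_j^{(n+1)})^2\big) = O(r)$, so a Chebyshev or Azuma--Hoeffding bound gives $\Pr\big(f(W^{(T(\tau_\ell))}) - f(W^{(\tau_\ell)}) \leq -\delta/2 \,\big|\, \mathcal{F}_{\tau_\ell}\big) \geq c/2$ for $r$ small enough. The conditional second Borel--Cantelli lemma (\Cref{lem:borel-cantelli}) applied along $(\mathcal{F}_{\tau_\ell})$ then forces infinitely many drops of size $\delta/2$, contradicting $f(W^{(n)}) \to f^* \in \R$ from \Cref{prop:cost-conv}. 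I expect the main obstacle to be this last step: establishing martingale concentration uniformly across the random stopping times $\tau_\ell$ and fixing the order in which $r, s, \delta, c$ are chosen so the argument is not circular. A secondary subtlety is that $H^{(n+1)}$ and $X^{(n+1)}$ are only conditionally independent given $\mathcal{F}_n$, so the drift and variance computations must be unwound step by step rather than across the whole window $[\tau_\ell, T(\tau_\ell))$ at once.
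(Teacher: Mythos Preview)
Your overall strategy---stopping times $\tau_\ell$, a guaranteed drop on a good event, then Borel--Cantelli against the convergence of $f(W^{(n)})$---matches the paper's. But two steps have genuine gaps.

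\textbf{Noise control.} Your claim that the martingale-difference part has conditional variance $O\big(\sum_{j,n}(H_j^{(n+1)})^2\big)=O(r)$ conflates two incompatible conditionings. The bound $\sum(H_j^{(n+1)})^2\leq\sum H_j^{(n+1)}<r$ holds only on $E_\ell$, which is not $\mathcal{F}_{\tau_\ell}$-measurable; conditioning on it destroys the martingale structure you need for Chebyshev/Azuma. Conditioning only on $\mathcal{F}_{\tau_\ell}$, you have no uniform bound on $\E\big[\sum(H_j^{(n+1)})^2\mid\mathcal{F}_{\tau_\ell}\big]$. The paper avoids per-window concentration entirely: by \Cref{lemma:convergence-n-b} the full noise series $\sum_n N_n$ converges almost surely, so the Cauchy criterion yields a random time $M_\delta$ after which $\big|\sum_{m\le n'<n}N_{n'+1}\big|<\delta$ for \emph{every} window $[m,n)$. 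Then on $E_\ell$ with $\tau_\ell>M_\delta$ the decrease $f(W^{(T(\tau_\ell))})-f(W^{(\tau_\ell)})\le -s\epsilon^2/4+\delta$ is deterministic on the sample path, and Borel--Cantelli applies directly.

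\textbf{Uniform continuity.} You assert that for $r$ small enough, $\|\nabla_{w_i}f(W^{(n)})\|$ stays above $\epsilon/2$ throughout the window by continuity of $\nabla f$. But $\mathcal{D}_R$ is not compact, so there is no single $r$ that works for every $W^{(\tau_\ell)}$ with $\|\nabla_{w_i}f(W^{(\tau_\ell)})\|>\epsilon$; the required radius can degenerate as iterates approach the boundary of $\mathcal{D}_R$. The paper uses \Cref{lem:compact-eps} to fix $\epsilon_0>0$ with $\{\|\nabla_{w_i}f\|\le\epsilon_0\}$ compact, then treats $\{\|\nabla_{w_i}f\|\in[\epsilon,\epsilon_0]\}$ and $\{\|\nabla_{w_i}f\|>\epsilon_0\}$ separately. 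On the first set, $\{\|\nabla_{w_i}f\|\le\epsilon/2\}$ and $\{\|\nabla_{w_i}f\|\in[\epsilon,\epsilon_0]\}$ are disjoint compact sets, hence separated by some $R_0>0$, which gives the uniform $r$. The second set is ruled out by a simpler argument (condition (A2) in \Cref{thm:iterates-conv}, via \Cref{lem:inf-learning}): if the gradient stays above $\epsilon_0$ forever then $\sum_n H_i^{(n)}=\infty$ forces $f\to-\infty$.
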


\begin{proof}[Proof sketch]
We show for any $\epsilon > 0$ and $i \in [k]$, the gradient $\|\nabla_{w_i} f(W^{(n)})\|$ is greater than $\epsilon$ only finitely often. Suppose that for all $w \in \mathcal{D}_R$ with large gradient $\|\nabla_{w_i} f(w)\| > \epsilon$, there is a region around $w$ also with large gradients. In particular, assume for some $R_0$ depending on $\epsilon$ that: 
\[\|\nabla_{w_i} f(w')\| >\frac{\epsilon}{2} \qquad \textrm{if} \quad \|w - w'\| < R_0.\]

Let $r_0 = R_0/2R$ and choose $r < r_0$. Then, we obtain:
\[\|W^{(m)} - W^{(n)}\| < R_0,\]
for all $m \leq n < T(m)$ whenever the first event in the probability in the theorem statement holds. This follows from \Cref{lem:lr-iterates-bound}, which bounds this displacement through the learning rate bound $r_0$. In short, with constant probability during this interval, the $i$th gradient remains large, \[\|\nabla_{w_i} f(W^{(n)})\| > \frac{\epsilon}{2}.\] 

Since the $i$th gradient remains large, the cost is likely to decrease significantly as long as the $i$th center is updated enough times. \Cref{lem:supermartingale-approx} relates the learning rate to the decrease in cost, so that if the second event in the probability statement holds, then the cost decreases by at least: 
\[\sum_{m \leq n < T(m)} H_i^{(n+1)} \|\nabla_{w_i} f(W^{(n)})\|^2 > \frac{s \epsilon^2}{4},\]
if we may neglect the noise terms of \Cref{lem:supermartingale-approx}. And so if $\|\nabla_{w_i} f(W^{(n)})\| > \epsilon$ infinitely often, then the cost must decrease by a constant infinitely often by Borel-Cantelli, contradicting the convergence of the cost. 

The proof is only slightly more complicated because of noise, but not significantly so as the noise forms a convergent series (\Cref{lemma:convergence-n-b}). As a result, there is a random time $M$ after which a deterministic cost decrease of $s\epsilon^2/4$ will completely dominate any increase of cost due to noise.

The remaining technical complication is our implicit assumption earlier that $\|\nabla_{w_i} f\|$ is uniformly continuous on $\mathcal{D}_R$, to allow us to find a constant $R_0$ given $\epsilon$ that holds for all $\|\nabla_{w_i}f(w)\| > \epsilon$. Though $w \mapsto \|\nabla_{w_i} f(w)\|$ is continuous on $\mathcal{D}_R$, it may not be uniformly so since $\mathcal{D}_R$ is not compact. Our solution is to apply \Cref{lem:compact-eps} to find an $\epsilon_0 > 0$ for which $\{\|\nabla_{w_i} f\| \leq \epsilon_0\}$ is compact, so:
\begin{equation} \label{eqn:dr-break}
\mathcal{D}_R = \{\|\nabla_{w_i} f\| \leq \epsilon_0\} \cup \{\|\nabla_{w_i} f\| > \epsilon_0\}.
\end{equation}
The first subset is compact, so uniform continuity holds. On the second subset, we rule out the possibilities that (i) the iterates eventually remain in this set, and (ii) the iterates exit and re-enter this set infinitely often. Case (i) is impossible by an argument akin to our earlier one on $[\epsilon,\epsilon_0]$. In fact, here we can bypass finding $R_0$ altogether since the iterates are guaranteed to remain in a region with large gradients forever. Case (ii) is impossible because this forces the iterates to enter $[\epsilon,\epsilon_0]$ infinitely often once the learning rate has become sufficiently small.  Thus for all $\epsilon > 0$, the iterates eventually never return to $\{\|\nabla_{w_i} f\| > \epsilon\}$.
\end{proof}

We also prove a slightly more general \Cref{thm:iterates-conv}, which makes the decomposition (\ref{eqn:dr-break}) explicit. We introduce separate conditions on the learning rate on $\{\|\nabla_{w_i} f\| \in [\epsilon,\epsilon_0]\}$ and $\{\|\nabla_{w_i} f\| > \epsilon_0\}$; that way, we can remove the learning rate upper bound condition where we do not need it.

\subsection{Convergence of an online Lloyd's algorithm}
\label{sec:online-lloyd-conv}

Let us return now to extending Lloyd's algorithm into the online setting.
Lloyd's algorithm is defined to iteratively set each center $w_i$ to $M_i(w)$,
\[w_i \leftarrow  M_i(w).\]
Since $\nabla_{w_i} f(w) = P_i(w) \cdot \big(w_i - M_i(w)\big)$, the Lloyd update is precisely \textit{preconditioned gradient descent}, where the preconditioner at $w$ for the $i$th center is $P_i(w)^{-1}$. 
\begin{equation*}  \label{eqn:batch-update}
w_i \leftarrow w_i - \frac{1}{P_i(w)} \nabla_{w_i} f(w).
\end{equation*}
A noiseless gradient descent algorithm is often converted into a stochastic one by introducing decaying learning rates \citep{bottou1998online}, which allows the noise to average out over time. Applying the common decay rate of $n^{-1}$, we might aim for an update that, in expectation, looks like:
\[w_i \leftarrow w_i - \frac{1}{n P_i(w)} \nabla_{w_i} f(w).\]

\paragraph{A first pass at online Lloyd's} To design the stochastic version of Lloyd's algorithm, suppose that we had access directly to $p$. Then, we might consider the following idealized learning rate achieving the above expected update:
\[H_{\mathrm{ideal},i}^{(n+1)} \leftarrow  \frac{\ind\{I^{(n+1)} = i\}}{nP_i(W^{(n)})},\]
where $I^{(n+1)}$ is the index of the updated center; it is drawn from the distribution $P(W^{(n)})$ over $[k]$.

The \ref{alg:naive} algorithm described in \Cref{sec:motivation} could be considered as a rough approximation to this idealized learning rate. Recall its update:
\begin{equation*}
H_{\mathrm{OL},i}^{(n+1)} \leftarrow \frac{\ind\{I^{(n+1)} = i\}}{N_i^{(n)} + 1},
\end{equation*}
where $N_i^{(n)}$ is the update count for the $i$th center. If $P(W^{(n)})$ does not vary much over time, then:
\[\frac{1}{N_i^{(n)} + 1} \approx \frac{1}{nP_i(W^{(n)})}.\] 
In other words, the online Lloyd's update appears to approximate the idealized learning rate by applying a stochastic preconditioner computed on:
\[\widehat{P}_i^{(n)} = \frac{N_i^{(n)}}{n}.\]
This algorithm naively assumes that $\widehat{P}_i^{(n)}$ is a reasonable estimator of ${P_i}^{(n)} := P_i(W^{(n)})$. Because the Voronoi partitions is drifting throughout the whole algorithm, this assumption is generally false.

However, the issue with \ref{alg:naive} is not its naivet\'e. Rather, the issue lies with the idealized preconditioner $P_i(w)^{-1}$, which is poorly conditioned---it can become arbitrarily large. While not a problem in the noiseless setting, the learning rate cannot become unbounded in the stochastic case.

\paragraph{A second pass at online Lloyd's} Let us consider a second idealized online Lloyd's where we introduce an upper bounding rate of $t_n^{-1}$ for some sequence $t_n$. Set the learning rate:
\[H_{\mathrm{ideal}',i}^{(n+1)} \leftarrow  \frac{\ind\{I^{(n+1)} = i\}}{\max\big\{nP_i^{(n)},\, t_n\big\}}.\]
We refrain from preconditioning when $P_i(w) < t_n / n$. Note that if $t_n = o(n)$, then we can expect the set of points on which the algorithm will precondition to grow. Of course, this is idealized since we generally do not have direct access to $p$ to compute $P_i(w)$ with.

This motivates what we call the \ref{alg:onlinelloyd} algorithm, which simultaneously constructs an estimator ${\widehat{P}_i}^{(n)}$ of ${P_i}^{(n)}$ based on the empirical rate at which the $i$th center has recently been updated in the past $s_n$ steps, for some sequence $s_n$. Then, we set:
\begin{equation} \label{eqn:learning-rate-empirical}
H_i^{(n+1)} \leftarrow  \frac{\ind\{I^{(n+1)} = i\}}{\max\big\{n{\widehat{P}_i}^{(n)},\, t_n\big\}}.
\end{equation}On the one hand, in order to obtain a low-bias estimator, we need $s_n = o(n)$ so that updates in the distant past are forgotten. But on the other, we would like the estimator to concentrate as $n$ goes to infinity, so we also require $s_n \uparrow \infty$. To specify ${\widehat{P}}_i^{(n)}$, we define:
\begin{align}\label{eqn:learning-rate-empirical-p}
    \widehat{P}_i^{(n)} &= \frac{1}{s_n} \sum_{n_\circ\leq n' < n} \ind\{I^{(n' + 1)} = i\}
\end{align}
where we denote $n_\circ = n - s_n$, and where $s_n$ and $t_n$ are non-decreasing sequences. 

\begin{figure*}[h!]
\begin{center}
\begin{minipage}{0.9\textwidth}
\hrulefill
\begin{algorithm}{generalized online Lloyd's}{
    \label{alg:onlinelloyd}
    \qinit{$k$ arbitrary distinct centers $W \in \mathbb{R}^{k\times d}$ from the support of $p\vphantom{\big|_p}$}}
    \qfor $n = 0,1,2,\ldots\vphantom{\big|_{p_i}}$ \\
        sample data point $X\sim p\vphantom{\displaystyle\frac{1}{s}}$\\
        let $\displaystyle W_{i}$ be the closest center to $X$\\ 
         update $\displaystyle \widehat{P}_{i} \leftarrow \frac{1}{s_n} \cdot \text{\# of times ith center was updated in last}~s_n~\text{timesteps}$\\   
    update center $\displaystyle W_{i} \leftarrow W_{i} - \frac{1}{\max\{n \widehat{P}_i, t_n\}}\cdot \big(W_{i} - X\big) $
   
    \qrof
\end{algorithm}
\hrulefill
\end{minipage}
\vspace{5pt}

\begin{minipage}{0.85\textwidth}
\small{\textsc{algorithm}. A generalization of \ref{alg:naive} with asymptotic convergence to stationary points guarantees, for example, when $s_n = n^{2/3 + \epsilon}$ and $t_n = n^{2/3 + 2 \epsilon}$ for $\epsilon > 0$ (\Cref{thm:lloydconvergence}). Note that \ref{alg:naive} is recovered when $s_n = n$ and $t_n = 0$, but has no guarantees.} 
\end{minipage}
\end{center}
\end{figure*}

By introducing conditions on $s_n$ and $t_n$,  we show that ${\widehat{P}_i}^{(n)}$ is a consistent estimator of ${P_i}^{(n)}$ in \Cref{sec:lloyd}. Furthermore, we obtain the following convergence theorem, as a corollary of \Cref{thm:iterates-conv}.

\begin{restatable}[Convergence of iterates, \ref{alg:onlinelloyd}]{theorem}{lloydconvergence} \label{thm:lloydconvergence}
Let $H_j^{(n)}$ and $\widehat{P}_j^{(n)}$ be defined as in (\ref{eqn:learning-rate-empirical} and \ref{eqn:learning-rate-empirical-p}). Let $s_n$ and $t_n$ be non-decreasing sequences satisfying:
\begin{align*}\lim_{n\to\infty}\, \frac{n^{2/3} \log n}{s_n} = \lim_{n\to\infty} \, \frac{s_n \log s_n}{t_n} =  \lim_{n\to\infty}\,\frac{t_n}{n} = 0.\end{align*}
If $p$ is continuous, then the iterates $W^{(n)}$ of \ref{alg:onlinelloyd} asymptotically converges to stationary points of its $k$-means cost almost surely.
\end{restatable}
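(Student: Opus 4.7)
The plan is to derive this theorem as a corollary of \Cref{thm:iterates-conv-simple}. Two things need to be verified: (i) the square-summability hypothesis required by \Cref{prop:cost-conv}, and (ii) the main conditional probability condition on the learning rates. For (i), observe from (\ref{eqn:learning-rate-empirical}) that $H_i^{(n+1)} \le 1/t_n$, since the denominator is always at least $t_n$. The growth conditions imply $t_n \gg n^{2/3}(\log n)^2$ eventually, so $\sum_n k/t_n^2 < \infty$, and \Cref{prop:cost-conv} then applies. \Cref{assume:lr} is immediate from the shape of (\ref{eqn:learning-rate-empirical}).

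\paragraph{Reducing (ii) to consistency of $\widehat{P}$}
Fix $i \in [k]$, $\epsilon > 0$, and small $r > 0$. By the gradient formula from \Cref{lem:k-means-grad} combined with $\|W_i^{(m)} - M_i(W^{(m)})\| \le 2R$ on $\mathcal{D}_R$, the event $\|\nabla_{w_i} f(W^{(m)})\| > \epsilon$ implies $P_i(W^{(m)}) > \alpha := \epsilon/(2R)$. Take $T(m) = \lceil m(1+\beta) \rceil$ for a constant $\beta > 0$ proportional to $r/k$. The driving intuition is that once $\widehat{P}_j^{(n)}$ concentrates around $P_j(W^{(n)})$ uniformly in $j$---the consistency result promised in \Cref{sec:lloyd}---the algorithm approximates an idealized scheme with learning rate $\ind\{I^{(n+1)} = j\}/(nP_j(W^{(n)}))$ on indices with $nP_j \gtrsim t_n$, giving $\E[H_j^{(n+1)} \mid \mathcal{F}_n] \asymp 1/n$ for every $j$. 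Summing over $n \in [m, T(m))$ then yields expected accumulated learning rates of order $\log(1+\beta) \asymp \beta$, both for the target center $i$ and for the total $\sum_j$.

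\paragraph{Concentration and conclusion}
Choosing $\beta = r/(Ck)$ for a sufficiently large constant $C$, Bernstein-type concentration on the bounded adapted sequence $H_j^{(n+1)} \in [0, 1/t_n]$ (whose predictable quadratic variation sums to $O(1/(m\alpha))$ over $[m,T(m))$) gives, with constant probability: (a) the total learning rate $\sum_j \sum_{n \in [m,T(m))} H_j^{(n+1)}$ stays below $r$, and (b) the center-$i$ sum $\sum_n H_i^{(n+1)}$ exceeds $s := \beta/2$. A union bound yields joint probability at least some $c > 0$. Stability of $P_i$ across $[m, T(m))$---needed to keep $P_i(W^{(n')}) \ge \alpha/2$ so that the approximation $\widehat{P}_i \approx P_i$ remains valid---follows on event (a) from \Cref{lem:lr-iterates-bound}: the total displacement is at most $2Rr$, and $P_i$ is continuous since $p$ is. Applying \Cref{thm:iterates-conv-simple} then closes the proof.

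\paragraph{Main obstacle}
The hardest ingredient, deferred to \Cref{sec:lloyd}, is the uniform consistency $\widehat{P}_j^{(n)} = P_j(W^{(n)})(1 + o(1))$ almost surely for all sufficiently large $n$, jointly over $j$. This decomposes into a bias term---the drift $\|W^{(n)} - W^{(n-s_n)}\|$ over a window of length $s_n$, controlled by \Cref{lem:lr-iterates-bound} and small provided $s_n/n \to 0$ together with continuity of the $P_j$---and a variance term handled via Hoeffding-type concentration on a sum of $s_n$ Bernoulli-like indicators (requiring $s_n \to \infty$ fast enough). The rates $s_n \gg n^{2/3}\log n$ and $t_n \gg s_n \log s_n$ are calibrated precisely so that the bias is smaller than the threshold $t_n/n$ (so small-$P_j$ centers are also correctly thresholded) and so that a Borel--Cantelli union bound over $n$ upgrades the per-$n$ concentration to almost-sure uniform consistency.
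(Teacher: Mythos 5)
Your high-level blueprint is right---reduce to a convergence theorem for the iterates by showing the accumulated learning rates satisfy both an upper and lower bound with constant probability on $[m, T(m))$, with the crux being consistency $\widehat{P}_j^{(n)} \approx P_j^{(n)}$. But there are two genuine gaps in the route you take, and the paper's own proof structure is designed precisely to avoid them.

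\paragraph{Wrong target theorem.} You aim for \Cref{thm:iterates-conv-simple}, whose hypothesis requires the two-sided learning-rate bound to hold whenever $\|\nabla_{w_i} f(W^{(m)})\| > \epsilon$, \emph{with no upper bound on the gradient}. This cannot be verified: your argument for the upper bound $\sum_j \sum_n H_j^{(n+1)} < r$ hinges on $\E[H_j^{(n+1)} \mid \mathcal F_n] \asymp 1/n$, which requires $\widehat P_j^{(n)}$ to be a good estimator of $P_j^{(n)}$ for all $j$, which in turn requires controlling the drift of $P_j$ over the window $[n - s_n, n)$. That control needs $P_j$ to be Lipschitz along the trajectory---but $P_j$ is only \emph{locally} Lipschitz on $\mathcal D_R$ (\Cref{lem:local-lipschitz}) and its modulus blows up near degenerate configurations, which the set $\{\|\nabla_{w_i} f\| > \epsilon\}$ may approach. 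The paper handles this by invoking the generalized \Cref{thm:iterates-conv}, which deliberately separates the compact set $\{\|\nabla_{w_i} f\| \in [\epsilon, \epsilon_0)\}$ (where the two-sided (A1) bound is needed and provable) from $\{\|\nabla_{w_i} f\| \geq \epsilon_0\}$ (where only the one-sided divergence condition (A2), established in \Cref{lem:lr-inf-lb}, is needed and the upper bound is dropped). You cannot simply route around this by using the simpler theorem statement.

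\paragraph{Continuity is not enough, and you can't condition on the future.} You write that the bias of $\widehat P_j^{(n)}$ is controlled by "\Cref{lem:lr-iterates-bound} ... together with continuity of the $P_j$." Continuity is qualitative; to show $|P_j^{(n')} - P_j^{(n)}| \lesssim a_n$ uniformly over $n' \in [n_\circ, n)$ you need a quantitative Lipschitz estimate, and since $P_j$ is only locally Lipschitz you must argue the iterates stay in a compact subset $K$ where a uniform constant $L$ applies. But the naive way to do this---conditioning on the event that $W^{(n')} \in K$ for $n_\circ \le n' \le n$---is illegal: conditioning on a future event destroys the martingale structure that Azuma--Hoeffding requires. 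The paper's solution (\Cref{fig:regions}, the $r$-core set notion, and \Cref{lem:est-concentration}) is to find an $\mathcal F_{n_\circ}$-measurable surrogate event $\{W^{(n_\circ)} \in K_\circ\}$ which, combined with the bound on accumulated learning rates from \Cref{lem:lr-ub-sn}, \emph{implies} almost surely that the iterates remain in $K$. This core-set machinery is a load-bearing part of the proof of \Cref{lem:lr-upper-bound} and \Cref{lem:lr-lower-bound}, and your proposal omits it entirely. Without it, the concentration step in your last paragraph does not go through as written.
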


\begin{proof}[Proof sketch]
To apply \Cref{thm:iterates-conv-simple}, we need to show that with constant probability:
\begin{align*} 
\sum_{j \in [k]} \sum_{m \leq n < T(m)} H_j^{(n+1)} < r
\quad \textrm{and}\qquad \sum_{m \leq n < T(m)} H_i^{(n+1)} > s,
\end{align*}
whenever $\|\nabla_{w_i} f(W^{(m)})\| \geq \epsilon$. We achieve this by proving tighter bounds \emph{in expectation}:
\begin{align*}
&\sum_{j \in [k]} \sum_{m\leq n < T(m)} \E\left[H_j^{(n+1)}\,\middle|\, \textcolor{gray}{\mathcal{F}_n}\right]  < r/100
\qquad \textrm{and}\qquad \sum_{m \leq n < T(m)} \E\left[H_i^{(n+1)}\,\middle|\, \textcolor{gray}{\mathcal{F}_n}\right]  > 100s,
\end{align*}
which can be converted into bounds \emph{in probability} by Markov's and Azuma-Hoeffding's inequalities.

Notice that the conditional expectation is:
\[\E\left[H_j^{(n+1)}\,\middle|\, \textcolor{gray}{\mathcal{F}_n}\right] = \frac{P_j^{(n)}}{\max\{n \widehat{P}_j^{(n)}, t_n\}}.\]
Assume $P_j^{(n)}$ is not too small and that $\widehat{P}_j^{(n)} = P_j^{(n)}$, so the estimator is perfect. Then in fact:
\[\E\left[H_j^{(n+1)}\,\middle|\, \textcolor{gray}{\mathcal{F}_n}\right] = \frac{1}{n}.\]
Define the map $T_r : \mathbb{N} \to \mathbb{N}$ so that $T_r(m)$ is the unique natural number so that:
\begin{equation*}
    \sum_{m \leq n \textcolor{blue}{<} T_r(m)} \frac{1}{n} \leq r < \sum_{m \leq n \textcolor{orange}{\leq} T_r(m)} \frac{1}{n}.
\end{equation*}
We obtain the conditional expectation bounds if we set $T\equiv T_{Cr}$ with $C = 1/100k$ and $s = Cr/100$. 

We assumed (i) $P_j^{(n)}$ is not too small and (i) $\widehat{P}_j^{(n)}$ is perfect; neither is true in general. The first caveat is easier to deal with; the fear is that the centers with low update probabilities may sporadically contribute a large amount $1/t_n$, when compared to the $1/n$ of the other centers. However, our conditions on $s_n$ and $t_n$ ensure that centers with small Voronoi masses also have little effect on the overall behavior. For the second, we show that the estimator is highly concentrated around its true mean (\Cref{lem:est-concentration}). 

To sketch why $\widehat{P}_j^{(n)}$ concentrates about $P_j^{(n)}$, we have:
\begin{align*}
\widehat{P}_j^{(n)} = \frac{1}{s_n} \sum_{n_\circ\leq n' < n} \ind\{I^{(n' + 1)} = j\}
\qquad \textrm{and}\qquad P_j^{(n')} = \E\left[\ind\{I^{(n'+1)} = j\}\,\middle|\, \mathcal{F}_{n'}\right].
\end{align*}
Azuma-Hoeffding's shows that $\widehat{P}_j^{(n)}$ concentrates:
\begin{align*}
\Pr\left(\bigg|\widehat{P}_j^{(n)} -  \frac{1}{s_n} \sum_{n_\circ \leq n' < n} P_j^{(n')}\bigg| \geq a \,\middle|\, \textcolor{gray}{\mathcal{F}_{n_\circ}}\right) \leq 2 \exp\left(- \frac{1}{2} s_n a^2\right).
\end{align*}
We just need to show that $P_j^{(n')}$ remains close to $P_j^{(n)}$ over this interval. Suppose that $P_j$ were $L$-Lipschitz. Then for all $n_\circ \leq n' < n$,
\[\left|P_j^{(n')} - P_j^{(n)}\right| \leq 2RL \cdot \sum_{j^{\prime \prime} \in [k]}\sum_{n_\circ \leq n^{\prime\prime}< n} H_{j^{\prime \prime}}^{(n^{\prime \prime}+1)}.\]
The right-hand side goes to zero almost surely under our conditions on $s_n$ and $t_n$ (\Cref{lem:lr-ub-sn}).

This would be the proof if $P_j$ were globally Lipschitz on $\mathcal{D}_R$. But this is not generally the case: consider the 2-means problem on $\mathbb{R}^2$. When the two centers are very close together, the Voronoi cells they produce are much more sensitive to small perturbations than when they are far apart.

However, it is the case that when $p$ is continuous, then $P_j : \mathcal{D}_R \to [0,1]$ is locally Lipschitz (\Cref{lem:local-lipschitz}), so that $P_j$ is Lipschitz on compact subsets $K$ of $\mathcal{D}_R$. 

A slight complication arises in each step of this proof because cannot directly condition on the iterates remaining in $K$; conditioning on a future event destroys the martingale property required by Azuma-Hoeffding's. To overcome this issue, we construct a core-set $K_\circ \subset K$ so that if $W^{(n_\circ)}$ is initially in $K_\circ$ and the accumulated learning rate is bounded:
\[W^{(n_\circ)} \in K_\circ \quad\textrm{and}\quad \sum_{j \in[k]} \sum_{n_\circ \leq n' < n} H_j^{(n'+1)} < r_\circ,\]
then $W^{(n')}$ remains in $K$ almost surely throughout the interval $n_\circ \leq n' \leq n$. As this construction splits the analysis into two cases (iterates in and outside $K$), in the actual proof, this theorem follows from the more general \Cref{thm:iterates-conv}, which breaks the analysis down into these two cases.

\Cref{sec:lloyd} makes this argument rigorous.
\end{proof}
\section{Analysis of the \textit{k}-means cost} \label{sec:prop}

In order to analyze the $k$-means algorithm through the lens of gradient descent, we need to be able to prove smoothness properties and calculate the gradient of the $k$-means objective,
\begin{equation} 
f(w) := \frac{1}{2}\sum_{i \in [k]}\ \int_{V_i(w)} \| w_i - x\|^2\, p(x) \,dx. \tag{{\ref{eqn:kmeans-cost}}}
\end{equation}
But because the domains and integrands depend on $w$ simultaneously, taking the gradient is not so straightforward. To simplify analysis, we can fix the Voronoi partition with respect to some tuple of centers $w' \in \mathbb{R}^{k \times d}$ in order to define a family of surrogate objectives parametrized by $w'$,
\begin{equation} \label{eqn:kmeans-decouple}
    g(w; w') := \frac{1}{2}\sum_{i\in [k]} \ \int_{V_i(w')} \,\|w_i - x\|^2\, p(x)\, dx.
\end{equation}

\subsection{Family of upper bounds of the cost}
It turns out that $\{g(\,\cdot\,;w') : w' \in \mathbb{R}^{k\times d}\}$ forms a family of convex, quadratic upper bounds of $f$. That $g(\,\cdot\,;w')$ is convex and quadratic is easy to see, since it is a sum of convex combinations of convex quadratic functions. The following proposition shows that $g$ dominates $f$.

\begin{proposition} \label{prop:surrogate}
Let $p$ be a density on $\mathbb{R}^d$ with bounded second moment. Let $f$ be the $k$-means objective (\ref{eqn:kmeans-cost}) and $g$ be the $k$-means surrogate (\ref{eqn:kmeans-decouple}). Then for all $w,w' \in \mathbb{R}^{k \times d}$,
\[f(w) \leq g(w;w').\]
\end{proposition}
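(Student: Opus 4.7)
The plan is to reduce the inequality to the tautology that the Voronoi assignment is the cost-minimizing assignment. The key observation is that $f(w)$ is exactly the integral under the pointwise minimum of the squared distances to $w_1, \dots, w_k$, whereas $g(w; w')$ integrates the squared distances using an assignment rule (the Voronoi partition of $w'$) that need not be the minimizer.

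Concretely, I would proceed in three short steps. First, rewrite $f(w)$ in closed form by using that $\{V_i(w)\}_{i\in[k]}$ partition $\mathbb{R}^d$ up to a Lebesgue-null boundary set (which contributes nothing to the integral since $p$ is a density), and on $V_i(w)$ the integrand $\|w_i - x\|^2$ equals $\min_{j\in[k]}\|w_j - x\|^2$ by definition of the Voronoi cell. This yields
\[
f(w) \;=\; \tfrac{1}{2}\int_{\mathbb{R}^d} \min_{j\in[k]}\|w_j - x\|^2\, p(x)\, dx.
\]
Second, observe that the same partitioning argument applied with $w'$ in place of $w$ gives the identity $g(w;w') = \tfrac{1}{2}\sum_i \int_{V_i(w')} \|w_i - x\|^2 p(x)\, dx$, and, crucially, the pointwise bound $\|w_i - x\|^2 \geq \min_{j\in[k]}\|w_j - x\|^2$ holds on each $V_i(w')$. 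Third, integrate this pointwise bound cell-by-cell and sum over $i$: since $\{V_i(w')\}$ also partitions $\mathbb{R}^d$ up to a null set, the right-hand side collapses back to $\tfrac{1}{2}\int_{\mathbb{R}^d}\min_{j}\|w_j - x\|^2\, p(x)\, dx = f(w)$, giving $g(w;w') \geq f(w)$.

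There is no serious obstacle here; the only item worth being careful about is the handling of the shared Voronoi boundaries, which I would dispatch in one sentence by noting that these are Lebesgue-null and hence $p$-null, so the partitioning identities above are valid as integral identities regardless of how one breaks ties on the boundary. I would not need the bounded second moment assumption for the inequality itself — it is used only to guarantee that both sides are finite and to allow the decoupled form $g(w;w')$ to be well-defined as a real number rather than formally $+\infty$.
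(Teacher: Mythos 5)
Your proof is correct and takes essentially the same approach as the paper: you observe that the integrand $\|w_i-x\|^2$ on $V_i(w')$ pointwise dominates $\min_j\|w_j-x\|^2$, and that integrating the latter over the partition recovers $f(w)$. The paper phrases this as $g(w;w') \ge g(w;w) = f(w)$ via the same pointwise bound inside the integral, so there is no substantive difference.
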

\begin{proof}
Notice that by definition, $f(w) = g(w;w)$. We claim that:
\begin{equation} \label{eqn:g-min-w}
\textcolor{gray}{f(w) = }\ g(w;w) \leq \, g(w;w').
\end{equation}
To see this, note that $g$ is an integral accumulating $\|w_i - x\|^2$ when $x$ is in the $i$th partition. The integral only decreases if $x$ moves into its Voronoi partition, $j^* = \argmin \|w_j - x\|^2$,
\begin{align*} 
g(w;w') &= \frac{1}{2} \int_{\mathbb{R}^{k\times d}} \, \sum_{i \in [k]} \|w_i - x\|^2 \cdot \ind_{V_i(w')}(x) \, p(x)\,dx
\\&\geq \frac{1}{2} \int_{\mathbb{R}^{k\times d}} \min_{i \in [k]} \,\|w_i - x\|^2\, p(x)\, dx
\\&= g(w;w),
\end{align*}
where $\ind_{V_i(w')}(x)$ is the indicator on the set $V_i(w')$.  The inequality holds because the first integrand dominates the second.
\end{proof}

\subsection{Gradient of the cost}
While taking the derivative of $f$ is nontrivial, taking the derivative of $g$ is much easier; by dominated convergence, the derivative with respect to $w$ is:
\begin{align}
    \nabla_{w_i} g(w;w') &= \int_{V_i(w')} (w_i - x)\, p(x)\, dx \notag\\
    &= P_i(w') \cdot \big(w_i - M_i(w')\big). \label{eqn:grad-g}
\end{align}
We provide two proofs computing the gradient of $f$: (i) an elementary proof based on a local approximation $f(w + \epsilon) = g(w+\epsilon;w) + \mathrm{error}_w(\epsilon)$, and (ii) a short proof using results from differential geometry. Here is our target:

\kmeansgrad*

Given the form of the gradient (\ref{eqn:grad-f}), it is straightforward to show continuity:

\begin{proof}[Proof of continuous gradient]
Assuming (\ref{eqn:grad-f}), the following holds:
\begin{align*}
    \|\nabla_{w_i} f(w + \epsilon) & - \nabla_{w_i}f(w)\| \\&= \big\|\int_{V_i(w + \epsilon)} (w_i + \epsilon_i - x) \, p(x) \, dx - \int_{V_i(w)} (w_i - x)\,p(x)\, dx\big\| 
    \\&\leq \int_{V_i(w + \epsilon) \cap V_i(w)} \|\epsilon_i\|\, p(x) \, dx + 2 \int_{V_i(w + \epsilon) \Delta V_i(w)} \big(\|w_i - x\| + \|\epsilon_i\|\big)\, p(x) \, dx, 
\end{align*}
where $\Delta$ is the symmetric difference operator. Taking as $\|\epsilon\| \to 0$, the limit of both integrals converge to zero by dominated convergence, proving continuity. 
\end{proof}

We now show (\ref{eqn:grad-f}) through two different approaches.

\subsubsection{An elementary proof}
If $\epsilon$ is a small perturbation, we can write $f(w + \epsilon) = g(w + \epsilon;w) + \mathrm{error}_w(\epsilon)$, where the error is accumulated over points near the boundaries of the partitions $V_i(w)$. In the proof, we show that $\mathrm{error}_w(\epsilon) = o\big(\|\epsilon\|\big)$, so only the $g(w + \epsilon;w)$ term contributes to the derivative of $f(w)$. 

\begin{proof}[Proof of \Cref{lem:k-means-grad}]
Fix $w, h \in \mathbb{R}^{k \times d}$ where $\|h\| = 1$. We proceed by computing the directional derivative $D_hf(w)$ along $h$. Notice that:
\begin{align*}
    f(w) = \frac{1}{2} \sum_{i \in [k]}\, \int_{V_i} \|w_i - x\|^2 \, p(x)\,dx \quad\textrm{and}\quad 
    f(w + th) = \frac{1}{2} \sum_{i \in [k]} \, \int_{V_i^t} \| w_i + t h_i - x\|^2\, p(x) \, dx,
\end{align*}
where we let $V_i$ and $V_i^t$ respectively abbreviate $V_i(w)$ and $V_i(w + th)$ for $t > 0$. Notice that:
\[V_i^t \ =\  \big[\textcolor{blue}{V_i} \ \cup \ \textcolor{orange}{\big(V_i^t \setminus V_i\big)}\big]\ \setminus \ \textcolor{purple}{\big(V_i \setminus V_i^t\big)},\]
where (i) $\textcolor{blue}{V_i}$ and $\textcolor{orange}{\big(V_i^t \setminus V_i\big)}$ are disjoint and (ii) $\textcolor{blue}{V_i}$ contains $\textcolor{purple}{\big(V_i \setminus V_i^t\big)}$. It follows that:
\begin{align*}
f(w + th) = \frac{1}{2} &\sum_{i \in [k]}\, \int_{\textcolor{blue}{V_i}} \|w_i + th_i - x\|^2 \,p(x)\, dx \\
&+ \frac{1}{2} \sum_{i \in [k]}\, \left( \int_{\textcolor{orange}{V_i^t \setminus V_i}} \|w_i + th_i - x\|^2 \,p(x)\, dx \ - \ \int_{\textcolor{purple}{V_i \setminus V_i^t}} \|w_i + th_i - x\|^2 \,p(x)\, dx\right).
\end{align*}
We claim that this second line is $o(t)$. Assuming this for now, it follows that the derivative is:
\begin{align*}
    D_hf(w) &= \lim_{t\to 0}\,\frac{1}{t} \big(f(w + th) - f(w)\big)
    \\&= \lim_{t\to 0} \, \frac{1}{t} \left(\frac{1}{2} \sum_{i \in [k]}\, \int_{V_i}\big( \|w_i + th_i - x\|^2 - \|w_i - x\|^2\big) \, p(x)\, dx\right)  + \lim_{t\to 0} \frac{o(t)}{t} = D_hg(w;w),
\end{align*}
where the derivative $D_hg(w;w)$ is taken only over the first argument (i.e. the partition $V_i$ is fixed). But this implies that $\nabla_w\, f(w) = \nabla_w\, g(w;w')$ when $w' = w$. Applying (\ref{eqn:grad-g}), we obtain:
\[\nabla_{w_i}\, f(w) = P_i(w) \cdot \big(w_i - M_i(w)\big).\]

To finish the proof, we need to show that the above second line is $o(t)$. We claim it is equal to:
\begin{equation} \label{eqn:perturbation-term} 
\frac{1}{2} \sum_{i \ne j} \int_{V_{j \to i}^t} \left( \,\|w_i + th_i - x\|^2 - \|w_j + th_j -x\|^2\,\right)\, p(x)\,dx,
\end{equation}
where the domain of the integral $V_{j\to i}^t = \big(V_i^t \setminus V_i\big) \cap \big(V_j \setminus V_j^t\big)$ is the set of points $x \in \mathbb{R}^d$ that originally began in the $j$th partition $V_j(w)$ but after the small perturbation, ended up in the $i$th partition $V_i(w + th)$. Indeed, $V_i^t \setminus V_i$ is the disjoint union $\bigcup_{j} V_{j\to i}^t$, since every point in $V_i^t$ not originally in $V_i$ must have come from some other partition $V_j$. Similarly, $V_j \setminus V_j^t = \bigcup_i V_{j\to i}^t$. 

Intuitively, if $x$ swaps partitions due to a small perturbation, then $\|w_i - x\|^2 \sim \|w_j - x\|^2$. In fact, we will show that the magnitude of $\|w_i + th_i - x\|^2 - \|w_j + th_j - x\|^2$  is $O(t\|x\|)$ for $x \in V_{j\to i}^t$. First, to simplify notation, let:
\[\alpha = \frac{w_i - w_j}{2} \quad \beta = \frac{w_i + w_j}{2}\quad \gamma = \frac{h_i - h_j}{2} \quad \delta = \frac{h_i + h_j}{2}.\]
By the polarization identity, we have:
\begin{align*} 
\|w_i - x\|^2 - \|w_j - x\|^2\quad  &= \quad 4 \big\langle \alpha, \beta - x\big\rangle \\ 
\|w_i + th_i - x\|^2 - \|w_j + th_j - x\|^2\quad &=\quad 4 \big\langle \alpha + t \gamma, \beta + t\delta - x\big\rangle.
\end{align*}
Furthermore, because $x \in V_{j\to i}^t$, we have the inequalities:
\begin{align*}
    \|w_j - x\|^2 \leq \|w_i - x\|^2\quad \quad \textrm{and}\quad\quad \|w_i + th_i - x\|^2 \leq \|w_j + th_j -x\|^2.
\end{align*}
Plugging in the equality from polarization, we obtain:
\begin{align*}
    \underbrace{\|w_i + th_i - x\|^2 - \|w_j + th_j - x\|^2}_{\leq 0}
    &= \underbrace{\|w_i - x\|^2 - \|w_j - x\|^2}_{\geq 0} \, + \, 4t \big(\langle \alpha ,\delta\rangle + \langle \gamma , \beta - x\rangle + t \langle \gamma, \delta\rangle\big),
\end{align*}
which implies that $\left|\|w_i + th_i - x\|^2 - \|w_j + th_j - x\|^2\right| \leq Ct( \|x\| + 1)$ when $t < 1$, where $C$ is a constant that may depend on $w$. It follows that:
\[\left|\int_{V_{j\to i}^t} \left( \,\|w_i + th_i - x\|^2 - \|w_j + th_j -x\|^2\,\right)\, p(x)\,dx\right| \leq \int_{V_{j\to i}^t} Ct( \|x\| + 1) \, p(x)\,dx.\]
As $\displaystyle \E_p\left[\|X\|^2\right] <\infty$, we know $p$ also has bounded first moment. By dominated convergence:
\[\lim_{t\to 0} \frac{1}{t} \int_{V_{j\to i}^t} Ct( \|x\| + 1) \, p(x) \, dx = 0,\]
since $V_{j\to i}^t$ decreases to some measure zero subset of $V_{i} \cap V_{j}$. And so, (\ref{eqn:perturbation-term}) is $o(t)$. 
\end{proof}

\subsubsection{A short proof using Leibniz integral rule}
Suppose we are given an integral where both its domain and integrand are time-varying. Then the derivative of that integral is given by Leibniz rule:
\[\frac{d}{dt} \int_{a(t)}^{b(t)} F(x,t)\,dx = \int_{a(t)}^{b(t)} \frac{\partial F(x,t)}{\partial t} \, dx +  \bigg(b'(t) F\big(b(t), t\big) - a'(t) F\big(a(t), t\big)\bigg).\]
In particular, we break down the time derivative into two pieces: (i) the accumulated time derivative at each point in the domain, and (ii) the weighted velocity at the boundary at which the domain is expanding or contracting. In higher dimensions, the time derivative of a volume integral can be decomposed into the same two pieces. 

But generalizing Leibniz rule to higher dimensions, we need some notation. Let $\Omega(t) \subset \mathbb{R}^n$ be a smoothly time-varying differentiable $n$-manifold for $t \in (-\tau,\tau)$ with boundary $S(t) = \partial \Omega(t)$. That is, there is a domain $U \subset \mathbb{R}^n$ and a continuously differentiable map $\phi: (-\tau, \tau) \times U \to \mathbb{R}^n$ where $\phi_t$ is a diffeomorphism of $U$ onto $\Omega(t)$. 

We write $x = x(t,u) = \phi(t, u)$ and $v = \partial x / \partial t$. For points $x \in S(t)$ on the boundary, denote the surface normal by $N = N(x)$. The \emph{surface velocity} at $x \in S(t)$ is defined as $C(x) = N^\top v$, which is an invariant in the sense that it is coordinate-independent \citep{grinfeld2013introduction}.

\begin{theorem}[General Leibniz rule, \cite{grinfeld2013introduction}]
Let $\Omega(t) \subset \mathbb{R}^n$ be a smoothly time-varying smooth $n$-manifold with boundary $S(t)$ over times $t \in (-\tau, \tau)$. Let $C(t,x)$ be the surface velocity of the point $x \in S(t)$ at time $t$. If $F: (-\tau, \tau) \times \mathbb{R}^n \to \mathbb{R}^m$ is smooth, then for $t \in (-\tau, \tau)$:
\[\frac{d}{dt} \int_{\Omega(t)} F \, d\Omega = \int_{\Omega(t)} \frac{\partial F}{\partial t} \, d\Omega + \int_{S(t)} F\, C dS.\]
\end{theorem}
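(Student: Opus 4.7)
The strategy is the standard pull-back/push-forward argument: move the time-varying domain $\Omega(t)$ to a fixed reference domain $U$ via the diffeomorphism $\phi_t$, differentiate under a now-fixed integral, and then push the result back to $\Omega(t)$ and apply the divergence theorem to extract the boundary term. Throughout, I will argue componentwise on $F = (F^1, \ldots, F^m)$, so I may assume $m = 1$ without loss of generality.

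\textbf{Step 1: Pull back to the reference domain.} Using the change of variables $x = \phi(t,u)$ with Jacobian $J(t,u) = \det D_u \phi(t,u)$ (which I take positive after shrinking $\tau$ and orienting $\phi$), write
\[
\int_{\Omega(t)} F(t,x)\, d\Omega(x) \;=\; \int_U F\bigl(t,\phi(t,u)\bigr)\, J(t,u)\, du.
\]
Since $U$ is fixed and $F$, $\phi$ are continuously differentiable with $J$ locally bounded away from $0$ and $\infty$, dominated convergence justifies differentiating under the integral sign (over a precompact $U$ this is immediate; in general one localizes with a smooth partition of unity on $\Omega(t)$, which is another routine step).

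\textbf{Step 2: Differentiate and use Jacobi's formula.} Setting $v(t,u) = \partial \phi / \partial t$, the chain rule gives
\[
\frac{\partial}{\partial t}\Bigl[F\bigl(t,\phi(t,u)\bigr) J(t,u)\Bigr] \;=\; \Bigl(\partial_t F + \nabla_x F \cdot v\Bigr) J \;+\; F\, \partial_t J.
\]
Jacobi's formula combined with $D_u\bigl(\partial_t \phi\bigr) = D_u v$ yields $\partial_t J = J \cdot (\mathrm{div}_x v)\circ \phi$, so the bracket equals $\bigl(\partial_t F + \mathrm{div}_x(F v)\bigr)\,J$. Changing variables back to $\Omega(t)$,
\[
\frac{d}{dt}\int_{\Omega(t)} F\, d\Omega \;=\; \int_{\Omega(t)} \partial_t F\, d\Omega \;+\; \int_{\Omega(t)} \mathrm{div}_x(Fv)\, d\Omega.
\]

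\textbf{Step 3: Apply the divergence theorem.} Because $\Omega(t)$ is a smooth manifold with boundary $S(t)$ and outward unit normal $N$, the divergence theorem converts the second integral into a surface integral:
\[
\int_{\Omega(t)} \mathrm{div}_x(Fv)\, d\Omega \;=\; \int_{S(t)} F\, (N^\top v)\, dS \;=\; \int_{S(t)} F\, C\, dS,
\]
using the definition $C = N^\top v$. Combining with Step~2 gives the stated identity.

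\textbf{Where the work lies.} The computations above are routine modulo two points: (i) verifying that $C = N^\top v$ depends only on the geometry of the family $\{\Omega(t)\}$ and not on the chosen parametrization $\phi$, which is what makes the boundary term intrinsic --- this is a coordinate-invariance check that Grinfeld develops carefully and which I would either cite or verify by noting that two parametrizations differ by a time-dependent reparametrization of $U$ whose velocity is tangent to $S(t)$ and thus annihilated by $N$; and (ii) justifying the differentiation under the integral and the divergence theorem when $\Omega(t)$ is merely a smooth manifold with boundary (not necessarily compact). The latter is handled by a partition-of-unity/localization argument on compact pieces, which is the main technical nuisance but introduces no new ideas.
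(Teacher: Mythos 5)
The paper does not prove this statement at all---it is imported verbatim from \cite{grinfeld2013introduction} and used as a black box in the second proof of \Cref{lem:k-means-grad}---so there is no in-paper argument to compare against. Your proposal supplies the standard Reynolds-transport-theorem derivation, and it is correct: pull back to the fixed reference domain $U$ via $\phi_t$, differentiate under the integral, use Jacobi's formula $\partial_t \det A = \det A \cdot \mathrm{tr}(A^{-1}\partial_t A)$ with $A = D_u\phi$ and $\partial_t A = D_u v$ to get $\partial_t J = J\,(\mathrm{div}_x \tilde v)\circ\phi$, recombine into $\partial_t F + \mathrm{div}_x(F\tilde v)$, push forward, and apply the divergence theorem. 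Two small points of hygiene: in Step 2 the object whose spatial divergence you take must be the velocity expressed as a field on $\Omega(t)$, i.e.\@ $\tilde v(t,x) = v(t,\phi_t^{-1}(x))$, so that $D_u v = (D_x\tilde v)\,D_u\phi$ and the trace collapses correctly---your notation $(\mathrm{div}_x v)\circ\phi$ elides this but the computation is right. Your parenthetical verification that $C = N^\top v$ is parametrization-independent (the discrepancy between two parametrizations is tangent to $S(t)$, hence killed by $N$) is also correct and is exactly the invariance fact the paper leans on when it cancels the boundary contributions of adjacent Voronoi cells. The only caveat worth flagging is about the paper's \emph{application} rather than your proof: Voronoi cells are manifolds with corners, not smooth manifolds with boundary, so strictly one must decompose $\partial V_i(w)$ into its smooth facets (the lower-dimensional faces carry no surface measure); the paper's elementary first proof of \Cref{lem:k-means-grad} sidesteps this entirely, which is presumably why it is included.
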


As a result, if we consider the directional derivative of our objective $f$ in the direction of $h$,
\begin{equation} \label{eqn:f-directional-derivative}
\frac{d}{dt} f(w + th) = \frac{1}{2} \sum_{i \in [k]} \frac{d}{dt} \int_{V_i(w + th)} \|w_i + th_i - x\|^2 p(x) \, dx,
\end{equation}
each of the integrals will split into two: (i) the integrals computing the accumulated rate of change, and (ii) those computing weighted surface velocities at the boundaries of the Voronoi partition. The first exactly coincides with $\frac{d}{dt}\, g(w+th; w)$. The second terms vanish since the weighted surface velocities at the boundaries of two partitions exactly cancel each other out. Formally, we have:

\begin{proof}[Proof of Lemma~\ref{lem:k-means-grad}]
Fix $w, h \in \mathbb{R}^{k \times d}$ where $h$ is unit. The directional derivative $D_hf(w)$ is given by (\ref{eqn:f-directional-derivative}) evaluated at time $t = 0$. Applying the general Leibniz rule yields:
\begin{equation} \label{eqn:leibniz-k-means}
    D_hf(w) = \sum_{i \in [k]} \,h_i^\top \int_{V_i(w)} (w_i - x) p(x)\, dx + \frac{1}{2} \sum_{i \in [k]} \int_{\partial V_i(w)} \|w_i - x\|^2 p(x) \, C_idS,
\end{equation}
where $C_i(x)$ is the surface velocity of a point $x \in \partial V_i(w)$ at time $0$. Notice that if $x \in \partial V_i(w)$, then it is also contained in exactly one other boundary, $x \in \partial V_j(w)$. 
On the one hand, the weight of the integrands are equal $\|w_i - x\|^2 p(x) = \|w_j - x\|^2 p(x)$. But on the other, the surface velocities of $x$ of $\partial V_i \cap \partial V_j$ are equal and opposite, $C_i(x) = - C_j(x)$, since it is an invariant. Therefore,
\[\int_{\partial_iV(w) \cap \partial_jV(w)} \big(\|w_i - x\|^2p(x) \, C_i\big) + \big(\|w_j - x\|^2p(x)\, C_j\big)\, dS = 0.\]
Thus, the second set of integrals in (\ref{eqn:leibniz-k-means}) vanishes. By the chain rule, $D_hf(w) = h^\top \nabla f(w)$, so:
\[\nabla_{w_i}\, f(w) = \int_{V_i(w)} (w_i - x)p(x)\, dx.\qedhere\]
\end{proof}

\subsection{An analytic upper bound of the cost}
Since $g(\,\cdot\,;w')$ is quadratic, we can give the upper bound analytically using our computation of $\nabla f$.

\begin{lemma}[Quadratic upper bound] \label{lemma:quad-upper-bound}
Let $p$ a density on $\mathbb{R}^d$ have bounded second moment. If $f$ is the $k$-means objective (\ref{eqn:kmeans-cost}), then for all $w, w^+ \in \mathbb{R}^{k\times d}$
    \begin{align}\label{eqn:quad-upper-bounds-general} 
        f(w^+) &\leq f(w) + \big\langle \nabla f(w), w^+ - w\big\rangle + \frac{1}{2} (w^+ - w)^\top \mathbf{H} \,(w^+ - w),
    \end{align}
where we let $\mathbf{H} = \mathbf{H}_w\,g(w;w)$ be the Hessian of $g(\,\cdot\,;w)$. In particular, \begin{align}\label{eqn:quad-upper-bounds} 
    f(w^+) &\leq f(w) + \big\langle \nabla f(w), w^+ - w\big\rangle + \frac{1}{2} \|w^+ - w\|^2.
\end{align}
\end{lemma}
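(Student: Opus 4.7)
The plan is to obtain the bound by combining the majorization inequality of \Cref{prop:surrogate} with an \emph{exact} Taylor expansion of the surrogate $g(\,\cdot\,;w)$, which is a quadratic function of its first argument. I would proceed as follows.

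First, by \Cref{prop:surrogate}, $f(w^+) \leq g(w^+;w)$, and by definition $g(w;w) = f(w)$. Second, observe that $g(\,\cdot\,;w)$ is genuinely quadratic in its first argument: it is a fixed nonnegative combination of the convex quadratics $\tfrac{1}{2}\int_{V_i(w)}\|w_i^+ - x\|^2 p(x)\,dx$ (the partition $V_i(w)$ is frozen). Hence its Taylor expansion at $w$ terminates exactly:
\[
g(w^+;w) = g(w;w) + \big\langle \nabla_{w^+} g(w;w),\, w^+ - w\big\rangle + \tfrac{1}{2}(w^+ - w)^\top \mathbf{H}\,(w^+ - w).
\]
Third, from (\ref{eqn:grad-g}) we have $\nabla_{w_i^+} g(w;w) = P_i(w)\,(w_i - M_i(w))$, which by \Cref{lem:k-means-grad} equals $\nabla_{w_i} f(w)$. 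Substituting into the display yields (\ref{eqn:quad-upper-bounds-general}).

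For the specialization (\ref{eqn:quad-upper-bounds}), I would compute $\mathbf{H}$ explicitly. Since $g(\,\cdot\,;w)$ decouples across centers, its Hessian is block-diagonal with $i$th block equal to $P_i(w)\,I_d$. Therefore
\[
(w^+ - w)^\top \mathbf{H}\,(w^+ - w) \;=\; \sum_{i \in [k]} P_i(w)\,\|w_i^+ - w_i\|^2 \;\leq\; \sum_{i \in [k]} \|w_i^+ - w_i\|^2 \;=\; \|w^+ - w\|^2,
\]
using $0 \leq P_i(w) \leq 1$. Combining with (\ref{eqn:quad-upper-bounds-general}) gives (\ref{eqn:quad-upper-bounds}).

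There is no substantive obstacle — every nontrivial ingredient (the majorization $f \leq g(\,\cdot\,;w')$, the agreement $\nabla_w g(w;w) = \nabla f(w)$, and the elementary computation of the Hessian of a quadratic) is already in hand. The only point that warrants a line of care is to make sure we are differentiating $g(w^+;w)$ in the \emph{first} slot while keeping the partition fixed, so that the Hessian computation does not pick up spurious boundary terms; this is automatic since $g(\,\cdot\,;w)$ is defined precisely by freezing the partition.
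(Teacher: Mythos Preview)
Your proposal is correct and follows essentially the same approach as the paper: majorize $f(w^+)$ by $g(w^+;w)$ via \Cref{prop:surrogate}, expand the quadratic $g(\,\cdot\,;w)$ exactly at $w$, identify $g(w;w)=f(w)$ and $\nabla_w g(w;w)=\nabla f(w)$, and then bound the block-diagonal Hessian $\mathrm{diag}(P_i(w)I_d)$ by the identity since $P_i(w)\leq 1$.
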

\begin{proof}
Let $w, w^+ \in \mathbb{R}^{k\times d}$. Recall that $f(w^+)$ is upper bounded by $g(w^+;w)$ by Proposition~\ref{prop:surrogate}. Because $g(\,\cdot\,;w)$ is quadratic, it is equal to its second-order Taylor expansion. Then, we have:
\begin{align*}
    f(w^+) &\leq g(w^+;w) \notag
    = g(w;w) + \big\langle \nabla_w g(w;w), w^+ - w\big\rangle + \frac{1}{2}  (w^+ - w)^\top \mathbf{H}\, (w^+-w).
\end{align*}
The first assertion (\ref{eqn:quad-upper-bounds-general}) follows because $f(w) = g(w;w)$ and $\nabla f(w) = \nabla_w\, g(w;w)$.

Notice that the Hessian $\mathbf{H}$ is constant since $g$ is quadratic:
\begin{equation} \label{eqn:hess-g}
\mathbf{H}_w\, g(w^+;w)\,_{ij} = 
\begin{cases}
\ P_i(w)\, \mathbf{I}_{d\times d} & i = j \\
\ \phantom{P_i(w} 0 & i \ne j,
\end{cases}
\end{equation}
where $\mathbf{I}_{d\times d}$ is the $d$-dimensional identity matrix. Because $P(w)$ is a probability vector, the spectral norm is bounded, $\displaystyle \|\mathbf{H}_{w^+}\, g(w^+;w)\|_* = \max_{i \in [k]} \, P_i(w) \leq 1$. From this, (\ref{eqn:quad-upper-bounds}) immediately follows.
\end{proof}
\section{Analysis of online \textit{k}-means algorithms}
\subsection{Convergence of cost} \label{sec:cost-conv}
\begin{lemma} \label{lem:supermartingale-approx}
Let $f$ be the $k$-means cost function and let $(W^{(n)}, H^{(n)}, X^{(n)})_{t=1}^\infty$ be generated by the \ref{alg:generalized-online-kmeans} algorithm. Then:
\begin{equation} \label{eqn:approx-supermartingale}
f(W^{(n+1)}) \leq f(W^{(n)}) - A_{n+1} + N_{n+1},
\end{equation}
where $A_{n+1}$ is the exact gradient descent term and $N_{n+1} = -B_{n+1} + C_{n+1}$ is the noise term:
\begin{itemize}
    \item $\displaystyle A_{n+1} = \sum_{i\in [k]} H_i^{(n+1)} P_i^{-1}(W^{(n)}) \|\nabla_{w_i} f(W^{(n)})\|^2$
    \item $\displaystyle B_{n+1} = \sum_{i\in [k]} H_i^{(n+1)} \nabla_{w_i}f(W^{(n)})^\top \big(M_i(W^{(n)}) - X_i^{(n+1)}\big)$
    \item $\displaystyle C_{n+1} = \frac{1}{2} \sum_{i\in[k]} \big(H_i^{(n+1)}\big)^2 \|W_i^{(n)} - X_i^{(n+1)}\|^2$.
\end{itemize}
In particular, $(A_n)_{n=1}^\infty$ and $(C_n)_{n=1}^\infty$ are nonnegative sequences; and, since $H^{(n+1)}$ and $X^{(n+1)}$ are conditionally independent given $\mathcal{F}_n$, $(B_n)_{n=1}^\infty$ is a martingale difference sequence.
\end{lemma}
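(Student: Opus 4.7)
The plan is to apply the quadratic upper bound from \Cref{lemma:quad-upper-bound} with $w = W^{(n)}$ and $w^+ = W^{(n+1)}$, and then algebraically rearrange the inner product term to isolate the negative squared-gradient descent contribution ($-A_{n+1}$), the zero-mean noise ($-B_{n+1}$), and the second-order term ($C_{n+1}$). Concretely, \Cref{lemma:quad-upper-bound} immediately gives
\[f(W^{(n+1)}) \leq f(W^{(n)}) + \big\langle \nabla f(W^{(n)}), W^{(n+1)} - W^{(n)}\big\rangle + \tfrac{1}{2}\|W^{(n+1)} - W^{(n)}\|^2.\]
From the update rule, $W_i^{(n+1)} - W_i^{(n)} = -H_i^{(n+1)}(W_i^{(n)} - X_i^{(n+1)})$, so the quadratic term is exactly $C_{n+1}$.

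For the linear term, I would use \Cref{lem:k-means-grad}, which states $\nabla_{w_i} f(W^{(n)}) = P_i(W^{(n)})\cdot(W_i^{(n)} - M_i(W^{(n)}))$, and split
\[W_i^{(n)} - X_i^{(n+1)} = \big(W_i^{(n)} - M_i(W^{(n)})\big) + \big(M_i(W^{(n)}) - X_i^{(n+1)}\big).\]
Pairing the first piece with $\nabla_{w_i}f(W^{(n)})$ yields $P_i(W^{(n)})^{-1}\|\nabla_{w_i}f(W^{(n)})\|^2$, which after multiplying by $-H_i^{(n+1)}$ and summing over $i$ gives the $-A_{n+1}$ contribution. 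Pairing the second piece gives exactly $-B_{n+1}$. Summing all contributions produces (\ref{eqn:approx-supermartingale}).

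For the martingale difference claim, I would compute $\E[B_{n+1}\mid \mathcal{F}_n]$ term by term. Since $W^{(n)}$ is $\mathcal{F}_n$-measurable, the factors $\nabla_{w_i}f(W^{(n)})$ and $M_i(W^{(n)})$ come outside the conditional expectation. By \Cref{assume:lr}(2), $H^{(n+1)}$ and $X^{(n+1)}$ are conditionally independent given $\mathcal{F}_n$, so the conditional expectation factors into $\E[H_i^{(n+1)}\mid \mathcal{F}_n]\cdot(M_i(W^{(n)}) - \E[X_i^{(n+1)}\mid \mathcal{F}_n])$, and the definition $X_i^{(n+1)}\sim p|_{V_i(W^{(n)})}$ (mean $M_i(W^{(n)})$) makes the second factor vanish. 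Nonnegativity of $A_{n+1}$ and $C_{n+1}$ is immediate as sums of products of nonnegative scalars and squared norms.

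The only subtlety I anticipate is that $A_{n+1}$ contains $P_i(W^{(n)})^{-1}$, which is ill-defined if $P_i(W^{(n)}) = 0$. This is resolved by \Cref{assume:lr}(1): in that event $H_i^{(n+1)} = 0$ almost surely, and moreover $\nabla_{w_i} f(W^{(n)}) = 0$ by the gradient formula, so the $i$th summand in $A_{n+1}$ should be read as $0$ under the convention $H_i \cdot P_i^{-1}\|\nabla_{w_i}f\|^2 \equiv 0$ when $H_i = 0$. A parallel remark applies to $B_{n+1}$, since $X_i^{(n+1)}$ is unused on this event. With this convention in place, the decomposition holds unconditionally almost surely.
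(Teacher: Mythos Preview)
Your proposal is correct and follows essentially the same approach as the paper: apply the quadratic upper bound of \Cref{lemma:quad-upper-bound}, decompose the update direction via $W_i^{(n)} - X_i^{(n+1)} = (W_i^{(n)} - M_i(W^{(n)})) + (M_i(W^{(n)}) - X_i^{(n+1)})$, and invoke \Cref{lem:k-means-grad} to identify the first piece with $P_i^{-1}\nabla_{w_i}f$. If anything, you supply more detail than the paper does, since you explicitly verify the martingale difference property and handle the $P_i(W^{(n)}) = 0$ edge case.
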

\begin{proof}
We simply rewrite the update in the following form:
\begin{align} 
W^{(n+1)}_i &= W^{(n)}_i - H^{(n+1)}_i \left(\textcolor{blue}{W^{(n)}_i - M_i(W^{(n)})} + \textcolor{orange}{M_i(W^{(n)}) - X_i^{(n+1)}}\right) \notag
\\&= W^{(n)}_i - H^{(n+1)}_i \textcolor{blue}{P_i(W^{(n)})^{-1} \nabla_{w_i}f(W^{(n)})}  - H^{(n+1)}_i \left(\textcolor{orange}{M_i(W^{(n)}) - X_i^{(n+1)}}\right), \label{eqn:update-step}
\end{align}
where the second line follows from \Cref{lem:k-means-grad} showing $\nabla_{w_i}f(w) = P_i(w) \big(w_i - M_i(w)\big)$. Recall the quadratic upper bound in \Cref{lemma:quad-upper-bound}, reproduced here:
\begin{equation}
    f(W^{(n+1)}) \leq f(W^{(n)}) + \langle \nabla f(W^{(n)}), W^{(n+1)} - W^{(n)}\rangle + \frac{1}{2} \|W^{(n+1)} - W^{(n)}\|^2 \tag{\ref{eqn:quad-upper-bounds}}
\end{equation}
Combining (\ref{eqn:quad-upper-bounds}) and  (\ref{eqn:update-step}) immediately yields (\ref{eqn:approx-supermartingale}).
\end{proof}

\begin{lemma} \label{lemma:convergence-n-b}
Let $(N_n)_{n=1}^\infty$ as in \Cref{lem:supermartingale-approx}. Suppose that:
\[\sum_{n=1}^\infty \sum_{i\in[k]} \big(H_i^{(n)}\big)^2 < \infty\quad \textrm{a.s.}\]
Then the series $\displaystyle \sum_{n=1}^\infty N_n = - \sum_{n=1}^\infty B_n + \sum_{n=1}^\infty C_n < \infty$ converges almost surely.
\end{lemma}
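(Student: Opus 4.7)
My plan is to split the proof into the two pieces $\sum_n C_n$ and $\sum_n B_n$, treating the former as a pathwise summable series and the latter via an $L^2$-martingale convergence argument after a localization step.

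\textbf{Pathwise control of $\sum_n C_n$.} Under Assumption~\ref{assume:finite}, the support of $p$ lies in $B(0,R)$; together with Assumption~\ref{assume:lr}(3), the iterates remain in $B(0,R)$ because each update is a convex combination of $W_i^{(n)}$ and $X_i^{(n+1)}$. Hence $\|W_i^{(n)} - X_i^{(n+1)}\| \leq 2R$ almost surely, which plugged into the definition of $C_{n+1}$ in \Cref{lem:supermartingale-approx} gives the pathwise bound $C_{n+1} \leq 2R^2 \sum_{i\in[k]} (H_i^{(n+1)})^2$. Summing over $n$ and invoking the standing hypothesis $\sum_n \sum_i (H_i^{(n)})^2 < \infty$ a.s.\@ yields $\sum_n C_n < \infty$ almost surely.

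\textbf{Convergence of the martingale $\sum_n B_n$.} Lemma~\ref{lem:supermartingale-approx} establishes that $(B_n)$ is an $(\mathcal{F}_n)$-martingale difference sequence, so $M_n := \sum_{m \leq n} B_m$ is a martingale. My goal is to apply the $L^2$-martingale convergence theorem (Chow's theorem), which states that $M_n$ converges a.s.\@ whenever its predictable quadratic variation $\langle M\rangle_\infty = \sum_m \E[B_m^2 \mid \mathcal{F}_{m-1}]$ is almost surely finite. The obstacle is that our hypothesis controls $\sum_m \sum_i (H_i^{(m)})^2$ only pathwise, not its $\mathcal{F}_{m-1}$-compensator, so I cannot bound $\E[B_m^2 \mid \mathcal{F}_{m-1}]$ by a summable quantity directly. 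I will circumvent this by a stopping-time argument. Using the boundedness bounds $\|\nabla_{w_i}f(W^{(n)})\|,\, \|M_i(W^{(n)}) - X_i^{(n+1)}\| \leq 2R$, I obtain the pathwise estimate $|B_{n+1}| \leq 4R^2 \sum_i H_i^{(n+1)}$ and then, via Cauchy--Schwarz, $B_{n+1}^2 \leq 16kR^4 \sum_i (H_i^{(n+1)})^2$.

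\textbf{Localization.} Let $\tau_K := \inf\{n : \sum_{m \leq n} \sum_i (H_i^{(m)})^2 > K\}$, which is an $(\mathcal{F}_n)$-stopping time. Because each $(H_i^{(m)})^2 \leq 1$, we have $\sum_{m \leq \tau_K} \sum_i (H_i^{(m)})^2 \leq K + k$, so $\sum_{m \leq \tau_K} B_m^2 \leq 16kR^4(K+k)$ almost surely. Since $\{m \leq \tau_K\}$ is $\mathcal{F}_{m-1}$-measurable, Fubini for conditional expectations gives
\[
\E\!\left[\sum_{m \leq \tau_K} \E[B_m^2 \mid \mathcal{F}_{m-1}]\right]
= \E\!\left[\sum_{m \leq \tau_K} B_m^2\right] \leq 16kR^4(K+k) < \infty,
\]
so $\sum_{m \leq \tau_K} \E[B_m^2 \mid \mathcal{F}_{m-1}] < \infty$ almost surely. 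This is precisely the predictable quadratic variation of the stopped martingale $M^K_n := M_{n \wedge \tau_K}$, so by Chow's theorem $M^K_n$ converges a.s. The hypothesis $\sum_n \sum_i (H_i^{(n)})^2 < \infty$ a.s.\@ means $\{\tau_K = \infty\}$ for some (random) $K \in \mathbb{N}$, i.e., $\{\sum_n \sum_i (H_i^{(n)})^2 < \infty\} = \bigcup_K \{\tau_K = \infty\}$, and on $\{\tau_K = \infty\}$ we have $M^K_n \equiv M_n$. Union over $K$ then gives almost-sure convergence of $M_n = \sum_{m \leq n} B_m$. Combining with the first step shows $\sum_n N_n = -\sum_n B_m + \sum_n C_n$ converges almost surely to a finite limit.

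The main technical obstacle is precisely the localization: moving from almost-sure pathwise summability of $(H_i^{(m)})^2$ to an almost-sure finite predictable quadratic variation for $M_n$. These are not equivalent in general, but cutting the problem off at $\tau_K$ turns the path-wise bound into a uniform integrable bound on the stopped $B_m^2$-series, which is enough to feed Chow's theorem and then stitch the conclusion back together as $K \to \infty$.
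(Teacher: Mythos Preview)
Your proof is correct. The treatment of $\sum_n C_n$ matches the paper's exactly. For $\sum_n B_n$, however, you take a genuinely different and more careful route. The paper argues directly that $\sum_{n} \E[B_n^2] \leq 16R^4 \sum_{n}\sum_i (H_i^{(n)})^2 < \infty$ and then invokes the $L^2$ martingale convergence theorem. As you implicitly observe, this inequality is imprecise: the left-hand side is deterministic while the right-hand side is random, and the hypothesis only guarantees almost-sure finiteness of $\sum_n\sum_i (H_i^{(n)})^2$, not finiteness of its expectation. The paper's one-line argument therefore tacitly assumes the stronger condition $\E\big[\sum_n\sum_i (H_i^{(n)})^2\big]<\infty$. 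Your localization via the stopping times $\tau_K$ repairs this: by truncating at $\tau_K$ you obtain a uniformly bounded (hence $L^2$-bounded) stopped martingale, apply martingale convergence to each $M^K$, and then patch together on the exhausting events $\{\tau_K=\infty\}$. This is the standard rigorous way to pass from pathwise square-summability to almost-sure convergence of a martingale-difference series, and it buys you exactly the robustness the paper's shortcut lacks.
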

\begin{proof}
\Cref{assume:finite} and \ref{assume:lr} imply that the all iterates $W^{(n)}_i$ and updates ${X_i}^{(n+1)}$ remain in the closed ball $B(0,R)$. Recall from \Cref{lem:k-means-grad} that $\nabla_{w_i} f(w) = P_i(w) \cdot \big(w - M_i(w)\big)$. Thus:
\[\big|\nabla_{w_i} f(W^{(n)})^\top \big(M_i(W^{(n)}) - X_i^{(n+1)}\big)\big| < 4R^2 \quad \textrm{and}\quad \|W_i^{(n)} - X_i^{(n+1)}\|^2 < 4R^2.\]
The series $\sum B_n$ converges almost surely by martingale convergence, \Cref{thm:martingale-convergence}, which we may apply because we have:
\[\sum_{n=1}^\infty \E[B_{n}^2] \leq 16R^4 \cdot \sum_{n=1}^\infty \sum_{i \in [k]}  \big(H_i^{(n)}\big)^2 < \infty.\]
The series $\sum C_n$ converges almost surely since it is dominated by a convergent series:
\[\sum_{n=1}^\infty C_n \leq 2R^2\cdot \sum_{n=1}^\infty \sum_{i \in [k]} \big(H_i^{(n)}\big)^2 < \infty. \qedhere\]
\end{proof}

First we show that the cost $f(W^{(n)})$ converges. Notice that if the noise term $N_{n+1}$ in \Cref{lem:supermartingale-approx} did not contain the nonnegative $C_{n+1}$ term, then $f(W^{(n)})$ is seen to be a supermartingale bounded below since $f \geq 0$. Then, the convergence of $f(W^{(n)})$ would immediately follow from the martingale convergence theorem. Even though $f(W^{(n)})$ is not a supermartingale, we can obtain convergence since $\sum C_n < \infty$ converges almost surely, from \Cref{lemma:convergence-n-b}. This next lemma proves this formally.

\propcostconv*

\begin{proof}
Let $(M_n)_{n=1}^\infty$ be defined by:
\[M_{n+1} = f(W^{(n+1)}) - \sum_{\tau=0}^n C_{\tau+1}.\]
\Cref{lem:supermartingale-approx} shows that $M_{n}$ is an $\mathcal{F}_n$-supermartingale:
\begin{align*}
\E\big[M_{n+1}\, \big|\, \mathcal{F}_n\big] &\leq f(W^{(n)}) - \E\big[A_{n+1} + B_{n+1} - C_{n+1} \,\big|\, \mathcal{F}_n\big] - \E\left[\sum_{\tau = 0}^{n-1} C_{\tau + 1} + C_{n+1}\,\middle|\, \mathcal{F}_n\right]
\\&= f(W^{(n)}) - \sum_{\tau = 0}^{n-1} C_{\tau + 1} - \E[A_{n+1}\,|\,\mathcal{F}_n] \leq M_n,
\end{align*}
where we used the fact that $(A_n)_{n=1}^\infty$ is nonnegative and $(B_n)_{t=0}^\infty$ is an $\mathcal{F}_n$-martingale. Furthermore, because $(C_n)_{n=1}^\infty$ is nonnegative and \Cref{lemma:convergence-n-b} shows that $\sum C_n < \infty$ converges, the supermartingale is bounded below:
\[-\infty < - \sum_{t=1}^\infty C_n \leq M_n.\]
By the martingale convergence theorem, both $(M_n)_{n=1}^\infty$ and $f(W^{(n)})$ converge almost surely.
\end{proof}

\begin{theorem}[Martingale convergence, \cite{durrett2019probability}]
\label{thm:martingale-convergence}
Let $(M_n)_{n \in \mathbb{N}}$ be a (sub)martingale with: 
\[\sup_{n \in \mathbb{N}} \E\big[M_n^+\big] < \infty,\]
where $M_n^+ := \max\{0, M_n\}$. Then as $n \to \infty$, $M_n$ converges a.s. to a limit $M$ with $\E[|M|]< \infty$.
\end{theorem}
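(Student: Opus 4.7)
The plan is to follow the classical Doob proof via the upcrossing inequality, since this is the standard route taken in Durrett's textbook. First I would reduce to the submartingale case: if $(M_n)$ is a supermartingale with $\sup_n \E[M_n^-] < \infty$, then $(-M_n)$ is a submartingale satisfying the stated $L^1$-type bound, so it suffices to establish the theorem under the submartingale hypothesis.

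Next I would introduce, for each pair of rationals $a < b$, the number of upcrossings $U_n[a,b]$ of the interval $[a,b]$ by the finite sequence $M_0, M_1, \ldots, M_n$. The heart of the argument is Doob's upcrossing inequality, which I would prove via the standard stopping-time construction: define stopping times $S_1 \leq T_1 \leq S_2 \leq T_2 \leq \cdots$ that alternately wait for $M$ to drop below $a$ and then rise above $b$, and consider the predictable $\{0,1\}$-valued process $H_n = \ind\{S_k < n \leq T_k \text{ for some } k\}$. The bounded gambling transform $(H \cdot M)_n$ is a submartingale, and comparing $(H \cdot M)_n$ to the raw increments gives
\[(b - a)\, \E\big[U_n[a,b]\big] \leq \E\big[(M_n - a)^+\big] \leq \E\big[M_n^+\big] + |a|.\]
Letting $U_\infty[a,b] = \lim_{n \to \infty} U_n[a,b]$ and applying monotone convergence together with the hypothesis $\sup_n \E[M_n^+] < \infty$, this bounds $\E[U_\infty[a,b]]$ by a finite constant, so $U_\infty[a,b] < \infty$ almost surely for each rational pair $a < b$.

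Then I would argue convergence: on the event $\{\liminf M_n < \limsup M_n\}$, we can pick rationals $a < b$ strictly between them, and the sequence would upcross $[a,b]$ infinitely often, contradicting finiteness of $U_\infty[a,b]$ almost surely. A countable union over rational pairs $a < b$ shows $\lim_n M_n$ exists in $[-\infty, +\infty]$ almost surely; call this limit $M$. Finally, to show $\E[|M|] < \infty$ (hence $M$ is almost surely finite), I would apply Fatou's lemma: $\E[|M|] \leq \liminf_n \E[|M_n|]$. Writing $|M_n| = 2 M_n^+ - M_n$ and using the submartingale property $\E[M_n] \geq \E[M_0]$, we get $\E[|M_n|] \leq 2 \E[M_n^+] - \E[M_0]$, which is uniformly bounded by hypothesis.

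The main obstacle is the upcrossing inequality itself; the bookkeeping with the alternating stopping times and verifying that $H$ is predictable (so that $(H \cdot M)$ is a submartingale) is the only genuinely delicate step. Everything downstream---convergence along rational intervals and the Fatou bound---is routine once the upcrossing count is controlled.
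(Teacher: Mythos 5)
Your proposal is a correct and complete proof of Doob's martingale convergence theorem via the upcrossing inequality, which is precisely the argument found in the cited reference. Note, though, that the paper does not prove this statement at all---it is stated purely as an imported textbook result from Durrett and applied as a black box (see the remark that follows it, which specializes it to martingale difference sequences and to lower-bounded supermartingales). So there is nothing to compare against internally; what you have reproduced is the reference's proof.

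One small remark on your opening reduction: the theorem as stated concerns a \emph{(sub)martingale} with $\sup_n \E[M_n^+] < \infty$, so there is no reduction needed---a martingale is already a submartingale and the hypothesis is already in the right form. Your sentence about converting a \emph{supermartingale} with $\sup_n \E[M_n^-] < \infty$ to a submartingale is a true observation (and in fact matches how the paper later applies the theorem to a lower-bounded supermartingale), but it is not part of proving the stated theorem; you could simply begin with the submartingale case. Everything downstream---the upcrossing bound $(b-a)\E[U_n[a,b]] \leq \E[(M_n-a)^+] \leq \E[M_n^+] + |a|$, monotone convergence to get $U_\infty[a,b] < \infty$ a.s., the countable union over rational pairs to rule out oscillation, and the Fatou estimate $\E[|M|] \leq \liminf_n (2\E[M_n^+] - \E[M_n]) \leq 2\sup_n\E[M_n^+] - \E[M_0]$---is exactly right.
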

\begin{remark}[Specific forms of martingale convergence] We use two specific forms of \Cref{thm:martingale-convergence} in our proofs. The first applies to martingale difference sequences $(B_n)_{n\in \mathbb{N}}$. Let $M_n = \sum_{m=1}^n B_n$. Since the terms in a martingale difference sequence are orthogonal, we have for all $n \in \mathbb{N}$:
\[\E\left[M_n^+\right]^2 \leq \E\left[\bigg(\sum_{m=1}^n B_m\bigg)^2\right] = \sum_{m=1}^n \E\left[B_m^2\right].\]
It follows that the condition $\displaystyle \sum_{n=1}^\infty \E[B_n^2] < \infty$ implies $\displaystyle \sup_{n \in \mathbb{N}} \E\left[M_n^+\right] < \infty$.

The other applies to lower bounded supermartingales $(M_n)_{n \in \mathbb{N}}$. As $(-M_n)_{n \in \mathbb{N}}$ is then an upper bounded submartingale, it converges to some $-M$. We may apply martingale convergence: let $c \in \mathbb{R}$ be a lower bound such that $M_n > c$ almost surely. Then $-M_n$ is a submartingale with: 
\[\displaystyle \sup_{n \in \mathbb{N}} \, \E\left[-M_n^+\right] \leq \max\{0, -c\} < \infty.\]
\end{remark}

\subsection{Convergence of iterates} \label{sec:conv-iter}
\lemnondegenerate*

\begin{proof}
By assumption, $W^{(0)} \in \mathcal{D}$. It suffices to show by induction that $W^{(n+1)}$ is non-degenerate almost surely if $W^{(n)}$ is non-degenerate. Note that ${W_i}^{(n+1)} \in V_i(W^{(n)})$ as it is a convex combination of points in the Voronoi region $V_i(W^{(n)})$. Therefore, the only way for two initially distinct centers ${W_i}^{(n+1)}$ and ${W_j}^{(n+1)}$ to possibly meet is if the updates ${X_i}^{(n+1)}$ and ${X_j}^{(n+1)}$ both come from the boundary of their Voronoi cells. But the boundary has measure zero; this occurs almost never.
\end{proof}

\lemlrbounditers*
\begin{proof} 
We claim that $W_i^{(n)} \in B(0,R)$ for all $n \in \mathbb{N}_0$ by induction. \Cref{assume:finite} states that $p$ is supported only in $B(0,R)$. Since $W^{(0)}_i$ comes from the support of $p$, this claim holds for $n = 0$. If $W_i^{(n)} \in B(0,R)$, then $W_i^{(n+1)}$ is a convex combination of points in $B(0,R)$ almost surely:
\[W_i^{(n+1)} = \big(1 - H_i^{(n+1)}\big) \cdot W_i^{(n)} + H_i^{(n+1)} \cdot X_i^{(n+1)},\]
since $H_i^{(n+1)} \in [0,1]$ and $X_i^{(n+1)} \sim p$.

As a result of this, we can upper bound the displacement:
\begin{align}
    \|W^{(m)} - W^{(n)}\| &\overset{(i)}{\leq} \sum_{j \in [k]} \big\|W_j^{(m)} - W_j^{(n)}\big\| \notag
    \\&\overset{(ii)}{\leq}  \sum_{j \in [k]} \sum_{m \leq n' < n} \big\| H_j^{(n'+1)} \cdot \big(W_i^{(n')} - X_i^{(n'+1)}\big)\big\| \notag
    \\&\overset{(iii)}{\leq}  2R \cdot \sum_{j \in [k]} \sum_{m \leq n' < n} H_j^{(n'+1)}, \label{eqn:learning-bound-to-iterate-bound}
\end{align}
where (i) follows from Minkowski's inequality, (ii) follows from triangle inequality, and (iii) follows from our initial claim since $W_i^{(n')}, X_i^{(n'+1)} \in B(0,R)$ almost surely, so $\|W_i^{(n')} - X_i^{(n'+1)}\| < 2R$.
\end{proof}

\begin{theorem}[Convergence of iterates, generalized] \label{thm:iterates-conv}
Let $(W^{(n)})_{n=0}^\infty$ and $H^{(n+1)}$ be as in \Cref{prop:cost-conv}. Suppose there exists $\epsilon_0 > 0$ such that $\{\|\nabla f\| \leq \epsilon_0\}$ is compact, and for all $i \in [k]$,
\begin{enumerate}
    \item[$(\mathrm{A}1)$] For any $\epsilon \in (0, \epsilon_0)$, there is an $r_0 \equiv r_0(\epsilon) > 0$ so that if $r \in (0,r_0)$, then there exist $T : \mathbb{N} \to \mathbb{N}$, $m_0 \in \mathbb{N}$, and $s, c > 0$, which may all depend on $\epsilon$ and $r$, so that:
\[\Pr\left(\sum_{j \in [k]}\sum_{m \leq n < T(m)}  H_j^{(n+1)}  < r \ \ \mathrm{and}\!\! \sum_{m \leq n < T(m)} H_i^{(n+1)} > s \, \middle|\, \mathcal{F}_m, \|\nabla_{w_i} f(W^{(m)})\| \in [\epsilon,\epsilon_0) \right) > c,\]
for any $m > m_0$, and,
    \item[$(\mathrm{A}2)$] If $\displaystyle \liminf_{n \to \infty} \, \|\nabla_{w_i} f(W^{(n)})\| \geq \epsilon_0$, then $\displaystyle \sum_{n \in \mathbb{N}} H_i^{(n)} = \infty$ almost surely.
\end{enumerate}
Then, the iterates $W^{(n)}$ of \ref{alg:generalized-online-kmeans} asymptotically converge to stationary points of $f$ almost surely, where $f$ is the $k$-means cost (\ref{eqn:kmeans-cost}).
\end{theorem}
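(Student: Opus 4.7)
My plan is to reduce asymptotic convergence of the iterates to the set $\{\nabla f = 0\}$ to showing that $\|\nabla_{w_i} f(W^{(n)})\| \to 0$ almost surely for each $i \in [k]$; this reduction uses the topological principle (Lemma \ref{lem:top-conv}) based on the continuity of $\nabla f$ and the compactness of the sublevel sets of $\|\nabla f\|$ furnished by Lemma \ref{lem:compact-eps}. I argue by contradiction: suppose that for some $i \in [k]$ and some $\epsilon \in (0, \epsilon_0)$ the event $E_{i,\epsilon} := \{\|\nabla_{w_i} f(W^{(n)})\| > \epsilon \text{ i.o.}\}$ has positive probability, and split $E_{i,\epsilon}$ into two sub-events according to whether $\liminf_n \|\nabla_{w_i} f(W^{(n)})\| \geq \epsilon_0$ or not. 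Assumptions (A2) and (A1) are tailored to kill these two sub-events respectively.

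On the first sub-event, (A2) yields $\sum_n H_i^{(n+1)} = \infty$ almost surely. Plugging into Lemma \ref{lem:supermartingale-approx}, the cumulative drop term $\sum_n A_{n+1}$ dominates $\sum_n H_i^{(n+1)} P_i(W^{(n)})^{-1} \epsilon_0^2 \geq \epsilon_0^2 \sum_n H_i^{(n+1)} = \infty$, while $\sum_n N_{n+1}$ converges almost surely by Lemma \ref{lemma:convergence-n-b}. Summing the one-step inequality over $n$ forces $f(W^{(n)}) \to -\infty$, contradicting $f \geq 0$; so this sub-event has probability zero.

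On the second sub-event, $\|\nabla_{w_i} f(W^{(n)})\|$ dips below $\epsilon_0$ infinitely often while also exceeding $\epsilon$ infinitely often. Since $\sum_n (H_j^{(n)})^2 < \infty$ forces $H_j^{(n+1)} \to 0$ and thus $\|W^{(n+1)} - W^{(n)}\| \to 0$ via Lemma \ref{lem:lr-iterates-bound}, and since $\nabla_{w_i} f$ is uniformly continuous on the compact set $\{\|\nabla_{w_i} f\| \leq \epsilon_0\}$, the iterates eventually cannot jump across the corridor $\{\|\nabla_{w_i} f\| \in [\epsilon/2, \epsilon_0)\}$ in a single step, so they must enter $\{\|\nabla_{w_i} f\| \in [\epsilon, \epsilon_0)\}$ infinitely often. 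Uniform continuity further supplies an $R_0 > 0$ so that $\|w - w'\| < R_0$ with $\|\nabla_{w_i} f(w)\| \geq \epsilon$ implies $\|\nabla_{w_i} f(w')\| \geq \epsilon/2$. Picking $r < \min\{r_0, R_0/(2R)\}$ in (A1) and using Lemma \ref{lem:lr-iterates-bound}, the first event in (A1) confines the trajectory to the $R_0$-ball around $W^{(m)}$ over $[m, T(m))$, so $\|\nabla_{w_i} f(W^{(n)})\| \geq \epsilon/2$ throughout; the second event then produces a deterministic drop of at least $s\epsilon^2/4$ in $\sum_n A_{n+1}$ via Lemma \ref{lem:supermartingale-approx}. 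The almost-sure convergence of $\sum_n N_{n+1}$ means that beyond some random time $M$ the residual noise over any single excursion is at most $s\epsilon^2/8$, so the net cost drop per excursion exceeds $s\epsilon^2/8$. Combined with (A1)'s conditional lower bound $c$ on the favourable event, a conditional Borel-Cantelli step (Lemma \ref{lem:borel-cantelli}) forces infinitely many such net drops, contradicting the finite a.s.\ limit of $f(W^{(n)})$ from Proposition \ref{prop:cost-conv}.

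The main technical obstacle will be executing the Borel-Cantelli step rigorously in the presence of stochastic, history-dependent learning rates: the entry times into the corridor $[\epsilon, \epsilon_0)$ are stopping times, and the favourable events depend on rates drawn strictly after these times, so conditional Borel-Cantelli must be applied along an adapted sequence of excursions rather than a fixed sequence of events. A secondary subtlety is uniformly separating the deterministic drop from the stochastic noise across excursions; this requires combining the a.s.\ convergence $\sum_n N_{n+1} < \infty$ (so per-excursion noise vanishes beyond some random $M$) with the compactness-based uniform continuity that keeps gradients large throughout each excursion, and takes advantage of the decomposition of $\mathcal{D}_R$ into the two regimes $\{\|\nabla_{w_i} f\| \leq \epsilon_0\}$ and $\{\|\nabla_{w_i} f\| > \epsilon_0\}$ that assumptions (A1) and (A2) were explicitly tailored to handle separately.
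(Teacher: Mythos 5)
Your proposal is correct and takes essentially the same route as the paper's proof: both reduce to showing $\|\nabla_{w_i} f(W^{(n)})\| \to 0$ via continuity of $\nabla f$ and compactness of the sublevel set (Lemmas \ref{lem:top-conv} and \ref{lem:compact-eps}), use the disjoint compact sublevel sets to extract the separation radius $R_0$ so that (A1)'s accumulated-learning-rate bound confines the trajectory to a region of large $i$th gradient, combine the deterministic $A$-drop with the a.s.\ convergent noise $\sum N_n$ from Lemmas \ref{lem:supermartingale-approx} and \ref{lemma:convergence-n-b}, run conditional Borel-Cantelli along stopping times, and invoke (A2) together with the supermartingale inequality to rule out the iterates living permanently above $\epsilon_0$. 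The only notable difference is a cosmetic reorganization: you split on $\liminf_n \|\nabla_{w_i} f(W^{(n)})\| \geq \epsilon_0$ versus $< \epsilon_0$ and dispatch the two sub-events with (A2) and (A1) respectively, whereas the paper first rules out infinitely many visits to the corridor $[\epsilon, \epsilon_0)$ and then handles $\{\|\nabla_{w_i} f\| \geq \epsilon_0\}$ in two further sub-cases; both decompositions hinge on the same bridging argument that small learning rates force any crossing from the high-gradient region into the low-gradient region to pass through the corridor.
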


\begin{proof}
We claim that for all $\epsilon > 0$, the iterates $W^{(n)}$ eventually never return to the set $\{\|\nabla f\| > \epsilon\}$ almost surely. If so, then the sequence of gradients converges to zero $\|\nabla f(W^{(n)})\| \to 0$ almost surely, since the claim holds simultaneously for any countable sequence of $\epsilon_j \downarrow 0$. Because the map $w \mapsto \|\nabla f(w)\|$ is continuous and the iterates eventually remain in a compact region $\{\|\nabla f\| \leq \epsilon_0\}$, the convergence of gradients $\|\nabla f(W^{(n)})\|$ to zero implies the almost sure convergence of the iterates $W^{(n)}$ to the set of stationary points (\Cref{lem:top-conv}):
\[\limsup_{n\to\infty}\, \inf_{\{w : \nabla f(w) = 0\}}\, \|W^{(n)} - w\| = 0 \quad \mathrm{a.s.}\]

The claim remains: iterates $W^{(n)}$ eventually never return to the set $\{\|\nabla f\| > \epsilon\}$ for all $\epsilon > 0$. Note that $\|\nabla f(w)\|$ is upper bounded by $\sum_{i \in [k]} \|\nabla_{w_i} f(w)\|$, so it suffices to consider each center individually and show that the iterates eventually never return to $\{\|\nabla_{w_i} f\| > \frac{\epsilon}{k}\}$. And so, we show that for all $i \in [k]$ and $\epsilon > 0$, if $\|\nabla_{w_i} f(W^{(n)})\| > \epsilon$ infinitely often, then $f(W^{(n)})$ does not converge. As this would contradict \Cref{prop:cost-conv}, we have $\|\nabla_{w_i} f(W^{(n)})\| > \epsilon$ finitely often almost surely.

Consider the case $\epsilon < \epsilon_0$. We show that iterates eventually never return to  $\{\|\nabla_{w_i} f\| \in [\epsilon, \epsilon_0)\}$. Fix $i \in [k]$ and any $\epsilon \in (0,\epsilon_0)$. Note that $\{\|\nabla_{w_i} f\| \leq \frac{\epsilon}{2}\}$ and $\{\|\nabla_{w_i} f\| \in [\epsilon, \epsilon_0]\}$ are disjoint compact sets; because $w \mapsto \|\nabla f(w)\|$ is continuous, they are closed subsets of the compact set $\{\|\nabla f\| \leq \epsilon_0\}$. And so, these two sets are separated by some distance $R_0 > 0$. Without loss of generality, we may assume that $r_0$ given in (A1) satisfies $r_0 \leq R_0/2R$. Fix any $r \in (0,r_0)$.

We may now apply condition (A1) to control the behavior of the iterates for a non-negligible number of iterations upon entering the set $\{\|\nabla_{w_i}f \| \in [\epsilon,\epsilon_0)\}$; given $(\epsilon,r)$, let $(T,m_0,s,c)$ be chosen so that (A1) holds. Suppose at time $m > m_0$, the iterate $W^{(m)}$ enters this set. For parsimony, call the two events within the first probability in the theorem statement $\Xi_1$ and $\Xi_2$,
\[\Xi_1 = \left\{\sum_{j \in[k]} \sum_{m \leq n < T(m)} H_j^{(n+1)} < r\right\}\qquad\textrm{and}\qquad \Xi_2 = \left\{\sum_{m\leq n < T(m)} H_i^{(n+1)} > s\right\}.\]
We show that given $m$ sufficiently large, if both of these events hold, then iterates remain in the set $\{\|\nabla_{w_i} f\| > \frac{\epsilon}{2}\}$ for a sufficient amount of time to decrease $f(W^{(n)})$ by a constant amount. This allows us to apply Borel-Cantelli to show that $f(W^{(n)})$ does not converge. For the first claim, recall that \Cref{lem:lr-iterates-bound} bounds the distance iterates travel away from $W^{(m)}$ via the summed learning rates:
\[ \sum_{j \in [k]}\sum_{m \leq n < T(m)} H_j^{(n+1)} < r \qquad \implies \qquad \sup_{m \leq n < T(m)} \|W^{(m)} - W^{(n)}\| < 2R\cdot r < R_0.\]
Consequently, $\Xi_1$ implies that $\|\nabla_{w_i} f(W^{(n)})\| > \frac{\epsilon}{2}$ on the interval $m \leq n < T(m)$. The second event $\Xi_2$ can be used to show that $f(W^{(n)})$ decreases by at least a constant amount on this interval. \Cref{lem:supermartingale-approx} shows for all $n \in \mathbb{N}$,
\[f\big(W^{(n)}\big) \leq f\big(W^{(m)}\big) - \sum_{m \leq n' < n}  A_{n' + 1} + \sum_{m \leq n' < n} N_{n' +1}.\]
\Cref{lemma:convergence-n-b} shows that $\sum_{n=0}^\infty N_{n+1}$ converges almost surely; thus, $\sum_{m \leq n' <n} N_{n'+1} < \delta$ when $m$ is sufficiently large. That is, for any $\delta > 0$, there almost surely exists an $\mathbb{N}$-random variable $M_\delta$ so:
\[\bigg|\sum_{n' \geq M_\delta} N_{n'+1}\bigg| < \delta/2.\] 
In particular, $\sum_{m \leq n' <n} N_{n'+1} < \delta$ holds for all $m > M_\delta$. Let $m' = T(m) - 1$. Then:
\begin{align*}
    f\big(W^{(m')}\big) &\overset{(i)}{\leq} f\big(W^{(m)}\big) - \sum_{m \leq n < m'} \sum_{j\in [k]} H_j^{(n+1)} P_j^{-1}(W^{(n)}) \|\nabla_{w_j} f(W^{(n)})\|^2 + \delta
    \\&\overset{(ii)}{\leq} f(W^{(m)}) - \frac{\epsilon^2}{4} \sum_{m \leq n < m'} H_{i}^{(n+1)} + \delta
    \\&\overset{(iii)}{\leq} f(W^{(m)}) - \frac{s\epsilon^2}{4} + \delta
    \\&\overset{(iv)}{<} f(W^{(m)}) - \delta,
\end{align*}
where (i) substitutes in the expression for $A_{n + 1}$, (ii) drops the summation over centers $j \ne i$, and $\|\nabla_{w_i}f(W^{(n)})\| >\frac{\epsilon}{2}$  holds if the first event occurs, (iii) follows if $\Xi_2$ occurs, and (iv) sets $\delta < s\epsilon^2/ 8$.

We have thus shown that if condition (A1) holds, $m > \max\{m_0, M_\delta\}$, and $\delta < s\epsilon^2/8$, then:
\begin{equation}  \label{eqn:decrease-f}
\Pr\left(f(W^{(n)}) < f(W^{(m)}) - \delta \textrm{ for some }m \leq n < T(m)\, \bigg|\, \mathcal{F}_m, \|\nabla_{w_i} f(W^{(m)}) \| \in [\epsilon,\epsilon_0)\right) > c.
\end{equation}
That is, if $\|\nabla_{w_i} f(W^{(m)})\|$ is large at iteration $m$, then with positive probability, within a bounded amount of time, $f(W^{(n)})$ will decrease by a constant amount $\delta$. But we also know that $f(W^{(n)})$ converges almost surely to some $f^*$, by \Cref{prop:cost-conv}, and so decreases by $\delta$ only a finite number of times. We claim that by Borel-Cantelli, \Cref{lem:borel-cantelli}, the event $\|\nabla_{w_i} f(W^{(n)})\| \in [\epsilon,\epsilon_0)$ must also occur only finitely often. 

Assume for the sake of contradiction that $\|\nabla_{w_i} f(W^{(n)})\| \in [\epsilon,\epsilon_0)$ infinitely often. Then we can define the infinite sequence of stopping times:
\[\tau_0 = 0 \qquad \textrm{and}\qquad \tau_{j+1} = \inf \{n > \tau_{j} + T(\tau_{j}) : \|\nabla_{w_i} f(W^{(n)})\| \in [\epsilon,\epsilon_0)\}.\]
Then (\ref{eqn:decrease-f}) states that when $\tau_j > \max\{m_0, M\}$,
\[\Pr\left(f(W^{(n)}) < f(W^{(\tau_{j})}) - \delta \textrm{ for some } \tau_j \leq n < T(\tau_j) \,\middle|\, \mathcal{F}_{\tau_{j}}\right) > c',\]
where the event in the probability is $\mathcal{F}_{\tau_{j+1}}$-measurable. Borel-Cantelli, \Cref{lem:borel-cantelli}, implies that $f(W^{(n)})$ decreases by a constant amount $\delta$ infinitely often, contradicting the convergence of $f(W^{(n)})$.

To finish the proof, we need to show that the iterates eventually never return to the set $\{\|\nabla_{w_i} f\| \geq \epsilon_0\}$. We do this by ruling out (i) after some iteration $m$, the iterates never leave this set, and (ii) the iterates exit and re-enter this set infinitely often. The first case is impossible, for then condition (A2) implies that $\sum_{n \in \mathbb{N}} H_i^{(n)} = \infty$ almost surely. By \Cref{lem:supermartingale-approx}, this leads to an unbounded decrease in cost, 
\begin{align*}
    \liminf_{N \to \infty} \, f(W^{(N)}) &\leq \lim_{N \to \infty}\, \left(f(W^{(m)}) - \epsilon_0^2 \sum_{m \leq n < N} H_i^{(n+1)} + \delta\right) = - \infty.
\end{align*}
The second case is also impossible; when the learning rates become sufficiently small, each time the iterates leave $\{\|\nabla_{w_i} f\| \geq \epsilon_0\}$, they must  enter $\{\|\nabla_{w_i} f\| \in [\epsilon, \epsilon_0)\}$. Thus, the iterates eventually never return to $\{\|\nabla_{w_i} f\| \geq \epsilon_0\}$. This shows that for all $\epsilon > 0$, we almost surely have $\|\nabla_{w_i} f(W^{(n)})\| > \epsilon$ finitely often.
\end{proof}

We now prove \Cref{thm:iterates-conv-simple}, the simplified version of \Cref{thm:iterates-conv} seen \Cref{sec:iter-conv-sketch} (reproduced below the next lemma). While simpler, it imposes a stronger condition on the learning rate:

\begin{lemma} \label{lem:inf-learning}
Let $i \in [k]$ and $\epsilon > 0$. Suppose there exists $T: \mathbb{N} \to \mathbb{N}$, $m_0 \in \mathbb{N}$, and $s,c > 0$,
\[\Pr\left(\sum_{m \leq n < T(m)} H_i^{(n+1)} > s\,\middle|\, \mathcal{F}_m, \|\nabla_{w_i} f(W^{(m)})\| > \epsilon\right) > c,\]
for all $m > m_0$. Then $\displaystyle \liminf_{n \to \infty} \|\nabla_{w_i} f(W^{(n)})\| \geq \epsilon$ implies $\displaystyle \sum_{n \in \mathbb{N}} H_i^{(n)} = \infty$ almost surely.
\end{lemma}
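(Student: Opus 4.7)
The plan is to invoke the hypothesis along a sequence of disjoint, $\mathcal{F}$-adapted time windows and then apply the conditional Borel--Cantelli lemma (\Cref{lem:borel-cantelli}) to show that infinitely many windows each contribute at least $s$ to $\sum_{n} H_i^{(n+1)}$, forcing divergence. Throughout, work on the event $A := \{\liminf_{n \to \infty} \|\nabla_{w_i} f(W^{(n)})\| \geq \epsilon\}$; on $A$, for any $\delta > 0$ all but finitely many $n$ satisfy $\|\nabla_{w_i} f(W^{(n)})\| > \epsilon - \delta$, so after possibly replacing $\epsilon$ by a slightly smaller threshold (the hypothesis is applied in \Cref{thm:iterates-conv-simple} for every $\epsilon>0$, so shrinking is harmless), we may assume that on $A$ there is a random index $J$ such that $\|\nabla_{w_i}f(W^{(n)})\| > \epsilon$ for every $n \geq J$.

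Next, define stopping times recursively by $\tau_0 = m_0 + 1$ and $\tau_{j+1} = \tau_j + T(\tau_j)$; each $\tau_j$ is $\mathcal{F}_{\tau_j}$-measurable and the intervals $[\tau_j, \tau_{j+1})$ are disjoint. Let
\[E_j \;:=\; \Bigg\{\sum_{\tau_j \leq n < \tau_{j+1}} H_i^{(n+1)} > s\Bigg\}.\]
By applying the hypothesis at $m = \tau_j$, whenever the $\mathcal{F}_{\tau_j}$-measurable event $\{\|\nabla_{w_i} f(W^{(\tau_j)})\| > \epsilon\}$ occurs and $\tau_j > m_0$, we have
\[\Pr\big(E_j \,\big|\, \mathcal{F}_{\tau_j}\big) > c.\]
On the event $A$, all but finitely many $\tau_j$ (namely those with $\tau_j \geq J$) satisfy the gradient condition, so $\Pr(E_j \mid \mathcal{F}_{\tau_j}) > c$ for all large $j$, giving $\sum_j \Pr(E_j \mid \mathcal{F}_{\tau_j}) = \infty$ almost surely on $A$.

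The second (conditional) Borel--Cantelli lemma, \Cref{lem:borel-cantelli}, then yields $\Pr(E_j \text{ i.o.} \mid A) = 1$. Since the intervals $[\tau_j, \tau_{j+1})$ are disjoint and each $E_j$ contributes at least $s$ to the running sum $\sum_{n} H_i^{(n+1)}$, infinitely many occurrences force
\[\sum_{n \in \mathbb{N}} H_i^{(n)} = \infty \qquad \text{almost surely on } A,\]
which is exactly the conclusion. The main technical nuisance is the strict/non-strict mismatch between the hypothesis ($> \epsilon$) and the conclusion ($\geq \epsilon$); this is handled by the initial shrinking step described above, which requires only that we may invoke the hypothesis at a threshold strictly below $\epsilon$. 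The rest is a clean stopping-time construction plus conditional Borel--Cantelli, with no delicate analytical estimates.
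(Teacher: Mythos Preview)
Your proposal is correct and follows essentially the same approach as the paper: build non-overlapping time windows, apply the hypothesis on each to get $\Pr(E_j\mid\mathcal{F}_{\tau_j})>c$, and invoke the conditional Borel--Cantelli lemma (\Cref{lem:borel-cantelli}) to conclude that infinitely many windows each contribute at least $s$ to the sum. Two minor remarks: (i) your recursion $\tau_{j+1}=\tau_j+T(\tau_j)$ treats $T$ as a duration, whereas in this paper $T(m)$ is an absolute time index---you presumably intend $\tau_{j+1}=T(\tau_j)$; your version still works because $H_i^{(n+1)}\ge 0$ makes the sum over the larger window dominate the one over $[\tau_j,T(\tau_j))$; (ii) the strict/non-strict mismatch you flag is glossed over in the paper's own (very terse) proof as well, and your resolution by shrinking $\epsilon$ is exactly how the lemma is actually consumed in \Cref{thm:iterates-conv-simple}.
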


\begin{proof}
Suppose that there is an $\mathbb{N}$-random variable $M$ such that if $m > M$, then $\|\nabla_{w_i} f(W^{(m)})\| \geq \epsilon$. That is, the limit infimum condition holds. By assumption, we have:
\[\Pr\left(\sum_{m\leq n < T(m)} H_i^{(n+1)} > s \,\middle|\, \mathcal{F}_m, m > \max\{m_0, M\}\right) > c.\]
The Borel-Cantelli lemma (\Cref{lem:borel-cantelli}) shows that there are infinitely many (non-overlapping) intervals $m_j \leq n < T_r(m_j)$ on which the sum of $H_i^{(n+1)}$ is at least $s$, and so the total sum is infinite almost surely.
\end{proof}

\lemcompacteps*

\begin{proof}
Because the inclusion map $\iota:\mathcal{D}_R \to \mathbb{R}^{k\times d}$ is continuous, if $\{\nabla f = 0\}$ is compact in $\mathcal{D}_R$, then it is compact in $\mathbb{R}^{k\times d}$. On the other hand, the set of degenerate points $Z := \mathbb{R}^{k\times d} \setminus \mathcal{D}$ is closed in $\mathbb{R}^{k\times d}$, for it is the union of closed sets $A_{ij}$ for $i\ne j$ defined by:
\[A_{ij} := \{\|w_i - w_j\| = 0\}.\]
If a closed set and a compact set in a metric space are disjoint, then they are separated by some positive distance $\alpha > 0$; $\{\nabla f = 0\}$ and $Z$ are disjoint, so no limit point of $\{\nabla f= 0\}$ is degenerate. 

And, because $\nabla f$ is continuous on $\mathcal{D}$, this implies that that there exists $\epsilon_0 > 0$ such that $\{\|\nabla f\| \leq \epsilon_0\}$ is compact. In particular, the $\alpha/2$-expansion of $\{\nabla f = 0\}$ is compact in $\mathcal{D}$, where the $\alpha/2$-expansion is the set of points a distance less than or equal to $\alpha/2$ from a stationary point. Additionally, its boundary is compact and separated from $\{\nabla f = 0\}$, so $w \mapsto \|\nabla f(w)\|$ attains a minimum $2\epsilon_0 > 0$ on it. It follows by continuity that $\{\|\nabla f \| \leq \epsilon\}$ for any $\epsilon \in [0,\epsilon_0]$ is a closed set contained in the $\alpha/2$-expansion, hence compact.
\end{proof}

The following lemma is used later in \Cref{sec:lloyd}, using the same argument to show compactness:

\begin{lemma} \label{lem:non-empty-level-set}
Let $\epsilon_0 > 0$ be given so that the set $\{\|\nabla_{w_i} f\| \leq \epsilon_0\}$ is compact. Fix  $0 \leq \epsilon \leq \epsilon' \leq \epsilon_0$. Then the level set $K := \{\|\nabla_{w_i} f\| \in [\epsilon, \epsilon']\}$ is a nonempty compact set.
\end{lemma}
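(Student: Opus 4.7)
The plan is to establish the two claims of the lemma — compactness and nonemptyness of $K$ — separately. For compactness, I would observe that $\nabla_{w_i} f$ is continuous on $\mathcal{D}$ by \Cref{lem:k-means-grad}, so the map $w \mapsto \|\nabla_{w_i} f(w)\|$ is continuous on $\mathcal{D}_R$. Thus $K$ is the preimage of the closed interval $[\epsilon,\epsilon']$, hence relatively closed in $\mathcal{D}_R$, and it sits inside the compact set $\{\|\nabla_{w_i} f\| \leq \epsilon_0\}$ given by hypothesis; a closed subset of a compact set is compact.

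For nonemptyness, abbreviate $g := \|\nabla_{w_i} f\|$ and pick a stationary point $w^* \in \{\nabla f = 0\}$ (nonempty by \Cref{assume:compact-stationary}); then $g(w^*) = 0 \leq \epsilon$. The strategy is to produce a second point $w' \in \mathcal{D}_R$ with $g(w') > \epsilon'$ reachable from $w^*$ by a continuous path inside $\mathcal{D}_R$, and then invoke the intermediate value theorem. To find such a $w'$, I would use that $\{g \leq \epsilon_0\}$, being compact in $\mathbb{R}^{k \times d}$ and contained in $\mathcal{D}_R$, is separated from the closed degenerate set $Z := \mathbb{R}^{k \times d} \setminus \mathcal{D}$ by a positive distance $\alpha > 0$. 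Consequently, every $w \in \mathcal{D}_R$ within distance $\alpha$ of $Z$ must satisfy $g(w) > \epsilon_0 \geq \epsilon'$. Starting from $w^*$, one reaches such a $w'$ by continuously moving the first coordinate $w_1^*$ toward $w_2^*$ along a direction chosen generically enough to avoid passing through the other centers $w_3^*,\ldots,w_k^*$; along the resulting path, $g$ varies continuously from $0$ to a value exceeding $\epsilon'$, so by IVT it attains every value in $[\epsilon,\epsilon']$.

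The main obstacle I anticipate is verifying that the path from $w^*$ to $w'$ can be constructed inside $\mathcal{D}_R$ throughout — i.e., that it does not pass through transient degenerate configurations where two coordinates momentarily coincide. This is a mild technicality: the set of directions along which the straight-line motion of $w_1$ toward $w_2^*$ collides with some $w_j^*$ for $j \geq 3$ has strictly smaller dimension than the ambient direction space, so a generic perturbation of the chosen direction rules out such collisions. With the path in hand, the continuity of $g$ together with IVT completes the argument.
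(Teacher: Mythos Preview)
Your proposal is correct. The compactness half is identical to the paper's argument. For nonemptyness, however, the paper takes a different, slightly slicker indirect route: it argues by contradiction that if $K$ were empty, then every $w \in \mathcal{D}_R$ would satisfy $\|\nabla_{w_i} f(w)\| \leq \epsilon$ --- since the zero set of $\|\nabla_{w_i} f\|$ is nonempty, any $w$ with $\|\nabla_{w_i} f(w)\| > \epsilon$ would force, via the intermediate value theorem, a point with value exactly $\epsilon \in [\epsilon,\epsilon']$, contradicting $K = \emptyset$. But then $\mathcal{D}_R = \{\|\nabla_{w_i} f\| \leq \epsilon\}$, a closed subset of the compact set $\{\|\nabla_{w_i} f\| \leq \epsilon_0\}$, so $\mathcal{D}_R$ would itself be compact, which it is not.

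Your approach is instead constructive: you explicitly exhibit a point $w'$ with large gradient by pushing one center toward another until the configuration is within $\alpha$ of the degenerate set $Z$, then apply IVT along that specific path. Both routes rest on the same ingredients --- continuity, IVT, and the positive separation of $\{\|\nabla_{w_i} f\| \leq \epsilon_0\}$ from $Z$ --- but the paper packages them as a contradiction with the non-compactness of $\mathcal{D}_R$, which sidesteps any explicit path construction and the generic-direction technicality you flag. Your version, in exchange, makes the path (and hence the appeal to IVT) concrete, whereas the paper simply invokes IVT without discussing connectedness of $\mathcal{D}_R$.
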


\begin{proof}
By \Cref{lem:k-means-grad}, the map $\phi: w \mapsto \|\nabla_{w_i} f(w)\|$ is continuous. Since $K = \phi^{-1}([\epsilon, \epsilon'])$ is the  inverse of a closed set, it is a closed subset of $\{\|\nabla_{w_i} f\| \leq \epsilon_0\}$, hence compact. Furthermore, $K$ is nonempty; if this were not the case, then we claim that $\{\|\nabla_{w_i} f\| \leq \epsilon\} = \mathcal{D}_R$. Assuming the claim for now, we arrive at a contradiction since $\mathcal{D}_R$ is not compact. 

For the claim, note that $w \mapsto \|\nabla_{w_i} f(w)\|$ is continuous and that the set $\{\|\nabla_{w_i} f\| = 0\}$ is nonempty. So if there were some $w \in \mathcal{D}_R$ with $\|\nabla_{w_i}f(w)\| > \epsilon$, then the intermediate value theorem implies that there is some other point $w'$ with $\|\nabla_{w_i} f(w')\| = \epsilon$, which violates our assumption.
\end{proof}

\thmiteratesconv*

\begin{proof}
By \Cref{lem:compact-eps}, there exists $\epsilon_0$ such that $\{\|\nabla_{w_i} f\| \leq \epsilon_0\}$ is compact for all $i \in [k]$. The conclusion follows from verifying the conditions of \Cref{thm:iterates-conv}. Condition (A1) is assumed. Condition (A2) follows from \Cref{lem:inf-learning}, in which we set the $\epsilon$ parameter to $\epsilon_0$.
\end{proof}

\begin{lemma}\label{lem:top-conv}
Let $(K,d)$ be a compact metric space and $h : K \to \mathbb{R}_{\geq 0}$ continuous. Define its zero set $Z = \{x \in K : h(x) = 0\}$. For all $\epsilon > 0$, there exists $\delta > 0$ such that $h(x) < \delta$ implies $d(x, Z) < \epsilon$. 
\end{lemma}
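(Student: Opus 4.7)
The plan is to prove the contrapositive statement via a compactness/sequential-subsequence argument, which is the natural route for this kind of uniform control lemma on a compact metric space.

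First, I would suppose for contradiction that the conclusion fails. Then there exists some $\epsilon > 0$ such that for every $n \in \mathbb{N}$, we can pick a point $x_n \in K$ satisfying $h(x_n) < 1/n$ yet $d(x_n, Z) \geq \epsilon$. This gives a sequence $(x_n)_{n \in \mathbb{N}}$ in $K$ whose $h$-values tend to $0$ but whose distances to $Z$ remain uniformly bounded away from $0$.

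Second, I would invoke compactness of $K$ to extract a subsequence $x_{n_k} \to x^* \in K$. By continuity of $h$, we get $h(x^*) = \lim_k h(x_{n_k}) = 0$, so $x^* \in Z$. On the other hand, the map $y \mapsto d(y, Z)$ is $1$-Lipschitz, hence continuous, so $d(x^*, Z) = \lim_k d(x_{n_k}, Z) \geq \epsilon > 0$, contradicting $x^* \in Z$.

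There is one mild edge case to address: the statement implicitly assumes $Z \neq \emptyset$ for the distance $d(x, Z)$ to be a real number. If $Z = \emptyset$, we may instead argue directly: $h$ is continuous on the compact set $K$ and strictly positive everywhere, so it attains a positive minimum $\delta_0 > 0$, and any $\delta \leq \delta_0$ works vacuously (no $x$ satisfies $h(x) < \delta$, or we interpret $d(x, \emptyset) = +\infty$). I do not expect any substantial obstacle here; the proof is a textbook compactness argument, and the only thing to be slightly careful about is the continuity of $y \mapsto d(y, Z)$, which is immediate from the triangle inequality.
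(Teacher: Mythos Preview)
Your proof is correct and follows essentially the same compactness/subsequence argument as the paper: assume a sequence $(x_n)$ with $h(x_n)\to 0$ but $d(x_n,Z)\ge\epsilon$, extract a convergent subsequence by compactness, and obtain a contradiction since the limit lies in $Z$. Your treatment is slightly more careful than the paper's in that you explicitly justify the continuity of $y\mapsto d(y,Z)$ and handle the $Z=\emptyset$ edge case, neither of which the paper addresses.
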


\begin{proof}[Proof by contradiction]
Suppose there exists some sequence $x_n$ that remains bounded away from $Z$, so that $d(x_n, Z) \geq \epsilon$, but $h(x_n)$ converges to zero. Then, by compactness, there is a convergent subsequence $x_{n_k} \to x$. By continuity, $h(x) = 0$, so that $x \in Z$. This is a contradiction; all the $x_n$ are $\epsilon$-bounded away from $Z$.
\end{proof}

\begin{lemma}[Second Borel-Cantelli lemma, \cite{durrett2019probability}] \label{lem:borel-cantelli}
Let $(\Omega, \mathcal{F}, P)$ be a probability space. If $(\mathcal{F}_n)_{n \geq 0}$ be a filtration with $\mathcal{F}_0 = \{\emptyset, \Omega\}$ and $(B_n)_{n\geq 1}$ a sequence of events with $B_n \in \mathcal{F}_n$, then:
\[\big\{B_n\ \textrm{occurs infinitely often}\big\} = \bigg\{\sum_{n=1}^\infty P(B_n\,|\, \mathcal{F}_{n-1}) = \infty\bigg\}.\]
\end{lemma}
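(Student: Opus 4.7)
The plan is to give the standard martingale-based proof. I would set $S_n := \sum_{k=1}^n \ind_{B_k}$ and $T_n := \sum_{k=1}^n \Pr(B_k \mid \mathcal{F}_{k-1})$, and observe that $M_n := S_n - T_n$ is an $\mathcal{F}_n$-martingale with increments in $[-1,1]$. Since both $S_n$ and $T_n$ are nondecreasing, their limits $S_\infty, T_\infty \in [0,\infty]$ exist, and the lemma reduces to showing $\{S_\infty = \infty\} = \{T_\infty = \infty\}$ almost surely. The localizing stopping time I will use throughout is $\tau_a := \inf\{n \geq 0 : T_{n+1} > a\}$ for $a > 0$: this is indeed a stopping time because $T_{n+1}$ is $\mathcal{F}_n$-measurable, and by construction $T_{n \wedge \tau_a} \leq a$ almost surely.

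For the inclusion $\{T_\infty < \infty\} \subset \{S_\infty < \infty\}$, I would apply optional stopping to $M$ together with monotone convergence to obtain $\E[S_{\tau_a}] = \E[T_{\tau_a}] \leq a$, so $S_{\tau_a} < \infty$ almost surely; on $\{T_\infty \leq a\}$ we have $\tau_a = \infty$ and therefore $S_\infty = S_{\tau_a} < \infty$, and a countable union over integer $a$ closes this direction.

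For the reverse inclusion $\{T_\infty = \infty\} \subset \{S_\infty = \infty\}$, I would exploit $L^2$-boundedness of the stopped martingale: orthogonality of the martingale differences together with $\mathrm{Var}(\ind_{B_k} \mid \mathcal{F}_{k-1}) \leq \Pr(B_k \mid \mathcal{F}_{k-1})$ gives $\E[(M_{n \wedge \tau_a})^2] \leq \E[T_{n \wedge \tau_a}] \leq a$, and hence by Fatou $\E[M_{\tau_a}^2] \leq a$ whenever $\tau_a < \infty$. Assume for contradiction that $\Pr(A_K) > 0$ for some $K \in \mathbb{N}$, where $A_K := \{T_\infty = \infty,\; S_\infty \leq K\}$. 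On $A_K$ every $\tau_a$ is finite, and since the jumps of $T$ are bounded by $1$ we have $T_{\tau_a} > a - 1$ and $S_{\tau_a} \leq K$, so $M_{\tau_a} < K - a + 1$. For $a > K + 1$ this yields $\E[M_{\tau_a}^2] \geq (a - K - 1)^2 \Pr(A_K)$, which grows quadratically in $a$ and contradicts the linear-in-$a$ upper bound; thus $\Pr(A_K) = 0$, and a countable union over $K \in \mathbb{N}$ gives the desired inclusion. The hard part will be this second direction: $M$ is not globally in $L^2$, so the localization through $\tau_a$ is essential, and the contradiction relies crucially on comparing the quadratic-in-$a$ lower bound on $\E[M_{\tau_a}^2]$ coming from $A_K$ against the linear-in-$a$ variance bound.
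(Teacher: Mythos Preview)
Your argument is correct. The localization through $\tau_a := \inf\{n \geq 0 : T_{n+1} > a\}$ is the standard device, and both directions go through as you describe: the first via $\E[S_{\tau_a}] = \E[T_{\tau_a}] \leq a$ and monotone convergence, the second via the $L^2$ bound $\E[(M_{n\wedge\tau_a})^2] \leq \E[T_{n\wedge\tau_a}] \leq a$ and the quadratic-versus-linear contradiction on $A_K$. One minor clarification worth making explicit in a write-up: the inequality $\E[M_{\tau_a}^2] \leq a$ holds globally via $L^2$ martingale convergence of the stopped process $M_{n\wedge\tau_a}$, not only on the event $\{\tau_a < \infty\}$; you then restrict to $A_K$ (where indeed $\tau_a < \infty$) to extract the lower bound $(a-K-1)^2\Pr(A_K)$.

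That said, there is nothing to compare against here: the paper does not prove this lemma. It is stated as a citation to \cite{durrett2019probability} and invoked as a black box in the proofs of \Cref{thm:iterates-conv}, \Cref{lem:inf-learning}, and \Cref{lem:lr-inf-lb}. Your proof is essentially the one in Durrett (Theorem~4.5.5 in the fifth edition), so you have reproduced the intended reference rather than diverged from the paper.
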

\section{Analysis of the generalized online Lloyd's algorithm}  \label{sec:lloyd}
We now prove the convergence for the generalized online Lloyd's learning rate reproduced here:
\begin{equation}
    H_j^{(n+1)} = \frac{\ind{\{I^{(n+1)} = j\}}}{\max\{n\widehat{P}_j^{(n)}, t_n\}} \qquad \textrm{and}\qquad   \widehat{P}_j^{(n)} = \frac{1}{s_n} \sum_{n_\circ\leq n' < n} \ind\{I^{(n' + 1)} = j\}. \tag{\ref{eqn:learning-rate-empirical}}
\end{equation}
where we let $n_\circ = n - s_n$, and where $s_n$ and $t_n$ are non-decreasing sequences. Let $P_j^{(n)} := P_j(W^{(n)})$.

\lloydconvergence*

\begin{remark}[Existence of $s_n$ and $t_n$]
It is fairly easy to construct sequences $s_n$ and $t_n$ satisfying the condition of \Cref{thm:lloydconvergence}. In particular, let $s_n = n^\alpha$ and $t_n = n^\beta$, where $2/3 <\alpha < \beta < 1$.
\end{remark}

We show convergence by verifying conditions (A1) and (A2) of \Cref{thm:iterates-conv}. The bulk of our effort is spent on (A1). Here is a brief guide to the objects in this analysis. Recall the form of (A1):
\[\Pr\left(\sum_{j \in [k]}\sum_{m \leq n < T(m)}  H_j^{(n+1)}  < r \quad \mathrm{and}\quad \sum_{m \leq n < T(m)} H_i^{(n+1)} > s \, \middle|\, \textcolor{gray}{\mathcal{F}_m, \|\nabla_{w_i} f(W^{(m)})\| \in [\epsilon,\epsilon_0)} \right) > c.\]
Since $H_j^{(n+1)}$ depends on the estimator $\widehat{P}_j^{(n)}$, there are two time units of analysis: (i) many short intervals of length $s_n$ from $n_\circ$ to $n$ used to compute the estimators, and (ii) the much longer interval from $m$ to $T(m)$ over which we aim to bound the behavior of the accumulated learning rates. 

It turns out that our ability to control $\widehat{P}_j^{(n)}$ depends on how smooth the maps $P_j: \mathcal{D}_R \to [0,1]$ are on a neighborhood of the trajectory of the iterates during the short intervals. The main issue is that the $P_j$'s are not nice everywhere on $\mathcal{D}_R$. All is not lost though, for (A1) requires these bounds only when $\|\nabla_{w_i} f(W^{(m)})\| \leq \epsilon_0$. Hope remains if $\{\|\nabla_{w_i} f\| \leq \epsilon_0\}$ lies in some region $K$ of $\mathcal{D}_R$ on which the maps $P_j$ are well-behaved. Indeed, we shall be able to find such a $K$ onto which we can restrict our analysis. But we cannot simply condition on a future event that the trajectories remain in $K$, since many of the tools we use from martingale analysis break if we do so. To handle this, let us define the notion of a \emph{core set}.

\begin{definition}[$r$-core set]
Given $S\subset \mathcal{D}_R$, we say that $S_\circ$ is an \emph{$r$-core set} of $S$ if for all $m, n \in \mathbb{N}$,
\begin{equation}
W^{(m)} \in S_\circ \quad\textrm{and}\quad \sum_{j \in [k]} \sum_{m \leq n^{\prime} < n} H_j^{(n^{\prime}+1)} < r \qquad \implies \qquad \forall m \leq n^{\prime}\leq n, \quad W^{(n^{\prime})} \in S\quad \mathrm{a.s.}
\end{equation}
\end{definition}
In other words, if we are presently in an $r$-core set $W^{(m)} \in S_\circ$, then we are guaranteed to remain in $S$ so long as the accumulated learning rate does not exceed $r$. 

\begin{remark}  \label{rmk:core-set}
Recall from \Cref{lem:lr-iterates-bound} that the displacement in iterates is bounded by the accumulated learning rate by an additional factor of $2R$. It follows that $S_\circ$ is an $r$-core set of $S$ whenever (i) $S_\circ$ is contained in $S$, and (ii) $S_\circ$ is separated from the boundary $\partial S$ by a distance of $2R\cdot r$. Here, $\partial S := \mathrm{closure}(S) \setminus \mathrm{interior}(S)$.
\end{remark}

If we find an $r_\circ$-core set $K_\circ$ of $K$, we can ensure that iterates remain in $K$ from times $n_\circ$ through $n$ whenever $W^{(n_\circ)}$ begins in $K_\circ$ and the accumulated learning rates do not exceed $r_\circ$. It turns out that eventually the accumulated learning rate may always be upper bounded by $r_\circ$; \Cref{lem:lr-ub-sn} shows that the accumulated learning rate over this short interval $n_\circ$ to $n$ converges to zero.  

This allows us to analyze $\widehat{P}_j^{(n)}$. For example, if $W^{(n_\circ)} \in K_\circ$, \Cref{lem:est-concentration} applies Azuma-Hoeffding's to show that the estimator is consistent. In fact, it is concentrated with high probability:
\begin{equation} \label{eqn:est-concentration}
\Pr\left(\vphantom{\bigg|}\left| \widehat{P}_j^{(n)} - P_j^{(n)}\right| \leq  a_n \,\middle|\, \textcolor{gray}{\mathcal{F}_{n_\circ}, \ W^{(n_\circ)} \in K_\circ} \right) > 1 - \frac{1}{n}.
\end{equation}
where $a_n \to 0$ is a sequence depending on $s_n$ and $t_n$ that converges to zero.

So far, our discussion has focused on the analyses over the short intervals. But, we also have to bound the behavior of the learning rates over the long interval from $m$ to $T(m)$. Here, we run into the same issue: at time $m$, we cannot condition on the future event that the iterates remain in $K_\circ$, which we need to control the individual learning rates. We need to be able to be able to choose $K_\circ$ and $K$ so that $\{\|\nabla_{w_i} f\| \leq \epsilon_0\}$ is an $r_0$-core set of $K_\circ$. This is in fact possible; we obtain a sequence of core sets seen in \Cref{fig:regions}.

\begin{figure}[h]
    \centering
    \begin{tikzpicture}[y=0.80pt, x=0.80pt,yscale=-2.5, xscale=2.5, inner sep=0pt, outer sep=0pt, trim left = 2cm]

  \path[draw=black,fill=cf9f9f9,line cap=butt,line join=miter,line width=0.299pt] (54.9174,61.7760) .. controls (34.8042,74.5615) and (18.9199,110.7493) .. (28.0229,137.9947) .. controls (31.3683,148.0074) and (39.9622,156.3286) .. (49.8747,156.1061) .. controls (66.2536,155.7384) and (79.0581,136.9671) .. (93.5782,130.4483) .. controls (115.6379,120.5446) and (140.3069,127.6766) .. (162.4953,117.6194) .. controls (172.1698,113.2343) and (178.6216,107.5658) .. (180.6491,95.3575) .. controls (190.9266,33.4721) and (99.8417,33.6483) .. (54.9174,61.7760) -- cycle;

  \path[draw=black,fill=cb3b3b3,line cap=butt,line join=miter,line width=0.260pt] (83.0743,77.6287) .. controls (-43.9613,120.3447) and (122.5160,129.0018) .. (157.4525,98.7737) .. controls (163.0799,93.9047) and (164.9665,85.6292) .. (163.7028,78.5481) .. controls (156.7449,39.5602) and (119.4536,65.5231) .. (83.0743,77.6287) -- cycle;

  \path[draw=black,fill=c808080,line cap=butt,line join=miter,line width=0.242pt] (95.1801,79.2379) .. controls (14.8689,110.9273) and (118.2480,105.3799) .. (134.3191,84.5970) .. controls (153.0899,60.3230) and (111.7564,72.2878) .. (95.1801,79.2379) -- cycle;

  \path[draw=black,fill=c1a1a1a,line cap=butt,line join=miter,line width=0.212pt] (116.7903,79.1194) .. controls (92.9864,85.1016) and (116.2954,91.5117) .. (124.8235,82.4151) .. controls (130.9146,75.9179) and (122.3416,77.6839) .. (116.7903,79.1194) -- cycle;

  \path[draw=cffffff,line cap=butt,line join=miter,line width=0.652pt,miter limit=4.00] (78.3488,96.0089) .. controls (80.2461,95.7794) and (79.9236,96.1861) .. (80.5583,97.2910) .. controls (80.8274,97.7594) and (81.4587,97.2911) .. (81.7858,97.5047) .. controls (82.1999,97.7750) and (81.9663,98.4264) .. (82.2768,98.7868) .. controls (83.9529,100.7318) and (83.8112,102.7934) .. (83.2589,105.1972) .. controls (83.0546,106.0862) and (84.9225,104.0462) .. (85.4684,103.2741) .. controls (86.1848,102.2609) and (87.9548,98.3594) .. (89.8874,98.3594) .. controls (90.3784,98.3594) and (89.7912,99.2224) .. (89.8874,99.6415) .. controls (90.1818,100.9225) and (90.6977,99.9194) .. (91.1150,100.2825) .. controls (92.0860,101.1277) and (89.9415,103.3525) .. (90.6240,104.3425) .. controls (91.4499,105.5406) and (93.9231,101.2609) .. (95.0430,101.9920) .. controls (95.6977,102.4194) and (94.5104,103.5880) .. (95.0430,104.1288) .. controls (95.7199,104.8162) and (99.9408,106.0018) .. (100.6896,106.0520) .. controls (101.9822,106.1385) and (102.6266,105.2606) .. (103.8811,105.6246) .. controls (104.2693,105.7373) and (104.4557,106.2979) .. (104.8631,106.2656) .. controls (107.4619,106.0600) and (105.6196,104.7502) .. (108.3002,104.9836) .. controls (108.6457,105.0137) and (108.7017,105.5451) .. (109.0367,105.6246) .. controls (109.5874,105.7553) and (113.8430,105.7360) .. (114.6833,105.4109) .. controls (117.3340,104.3855) and (118.2357,103.0793) .. (121.3119,102.6330) .. controls (121.7155,102.5745) and (122.1595,102.5008) .. (122.5394,102.6330) .. controls (122.7673,102.7124) and (122.3012,103.3259) .. (122.5394,103.2741) .. controls (124.3596,102.8780) and (127.2657,102.3886) .. (128.9225,101.5646) .. controls (130.0197,101.0189) and (130.5784,99.1504) .. (131.6230,98.7868) .. controls (133.8714,98.0040) and (135.6175,98.4575) .. (136.7786,96.4363);

  \node [label=$\mathcal{D}_R$] at (183, 122) {};
  \node [label=$K$] at (163.5, 109) {}; 
  \node [label=$K_\circ$] at (147, 98) {}; 

\end{tikzpicture}
    \captionsetup{width=0.9\textwidth,singlelinecheck=off}
    \caption[.]{A sequence of subsets: $\{\nabla f = 0\} \ \textcolor{gray}{\subset}\  \{\|\nabla_{w_i} f\| \leq \epsilon_0\} \ \textcolor{gray}{\subset}\ K_\circ \ \textcolor{gray}{\subset}\  K \ \textcolor{gray}{\subset}\ \mathcal{D}_R$.
    The page is $\mathcal{D}_R$. As $P_j$ is not well-behaved over all of $\mathcal{D}_R$, we construct a compact subset $K$ (\textit{light gray}) over which the maps $P_j$ are $L$-Lipschitz. $K$ contains an $r_\circ$-core set $K_\circ$ (\textit{gray}), allowing \Cref{lem:est-concentration} to control the behavior of estimators over short intervals of length $s_n$ when $W^{(n_\circ)} \in K_\circ$. To control the learning rates over the long interval $m$ to $T(m)$, we chose $K_\circ$ and $K$ so that $\{\|\nabla_{w_i} f\|\leq \epsilon_0\}$ (\textit{dark gray}) is an $r_0$-core set of $K_\circ$. We show in \Cref{lem:lr-upper-bound} that when iterates start within this set, then they do not exit $K_\circ$ with constant probability during the long interval. The white squiggly line depicts the trajectory of such a sequence of iterates. Notice that $\{\|\nabla_{w_i}f\|\leq \epsilon_0\}$ contains the set of stationary points (\textit{black}).}
    \label{fig:regions}
\end{figure}
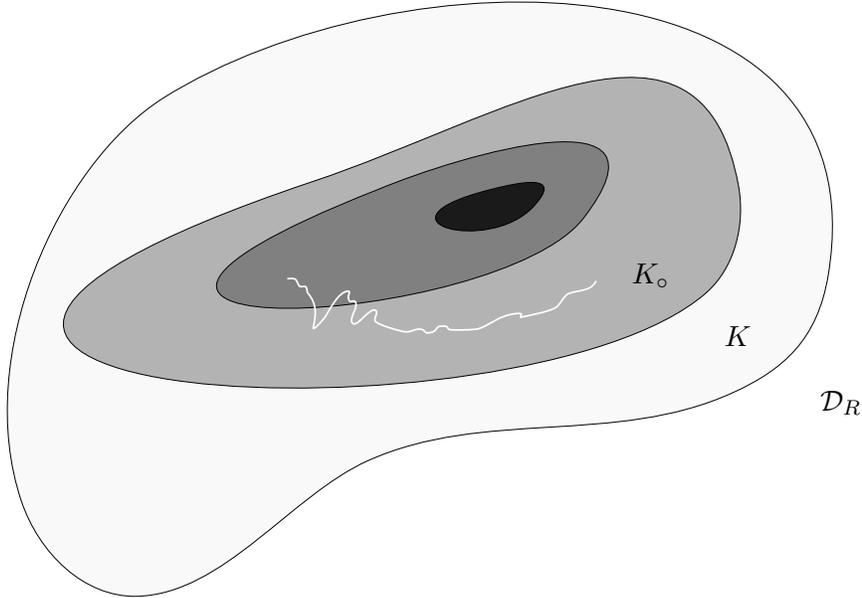

\subsection{Proof of \Cref{thm:lloydconvergence}}
Fix $\epsilon_0 > 0$ so that $K := \{\|\nabla_{w_i} f\| \leq 3\epsilon_0\}$ is compact; such an $\epsilon_0$ exists by \Cref{lem:compact-eps}. Because each of the $P_j : \mathcal{D}_R \to [0,1]$ is locally Lipschitz by \Cref{lem:local-lipschitz}, there exists a constant $L > 0$ so that they are all $L$-Lipschitz on $K$. Put $K_\circ := \{\|\nabla_{w_i} f\| \leq 2 \epsilon_0\}$. Then $K_\circ$ is bounded away from $\partial K := \{\|\nabla_{w_i} f\| = 3\epsilon_0\}$, since both are disjoint, non-empty, and compact sets, by \Cref{lem:non-empty-level-set}. Thus, \Cref{rmk:core-set} implies that $K_\circ$ is an $r_\circ$-core set of $K$ for some $r_\circ > 0$.
Similarly, $\{\|\nabla_{w_i} f\| \leq \epsilon_0\}$ is an $r_0$-core set of $K_\circ$ for some $r_0 > 0$.

We also define the sequence:
\begin{equation} \label{eqn:thresholds}
a_n := c \cdot \left(\frac{1}{t_{n_\circ}} + \frac{s_n \log s_n}{n}\right)\qquad\textrm{and}\qquad c:= \max\{1,256kRL\}.
\end{equation}
For any $r > 0$, define the function $T_r : \mathbb{N} \to \mathbb{N}$ so that $T_r(m)$ is the unique natural number so that:
\begin{equation} \label{eqn:time-bound}
    \sum_{m \leq n \textcolor{blue}{<} T_r(m)} \frac{1}{n} \leq r < \sum_{m \leq n \textcolor{orange}{\leq} T_r(m)} \frac{1}{n}.
\end{equation}

Because $\{\|\nabla_{w_i}f \|\leq \epsilon_0\}$ is an $r_0$-core set of $K_\circ$, the following lemma shows that we can choose $T$ so that eventually, whenever $W^{(m)} \in \{\|\nabla_{w_i} f\| \leq \epsilon_0\}$, then iterates will remain in $K_\circ$ for the whole duration from $m$ through $T(m)$ with constant probability. In fact, it more generally verifies the first half of condition (A1).

\begin{lemma} \label{lem:lr-upper-bound}
Let $H_j^{(n)}$ and $\widehat{P}_j^{(n)}$ be defined as in (\ref{eqn:learning-rate-empirical} and \ref{eqn:learning-rate-empirical-p}), and let $s_n$ and $t_n$ be non-decreasing sequences in $\mathbb{N}$. Let $\epsilon_0, K, K_\circ, L, r_\circ, r_0, a_n, c$ be given as above. Let $ 0 < r < \min \{\ln 2, r_0\}$. Assume that there exists $m_0 \in \mathbb{N}$ such that for all $n \geq m_0$, the following hold:
\begin{enumerate}
    \item[(a)] $4 n^{2/3} (\log 2n)^{1/3} \leq s_n \leq \frac{1}{2} n - 1$
    \item[(b)] $a_n < \min\{cr_\circ/16k, t_n/n\}$
    \item[(c)] $s_{2n} / t_n < r/12k$.
\end{enumerate}
Then, the function $T \equiv T_{Cr}$ where $C = 1/18 k$ satisfies for all $m \geq m_0$:
\[\Pr\left(\sum_{j \in [k]} \sum_{m \leq n < T(m)} H_j^{(n+1)} \geq r\,\middle|\, \mathcal{F}_m, \|\nabla_{w_i} f(W^{(m)})\| \leq \epsilon_0\right) \leq \frac{1}{3}.\]
\end{lemma}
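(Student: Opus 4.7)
The plan is a stopping-time argument combined with Markov's inequality. Let $U_n := \sum_{j \in [k]} \sum_{m \leq n' \leq n} H_j^{(n'+1)}$ be the accumulated total learning rate and set $\sigma := \inf\{n \geq m : U_n > r\}$ and $V_n := U_{\sigma \wedge n}$. Since $\{U_{T(m)-1} > r\} = \{\sigma \leq T(m)-1\} = \{V_{T(m)-1} > r\}$, it suffices by Markov to show $\E[V_{T(m)-1}] \leq r/3$. The crucial geometric input is that $\{\|\nabla_{w_i} f\| \leq \epsilon_0\}$ is an $r_0$-core set of $K_\circ$ and $r < r_0$: on $\{\sigma \geq n\}$ (equivalently $U_{n-1} \leq r$) the iterates $W^{(n')}$ remain in $K_\circ$ for all $n' \in [m, n]$. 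Whenever $n_\circ := n - s_n \geq m$, this gives $W^{(n_\circ)} \in K_\circ$ on $\{\sigma \geq n\}$, which is precisely the hypothesis required by the concentration bound (\ref{eqn:est-concentration}).

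Telescoping and conditioning on $\mathcal{F}_n$ yields
\[
\E[V_{T(m)-1}] \;=\; \sum_{n=m}^{T(m)-1} \E\!\left[\ind\{\sigma \geq n\} \sum_{j} \frac{P_j^{(n)}}{\max\{n \widehat{P}_j^{(n)}, t_n\}}\right].
\]
Split $[m, T(m))$ into early indices $E := \{n : n_\circ < m\}$ and late indices $L := \{n : n_\circ \geq m\}$. For $n \in E$ no concentration is available, but the trivial bound $\sum_j P_j^{(n)}/\max\{n\widehat{P}_j^{(n)}, t_n\} \leq \sum_j P_j^{(n)}/t_n = 1/t_n$ applies unconditionally. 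Condition (a) forces $s_n \leq n/2 - 1$, so every $n \in E$ satisfies $n \leq m + s_n - 1 \leq m + s_{2m}$; hence $|E| \leq s_{2m}$, and the monotonicity of $t_n$ combined with condition (c) gives the early total $\sum_{n \in E} 1/t_n \leq s_{2m}/t_m < r/(12k)$.

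For $n \in L$, define the good event $G_n := \{|\widehat{P}_j^{(n)} - P_j^{(n)}| \leq a_n \text{ for all } j\}$. On $\{\sigma \geq n\}$ the hypothesis $W^{(n_\circ)} \in K_\circ$ of (\ref{eqn:est-concentration}) holds, so a union bound gives $\Pr(G_n^c \mid \mathcal{F}_{n_\circ}) \leq k/n$. On $G_n$ a per-coordinate case analysis---when $P_j^{(n)} \geq 2 a_n$, $\widehat{P}_j^{(n)} \geq P_j^{(n)}/2$ bounds the $j$-th summand by $2/n$; when $P_j^{(n)} < 2 a_n$, condition (b) gives $P_j^{(n)}/t_n < 2 a_n/t_n < 2/n$---produces $\sum_j \leq 2k/n$, while on $G_n^c$ the trivial bound $\sum_j \leq 1/t_n$ still holds. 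Summing against $\sum_{n \in L} 1/n \leq Cr = r/(18k)$ yields the late total $\leq 2k \cdot Cr + k \cdot Cr/t_m = r/9 + o(1)$, and combining with the early estimate gives $\E[V_{T(m)-1}] \leq r/(12k) + r/9 + o(1) \leq 7r/36 + o(1) < r/3$ once $m \geq m_0$ is large enough, completing the argument via Markov.

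The main obstacle I anticipate is the conditioning mismatch: (\ref{eqn:est-concentration}) conditions on $\mathcal{F}_{n_\circ}$ together with $W^{(n_\circ)} \in K_\circ$, whereas $\{\sigma \geq n\}$ is only $\mathcal{F}_{n-1}$-measurable and is therefore unavailable at time $n_\circ$. The resolution is the monotone inclusion $\{\sigma \geq n\} \subset \{\sigma \geq n_\circ + 1\} \in \mathcal{F}_{n_\circ}$; since $U_{n_\circ} \leq r < r_0$ on this larger event, the $r_0$-core property still forces $W^{(n_\circ)} \in K_\circ$ there, so the concentration hypothesis holds and $\E[\ind_{G_n^c} \mid \mathcal{F}_{n_\circ}] \leq k/n$ can be cleanly pulled through the outer expectation.
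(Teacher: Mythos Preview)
Your proposal is correct and follows essentially the same route as the paper: a stopped-process/Markov argument, a split into an early ``warm-up'' segment (where $n_\circ < m$ and only the trivial $1/t_n$ bound is available) and a late segment (where the concentration lemma applies), and the same core-set mechanism to guarantee $W^{(n_\circ)} \in K_\circ$ before invoking concentration. The paper stops at the geometric exit time $\tau$ from $K_\circ$ while you stop at the learning-rate threshold $\sigma$; since $r < r_0$ and $\{\|\nabla_{w_i} f\| \leq \epsilon_0\}$ is an $r_0$-core set of $K_\circ$, the two stopped processes agree on the event of interest, so this is cosmetic. Your late-segment case split (good event $G_n$ then $P_j^{(n)} \gtrless 2a_n$) differs slightly from the paper's (which splits on $P_j^{(n_\circ)} \gtrless a_n$ and invokes both clauses of \Cref{lem:est-concentration}), and your early bound $\sum_j P_j^{(n)}/t_n = 1/t_n$ is a factor $k$ sharper than the paper's almost-sure bound $k/t_m$; both differences are minor. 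One small polish: writing the residual $kCr/t_m$ as ``$o(1)$'' and saying ``once $m \geq m_0$ is large enough'' is loose, since the lemma fixes $m_0$ via (a)--(c); but in fact $r/(12k) + r/9 + r/(18t_m) \leq r/12 + r/9 + r/18 = r/4 < r/3$ holds for every $t_m \geq 1$, so no additional largeness is needed.
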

\begin{proof}[Proof of \Cref{lem:lr-upper-bound}]
The essence of the proof will be to apply Markov's inequality by bounding the expectation of the summed learning rates. Since $H_j^{(n+1)}$ is of the form:
\[H_j^{(n+1)} = \frac{\ind\{I^{(n+1)} = j\}}{\max\{n\widehat{P}_j^{(n)}, t_n\}},\]
we upper bound it via the concentration result (\ref{eqn:est-concentration}) of \Cref{lem:est-concentration}, which lower bounds $\widehat{P}_j^{(n)}$ when iterates have not strayed out of $K_\circ$ by time $n_\circ$. Then, we apply Markov's to a related stopped process that sets learning rates to zero once iterates exit $K_\circ$. Let $(Z_n)_{n > m}$ be the accumulated learning rates from time $m$,
\begin{equation*}
Z_n = \sum_{j \in [k]}\sum_{m \leq n' < n} H_j^{(n'+1)}.
\end{equation*}
Define $\tau$ as the exit time from $K_\circ$ and $(Z_{n\wedge \tau})_{n > m}$ be the stopped process:
\begin{equation} \label{eqn:lr-total}
\tau := \min_{n \geq m} \, \{W^{(n)} \notin K_\circ\} \qquad \textrm{and}\qquad Z_{n\wedge \tau} = \sum_{j \in [k]}\sum_{m \leq n' < n \wedge \tau} H_j^{(n'+1)},
\end{equation}
where $n \wedge \tau := \min\{n, \tau\}$. We conditioned on $W^{(m)} \in \{\|\nabla_{w_i} f\| \leq\epsilon_0\}$ to be initially in an $r_0$-core set of $K_\circ$. So, the iterates remain in $K_\circ$ through iteration $n$ if $Z_n < r_0$. We claim that if $r < r_0$, then the events $\{Z_n < r\}$ and $\{Z_{n \wedge \tau} < r\}$ are equal. Indeed, we have that if $Z_n < r < r_0$, then $Z_{n\wedge \tau} = Z_n$. And because the accumulated learning rate when the process stops $Z_\tau$ must be at least $r_0$, we also have that if $Z_{n\wedge \tau} < r < r_0$, then the process has not stopped yet, so $Z_{n\wedge \tau} = Z_n$. Thus:
\[\Pr\left(Z_n \geq r\,\middle|\, \textcolor{gray}{\mathcal{F}_m, \|\nabla_{w_i} f(W^{(m)})\| \leq\epsilon_0}\right) = \Pr\left(\vphantom{\max_{m < n' \leq n}\,  Z_{n'} > r} Z_{n\wedge \tau} \geq r\,\middle|\, \textcolor{gray}{\mathcal{F}_m, \|\nabla_{w_i} f(W^{(m)})\| \leq\epsilon_0}\right).\]
We now bound the right-hand side by bounding the expected value of $Z_{T(m) \wedge \tau}$ and applying Markov's. The expected value of $Z_{T(m) \wedge \tau}$ can be bounded by considering each term within the summation (\ref{eqn:lr-total}) individually:
\begin{align*} 
&\E\left[Z_{T(m) \wedge \tau}\,\middle|\, \textcolor{gray}{\mathcal{F}_m, \|\nabla_{w_i} f(W^{(m)})\| \leq \epsilon_0}\right] 
\\&\qquad\qquad \leq \sum_{j \in[k]}\sum_{m \leq n < T(m)} \E\left[H_j^{(n+1)} \,\middle|\, \textcolor{gray}{\mathcal{F}_m, \|\nabla_{w_i} f(W^{(m)})\| \leq \epsilon_0}, \tau > n_\circ\right].
\end{align*}
Consider two intervals: (1) a warm-up interval $m \leq n \leq m+s_{T(m)}$ during which $n_\circ < m$ is possible, and (2) the tail interval $m + s_{T(m)} < n < T(m) \wedge \tau$ during which $n_\circ \geq m$ holds. For interval (1), we use the coarse bound ${H_j}^{\!(n+1)} \leq t_m^{-1}$. In interval (2), for any center $j \in [k]$ and iteration $m+s_{T(m)} \leq n < T(m)$, we have:
\begin{align*}
    &\E\left[H_j^{(n+1)} \,\middle|\, \textcolor{gray}{\mathcal{F}_m, \|\nabla_{w_i} f(W^{(m)})\|\leq \epsilon_0}, \tau > n_\circ\right] 
    \\&\qquad\qquad\overset{(i)}{=} \E\left[\E\left[H_j^{(n+1)} \,\middle|\, \mathcal{F}_n\right]\,\middle|\, \textcolor{gray}{\mathcal{F}_m, \|\nabla_{w_i} f(W^{(m)})\|\leq \epsilon_0}, \tau > n_\circ\right]
    \\&\qquad\qquad\overset{(ii)}{=} \E\left[\frac{P_j^{(n)}}{\max\{n\widehat{P}_j^{(n)}, t_n\}}\,\middle|\, \textcolor{gray}{\mathcal{F}_m, \|\nabla_{w_i} f(W^{(m)})\|\leq\epsilon_0}, \tau > n_\circ\right]
    \\&\qquad\qquad\overset{(iii)}{\leq} \frac{3}{n}.
\end{align*}
where (i) follows from the tower law for conditional expectations, (ii) from plugging in the form of $H_j^{(n+1)}$, and (iii) we must prove. But assuming this to be the case, we then have the upper bound:
\begin{align*}
    \E\left[Z_{T(m) \wedge \tau}\,\middle|\, \textcolor{gray}{\mathcal{F}_m, \|\nabla_{w_i}f(W^{(m)})\| \leq \epsilon_0}\right] &\leq \frac{k (s_{T(m)} + 1)}{t_m} + \sum_{m \leq n < T(m)} \frac{3k}{n} \quad \leq \quad  \frac{r}{3},
\end{align*}
where the last inequality holds because $m_0$ is sufficiently large so that $s_{2m}/t_m < r/12k$ in condition (c) holds, and because $T \equiv T_{Cr}$, where $C = 1/18k$. In particular, we assumed that $Cr < \ln 2$, so \Cref{cor:analytic-tr} shows that $T(m)\leq 2m$; as $s_n$ is non-decreasing, $s_{T(m)} + 1 \leq 2s_{2m}$. Thus, the first term is upper bounded by $r/6$. The second term is less than $r/6$ by the definition of $T$. The lemma follows from Markov's inequality. 

Only inequality (iii) above is left. Since $n_\circ \geq m$, by the tower law again, it suffices to show:
\[\E\left[\frac{P_j^{(n)}}{\max\{n\widehat{P}_j^{(n)}, t_n\}}\,\middle|\, \mathcal{F}_{n_\circ},\  \tau > n_\circ\right] \leq \frac{3}{n}.\]
Note that as $\tau > n_\circ$, we have $W^{(n_\circ)} \in K_\circ$. This along with conditions (a) and (b) shows that \Cref{lem:lr-ub-sn} and \Cref{lem:est-concentration} may be applied; we use them as follows. Consider two cases separately:
\[\{P_j^{(n_\circ)} \leq a_n \} \qquad \textrm{and}\qquad \{P_j^{(n_\circ)} > a_n\}.\] 
\Cref{lem:lr-ub-sn} shows that $P_j^{(n_\circ)}$ and $P_j^{(n)}$ are almost surely within $a_n/8$ of each other. So, in the first:
\begin{align*}
    \E\left[\frac{P_j^{(n)}}{\max\{n\widehat{P}_j^{(n)}, t_n\}}\,\middle|\, \textcolor{gray}{\mathcal{F}_{n_\circ},\ \tau > n_\circ}\ P_j^{(n_\circ)} \leq a_n\right]
    &\leq \frac{9}{8}\frac{a_n}{t_n} \leq \frac{3}{n},
\end{align*}
where the last inequality holds because we assumed that $ a_n/t_n \leq \frac{1}{n}$. 

In the second case, \Cref{lem:est-concentration} shows that $P_j^{(n)} / \widehat{P}_j^{(n)} < \textcolor{blue}{2}$ with probability at least $1 - \frac{1}{n}$. Because $H_j^{(n+1)} \leq 1$, the failure mode contributes at most $\textcolor{orange}{\frac{1}{n}}$ to the expectation:
\begin{align*}
    \phantom{+}\E\left[\frac{P_j^{(n)}}{\max\{n\widehat{P}_j^{(n)}, t_n\}}\,\middle|\, \textcolor{gray}{\mathcal{F}_{n_\circ}, \ \tau > n_\circ,}\ P_j^{(n_\circ)} > a_n\right]
    &\leq \frac{\textcolor{blue}{2}}{n} + \textcolor{orange}{\frac{1}{n}} \leq \frac{3}{n}.\qedhere
\end{align*}
\end{proof}

\begin{remark} \label{rmk:core-core-set}
As we mentioned earlier, when $r$, $T$ and $m_0$ are defined as in \Cref{lem:lr-upper-bound}, this result shows that the iterates from $m$ through $T(m)$ remain in $K_\circ$ with probability at least $2/3$ when $m \geq m_0$. This is because we conditioned on $W^{(m)} \in \{\|\nabla_{w_i} f\| \leq \epsilon_0\}$, which is an $r_0$-core set of $K_\circ$, and we assumed $r < r_0$.
\end{remark}

Introducing a few more conditions, which we highlight in blue, leads to all of (A1).

\begin{lemma} \label{lem:lr-lower-bound}
Let $H_j^{(n)}$ and $\widehat{P}_j^{(n)}$ be defined as in (\ref{eqn:learning-rate-empirical} and \ref{eqn:learning-rate-empirical-p}), and let $s_n$ and $t_n$ be non-decreasing sequences in $\mathbb{N}$. Let $\epsilon_0, K, K_\circ, L, r_\circ, r_0, a_n, c$ be given as above. Let $\textcolor{blue}{\epsilon \in (0, \epsilon_0)}$ and also let $ 0 < r < \min \{\ln 2, r_0,  \textcolor{blue}{\epsilon / 8R^2L, \frac{1}{C}\ln \frac{7}{6}}\}$. Set $\textcolor{blue}{s = Cr/8}$. Assume that there exists $m_0 \in \mathbb{N}$ such that for all $n \geq m_0$, the following hold:
\begin{enumerate}
    \item[(a)] $4 n^{2/3} (\log 2n)^{1/3} \leq s_n \leq \min\{\frac{1}{2} n - 1, \textcolor{blue}{ \frac{1}{2}(e^{Cr/2} - 1) n - e^{Cr/2}}\}$
    \item[(b)] $a_n < \min\{cr_\circ/16k, t_n/n,\textcolor{blue}{ \epsilon/4R}\}$
    \item[(c)] $s_{2n} / t_n < r/12k$.
    \item[(d)] $\textcolor{blue}{e^{Cr/2} \sqrt{2n \ln 6} / s < t_n < n \epsilon /8R}$.
\end{enumerate}
Then, the function $T \equiv T_{Cr}$ where $C = 1/18 k$ satisfies for all $m \geq m_0$:
\[\Pr\left(\sum_{j \in [k]} \sum_{m \leq n < T(m)} H_j^{(n+1)} < r \quad \mathrm{and}\quad \sum_{m \leq n < T(m)} H_i^{(n+1)} > s \,\middle|\, \mathcal{F}_m, \|\nabla_{w_i} f(W^{(m)})\| \in [\epsilon,\epsilon_0)\right) > \frac{1}{3}.\]
\end{lemma}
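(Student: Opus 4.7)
The plan is a union bound on two failure events, piggybacking on \Cref{lem:lr-upper-bound}. Because $\{\|\nabla_{w_i} f(W^{(m)})\| \in [\epsilon, \epsilon_0)\}$ is contained in $\{\|\nabla_{w_i} f(W^{(m)})\| \leq \epsilon_0\}$, and the bound in \Cref{lem:lr-upper-bound} is derived from a pointwise conditional-expectation estimate followed by Markov, the inequality $\Pr(\sum_{j,n} H_j^{(n+1)} \geq r \mid \cdots) \leq 1/3$ transfers without modification to the narrower conditioning here. Thus it suffices to show $\Pr(\sum_{m \leq n < T(m)} H_i^{(n+1)} \leq s \mid \cdots) < 2/3$ strictly; the union bound then yields the claimed $> 1/3$.

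For the lower bound, I would work with the stopped process $Z_i^\tau := \sum_{m \leq n < T(m) \wedge \tau} H_i^{(n+1)}$, where $\tau := \inf\{n \geq m : W^{(n)} \notin K_\circ\}$. Since the conditioning event lies inside the $r_0$-core set $\{\|\nabla_{w_i} f\| \leq \epsilon_0\}$ of $K_\circ$ and $r < r_0$, on the event $\{\sum_{j,n} H_j^{(n+1)} < r\}$ we have $\tau > T(m)$ and $Z_i^\tau = Z_i$; so bounding $\Pr(Z_i^\tau \leq s)$ is enough. Decompose $H_i^{(n+1)} = \mu_{n+1} + D_{n+1}$ with $\mu_{n+1} := \E[H_i^{(n+1)} \mid \mathcal{F}_n]$ and $D_{n+1}$ an $\mathcal{F}_{n+1}$-martingale difference with $|D_{n+1}| \leq 1/t_n$, and split the task into showing (i) $\sum \mu_{n+1} \geq 4s - o_m(1)$ on a high-probability good sub-event, and (ii) $|\sum D_{n+1}| < s$ via Azuma-Hoeffding.

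For the drift, chain three estimates valid on $\{\tau > n\}$: (a) $\|\nabla_{w_i} f(W^{(m)})\| \geq \epsilon$ and \Cref{lem:k-means-grad} give $P_i^{(m)} \geq \epsilon/(2R)$; (b) the $L$-Lipschitzness of $P_i$ on $K$ combined with \Cref{lem:lr-iterates-bound} and $r < \epsilon/(8R^2 L)$ propagates this to $P_i^{(n)} \geq \epsilon/(4R)$ throughout $m \leq n < T(m)$; (c) \Cref{lem:est-concentration}, applicable by conditions (a), (b), (c) of the present lemma, yields $\widehat{P}_i^{(n)} \leq 2 P_i^{(n)}$ with conditional probability $\geq 1 - 1/n$, while condition (d)'s $t_n < n\epsilon/(8R) \leq n P_i^{(n)}/2$ handles the case $n\widehat{P}_i^{(n)} < t_n$; in either branch of the max defining $\mu_{n+1}$, we conclude $\mu_{n+1} \geq 1/(2n)$. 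Summing with the definition of $T \equiv T_{Cr}$ and using $Cr < \ln(7/6)$ to keep $\sum_{m \leq n < T(m)} 1/n$ close to $Cr$ from below for $m \geq m_0$, this gives $\sum \mu_{n+1} \geq Cr/2 - o_m(1) = 4s - o_m(1)$; the estimator-failure contribution summed over the interval is $O(\sum 1/n^2) = O(1/m) = o_m(1)$ and is absorbed via Markov.

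Finally, Azuma-Hoeffding on $\sum D_{n+1}$ with increments bounded by $1/t_m$ and length $T(m) - m \leq (e^{Cr}-1)m$, combined with condition (d)'s $t_n > e^{Cr/2}\sqrt{2n \ln 6}/s$, yields $\Pr(|\sum D_{n+1}| \geq s) \leq 2 \cdot 6^{-\beta}$ with $\beta := e^{Cr}/(e^{Cr}-1) > 7$ (thanks to $Cr < \ln(7/6)$), far below $1/3$. Combining, $Z_i^\tau \geq 4s - s - o_m(1) > s$ with overwhelming conditional probability for $m$ large, and the union bound with the upper-bound failure of at most $1/3$ delivers intersection probability strictly greater than $1/3$ for $m \geq m_0$. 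The main obstacle is synchronizing stopping-time reasoning with Azuma-Hoeffding cleanly: directly conditioning on the future event $\{\tau > T(m)\}$ would destroy the martingale property, so instead we work with the stopped process and let the ``iterates exited $K_\circ$'' case be absorbed into the upper-bound failure via the core-set property, leaving the martingale structure and the estimator concentration lemma intact on the good branch.
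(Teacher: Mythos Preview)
Your plan is essentially the paper's own proof: Azuma--Hoeffding on the martingale differences $D_{n+1}$ with increments bounded by $1/t_n$ (where condition (d) is used exactly as you say), a per-step drift bound $\mu_{n+1} \geq 1/(2n)$ obtained from $P_i^{(n)} \geq \epsilon/4R$ plus \Cref{lem:est-concentration}, and the union bound with \Cref{lem:lr-upper-bound}. Two points deserve attention.

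\textbf{The warm-up interval.} You apply \Cref{lem:est-concentration} at each $n$ in $[m,T(m))$, but that lemma requires conditioning on $\mathcal{F}_{n_\circ}$ with $W^{(n_\circ)}\in K_\circ$ and $n_\circ = n-s_n$. For $n \in [m,\, m+s_{T(m)})$ one has $n_\circ < m$, so the lemma cannot be invoked there: you have no control over $W^{(n_\circ)}$ from the conditioning at time $m$. The paper therefore splits off this warm-up interval and retains only the tail $[m+s_{T(m)},T(m))$, showing via condition (a)'s blue bound $s_n \leq \tfrac12(e^{Cr/2}-1)n - e^{Cr/2}$ that $\sum_{\text{tail}} 1/(2n) \geq 2s$ already. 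Your claimed drift of $4s - o_m(1)$ over the full interval is too optimistic: the warm-up sum $\sum_{m \leq n < m+s_{T(m)}} 1/(2n)$ is of order $s_{2m}/m$, which by condition (a) is bounded by a \emph{constant} near $2s$, not $o_m(1)$. With only $2s$ of drift available, your accounting $4s - s - o_m(1)$ becomes $2s - s - o_m(1)$, which just misses $>s$. This is easily repaired by invoking condition (a) blue as the paper does, but it is a real step you omitted.

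\textbf{Handling estimator failures.} You propose bounding the expected drift deficit by $O(\sum 1/n^2) = O(1/m)$ and absorbing it via Markov. This works, but the measurability is delicate: to get $\Pr(\text{est fails at }n) \leq 1/n$ you must condition on an $\mathcal{F}_{n_\circ}$-measurable event, yet you are ultimately computing an expectation given $\mathcal{F}_m$. The paper sidesteps this with a ``generalized union bound'': writing $\Xi = \{\sum_j H_j < r\}$ and introducing the $\mathcal{F}_{n_\circ}$-measurable events $\Xi_n = \{W^{(n_\circ)}\in K_\circ,\ P_i^{(n_\circ)} \geq \epsilon/4R\} \supset \Xi$, it bounds $\Pr(\mathrm{E}_n^c \cap \Xi) \leq \Pr(\mathrm{E}_n^c \mid \Xi_n) \leq 1/n$ directly. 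Summing over the tail gives $\sum 1/n \leq e^{Cr}-1 \leq 1/6$ (this is where $r < \tfrac1C\ln\tfrac76$ enters), so $\Pr(\Xi \cap \{\mu > 2s\}) > 1/2$; combined with the Azuma bound of $5/6$ one obtains exactly $> 1/3$. Your Markov route is a legitimate alternative, and indeed gives an asymptotically tighter $O(1/m)$ failure bound, but the paper's formulation avoids the stopped-process bookkeeping and makes the role of each hypothesis (in particular $r < \tfrac1C\ln\tfrac76$) transparent.
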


\begin{proof}[Proof of \Cref{lem:lr-lower-bound}]
The learning rate $H_i^{(n+1)}$ has conditional expectation:
\[\E\left[H_i^{(n+1)} \,\middle|\, \textcolor{gray}{\mathcal{F}_n}\right] = \E\left[\frac{\ind\{I^{(n+1)} = i\}}{\max\{ n \widehat{P}_i^{(n)}, t_n\}}\,\middle|\, \textcolor{gray}{\mathcal{F}_n}\right] = \frac{P_i^{(n)}}{\max\{n\widehat{P}_i^{(n)}, t_n\}}.\]
Thus, the sequence $H_i^{(n+1)} - \E\left[H_i^{(n+1)}\,\middle|\, \textcolor{gray}{\mathcal{F}_n}\right]$ is a martingale difference sequence with bounded increments:
\[ \left|\frac{\ind\{I^{(n+1)} = i\} - P_i^{(n)}}{\max\{n\widehat{P}_i^{(n)}, t_n\}}\right| \leq  \frac{1}{t_n}.\]
Define $\mu$ and $\nu$ as follows:
\[\mu := \sum_{m \leq n < T(m)} \frac{P_i^{(n)}}{\max\{n\widehat{P}_i^{(n)}, t_n\}} \qquad \textrm{and}\qquad \nu:= \left( \sum_{m \leq n < T(m)} \frac{1}{t_n^2}\right)^{-1}.\]
Azuma-Hoeffding's implies that the accumulated learning rates for the $i$th center concentrates about $\mu$,
\begin{align} \label{eqn:ah-lr-lb}
    \Pr\left(\sum_{m \leq n < T(m)} H_i^{(n+1)} > \mu - s \,\middle|\, \textcolor{gray}{\mathcal{F}_m, \|\nabla_{w_i} f(W^{(m)})\| \in [\epsilon,\epsilon_0)}\right) 
    &> 1 - \exp\left(- \frac{1}{2}\nu s^2\right) > \frac{5}{6},
\end{align}
where the last inequality follows from:
\[\nu = \left( \sum_{m \leq n < T(m)} \frac{1}{t_n^2}\right)^{-1} \overset{(i)}{\geq} \frac{1}{e^{Cr} - 1}\frac{t_m^2}{m} \overset{(ii)}{>} \frac{2 \ln 6}{s^2}, \]
since (i) $T(m) - m \leq (e^{Cr} - 1)m$ by \Cref{cor:analytic-tr} and $t_n$ is non-decreasing, and (ii) $t_n > e^{Cr/2} \sqrt{2n \ln 6} / s$ by assumption (d). We also claim that:
\begin{equation} \label{eqn:lri-mean}
\Pr\left(\sum_{j \in [k]} \sum_{m \leq n < T(m)} H_j^{(n+1)} < r  \quad \textrm{and}\quad \mu > 2s \,\bigg|\, \textcolor{gray}{\mathcal{F}_m, \|\nabla_{w_i} f(W^{(m)})\| \in [\epsilon,\epsilon_0)}\right) > \frac{1}{2}.
\end{equation}
Assuming the claim, we obtain the desired result by combining (\ref{eqn:ah-lr-lb}) and (\ref{eqn:lri-mean}) by a union bound.

Now, only the claim remains, but let's first reduce notation. Denote the two events in (\ref{eqn:lri-mean}) by $\Xi$ and $\mathrm{E}$,
\[\Xi = \displaystyle \left\{\sum_{j \in [k]} \sum_{m \leq n < T(m)} H_j^{(n+1)} < r\right\} \qquad \textrm{and}\qquad \mathrm{E} := \left\{\sum_{m \leq n < T(m)} \frac{P_i^{(n)}}{\max\{n\widehat{P}_i^{(n)}, t_n\}} > 2s\right\},\]
and let $\mathrm{F} = \textcolor{gray}{\mathcal{F}_m, \|\nabla_{w_i} f(W^{(m)})\| \in [\epsilon,\epsilon_0)}$ denote the conditioning; (\ref{eqn:lri-mean}) states that $\Pr\big(\Xi \cap \mathrm{E}\,|\, \textcolor{gray}{\mathrm{F}}\big) > 1/2$. From \Cref{lem:lr-upper-bound}, we have $\Pr(\Xi\,|\,\textcolor{gray}{\mathrm{F}}) > 2/3$. Thus, we just need to show that the event $\mathrm{E}|\textcolor{gray}{\mathrm{F}}$ also likely holds. This turns out to be the case if for nearly all iterations between $m$ and $T(m)$, the conditional expectation satisfies:
\begin{equation} \label{eqn:lr-i-lb}
\frac{P_i^{(n)}}{\max\{n \widehat{P}_i^{(n)}, t_n\}} > \frac{1}{2n}.
\end{equation}
Again to reduce notation, denote the event in (\ref{eqn:lr-i-lb}) by $\mathrm{E}_n$. Then, $\mathrm{E}$ occurs if all events $\mathrm{E}_n$ occur over times $m + s_{T(m)} \leq n < T(m)$. This is due the definition of $T$, which implies the following:
\begin{align*}
    \sum_{m + s_{T(m)} \leq n < T(m)} \frac{1}{2n} &\overset{(i)}{\geq} \frac{1}{2} \log \frac{T(m)}{m + s_{T(m)}}
    \\&\overset{(ii)}{\geq} \frac{1}{2} \log \frac{e^{Cr}( m-1)}{m + s_{T(m)}}
    \\&\overset{(iii)}{\geq} 2s,
\end{align*}
where (i) follows from \Cref{lem:harmonic-partial-sum}, (ii) from \Cref{cor:analytic-tr}, and (iii) from setting $s = Cr/8$ and our choice of upper bound on $s_n$. In particular, because $r < \ln 2$, \Cref{cor:analytic-tr} shows that $T(m) \leq 2m$. Since $s_n$ is non-decreasing, $s_{T(m)} \leq s_{2m} \leq (e^{Cr/2} - 1)m - e^{Cr/2}$. A little bit of algebra verifies (iii).


The natural way to prove (\ref{eqn:lri-mean}) would then be to union bound the probability of failure:
\begin{align*}
    \Pr\left(\Xi^c \cup \mathrm{E}^c \,\bigg|\, \textcolor{gray}{\mathrm{F}}\right) 
    &\leq \Pr\left(\Xi^c\,\bigg|\, \textcolor{gray}{\mathrm{F}}\right) + \sum_{m+ s_{T(m)} \leq n < T(m)}\Pr\left(\mathrm{E}_n^c\,\bigg|\, \textcolor{gray}{\mathrm{F}}\right).
\end{align*}
This turns out to be too coarse of a bound for us; there is too much overcounting of certain outcomes in $\Xi^c$ that are also contained in $\mathrm{E}_n^c$. Instead, we use a finer union bound---since the first term $\Pr(\Xi^c \,|\, \textcolor{gray}{\mathrm{F}})$ already accounts for the bad outcomes in $\Xi^c$, the remaining sum needs only measure the outcomes in $\mathrm{E}_n^c \cap \Xi$. In fact, we only loosen the bound if we measure outcomes in $\mathrm{E}_n^c \cap \Xi_n$, for a superset $\Xi_n \supset \Xi$. Thus:
\begin{align}
    \Pr\left(\Xi^c \cup \mathrm{E}^c \,\bigg|\, \textcolor{gray}{\mathrm{F}}\right) 
    &\leq \Pr\left(\Xi^c\,\bigg|\, \textcolor{gray}{\mathrm{F}}\right) + \sum_{m+ s_{T(m)} \leq n < T(m)}\Pr\left(\mathrm{E}_n^c \cap \Xi\,\bigg|\, \textcolor{gray}{\mathrm{F}} \right) \notag
    \\&\leq \Pr\left(\Xi^c\,\bigg|\, \textcolor{gray}{\mathrm{F}}\right) + \sum_{m+ s_{T(m)} \leq n < T(m)}\Pr\left(\mathrm{E}_n^c \cap \Xi_n\,\bigg|\, \textcolor{gray}{\mathrm{F}} \right) \notag
    \\& \leq \Pr\left(\Xi^c\,\bigg|\, \textcolor{gray}{\mathrm{F}}\right) + \sum_{m+ s_{T(m)} \leq n < T(m)}\Pr\left(\mathrm{E}_n^c \,\bigg|\, \textcolor{gray}{\mathrm{F}}, \Xi_n\right), \label{eqn:union-bound}
\end{align}
where in the last step, we use the general fact that $\Pr(A \cap B) \leq \Pr(A \,|\, B)$.

\Cref{lem:lr-upper-bound} bounds the first term in (\ref{eqn:union-bound}) with $\Pr(\Xi^c\,|\, \textcolor{gray}{\mathrm{F}}) \leq 1/3$. For the others, notice that conditioned on $\mathrm{F}$, the event $\Xi$, which bounds the accumulated learning rates by $r$, implies the $\mathcal{F}_{n_\circ}$-measurable events:
\[\Xi_n := \left\{W^{(n_\circ)} \in K_\circ\quad \textrm{and}\quad P_i^{(n_\circ)} \geq \frac{\epsilon}{4R}\right\}.\]
This is because $r < \min\{r_0, \epsilon / 8R^2L\}$. If $\Xi\,|\,\textcolor{gray}{\mathrm{F}}$ occurs, the bound $r < r_0$ implies that all iterates remain in $K_\circ$, as discussed in \Cref{rmk:core-core-set}. Furthermore, since $P_i$ is $L$-Lipschitz on $K_\circ$, we also have for all $m \leq n \leq T(m)$,  
\[\left|P_i^{(n)} - P_i^{(m)}\right| \leq 2RL \cdot \sum_{j \in [k]} \sum_{m \leq n' < n} H_j^{(n'+1)} \leq 2RL r.\] 
Thus, $r < \epsilon / 8R^2L$ implies $2RLr < \epsilon / 4R$. So, $P_i^{(n)}$ is lower bounded because $P_i^{(m)} \geq \epsilon / 2R$. This comes from the gradient, $\nabla_{w_i} f(w) = P_i(w) \cdot \big(w_i - M_i(w)\big)$, and that the initial iterate satisfies $\|\nabla_{w_i}f(W^{(m)})\| \geq \epsilon$.

We claim that $\Pr(\mathrm{E}_n^c\,|\, \textcolor{gray}{\mathrm{F}}, \Xi_n) \leq \frac{1}{n}$. If this is the case, then (\ref{eqn:union-bound}) implies (\ref{eqn:lri-mean}):
\begin{align*}
    \Pr\left(\Xi^c \cup \mathrm{E}^c \,\bigg|\, \textcolor{gray}{\mathrm{F}}\right) &\leq \frac{1}{3} + \sum_{m + s_{T(m)} \leq n < T(m)} \frac{1}{n}
    \\&\overset{(i)}{\leq} \frac{1}{3} + \frac{T(m) - m}{m} \overset{(ii)}{\leq} \frac{1}{3} + \left(e^{Cr} - 1\right) \overset{(iii)}{\leq} \frac{1}{2}, 
\end{align*}
where we use (i) $\frac{1}{m} \geq \frac{1}{n}$ on this interval, (ii) $T(m) - m \leq (e^{Cr} - 1) m$, and (iii) $r < \frac{1}{C}\ln \frac{7}{6}$.

Now, all that is left is to verify that $\Pr(\mathrm{E}_n^c\,|\, \textcolor{gray}{\mathrm{F}}, \Xi_n)$ is bounded above:
\[\Pr\left(\frac{P_i^{(n)}}{\max\{n\widehat{P}_i^{(n)}, t_n\}} \leq \frac{1}{2n}\,\middle|\,\textcolor{gray}{\mathcal{F}_m, \|\nabla_{w_i} f(W^{(m)})\| \in [\epsilon,\epsilon_0)}, \ W^{(n_\circ)} \in K_\circ, \ P_i^{(n_\circ)} \geq \frac{\epsilon}{4R}\right) \leq \frac{1}{n}.\]
This is true by \Cref{lem:est-concentration}, which shows multiplicative concentration 
$ \frac{1}{2} P_i^{(n)} < \widehat{P}_i^{(n)} < 2 P_i^{(n)}$
with probability at least $1 - \frac{1}{n}$. This lemma applies because $\Xi_n$ is $\mathcal{F}_{n_\circ}$-measurable with $n_\circ > m$, since we only consider iterations $n$ between $m + s_{T(m)}$ and $T(m)$, and because $\epsilon / 4R > a_n$. Concentration implies $\mathrm{E}_n$: the left tail bound shows that $\max\{n \widehat{P}_i^{(n)}, t_n\} = n \widehat{P}_i^{(n)}$ as $t_n < n \epsilon /8R$ and $P_i^{(n)} \geq \epsilon/4R$; the right tail shows $P_i^{(n)} / n\widehat{P}_i^{(n)} < 1/2n$.
\end{proof}

We now verify condition (A2) of \Cref{thm:iterates-conv}. The proof remixes techniques used for (A1). If $\|\nabla_{w_i} f\|$ is lower bounded by a constant, then so is $P_i$. And so $\widehat{P}_i^{(n)}$ will also be lower bounded by a constant with high probability. On average $H_i^{(n+1)}$ is on the order of $n^{-1}$, whose sum diverges.

\begin{lemma} \label{lem:lr-inf-lb}
Let $H_j^{(n)}$ and $\widehat{P}_j^{(n)}$ be defined as in (\ref{eqn:learning-rate-empirical} and \ref{eqn:learning-rate-empirical-p}) and $\displaystyle \lim_{n \to \infty}\, \frac{\log n}{s_n} = \lim_{n\to\infty} \, \frac{t_n}{n} = 0$. Let $\epsilon_0 > 0$. Then:
\[ \liminf_{n \to \infty} \, \|\nabla_{w_i} f(W^{(n)})\| \geq \epsilon_0 \qquad \implies \qquad \sum_{n \in \mathbb{N}} H_i^{(n)} = \infty \quad \mathrm{a.s.}\]
\end{lemma}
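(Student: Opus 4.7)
My plan is to reduce the claim to showing $\sum_n \ind\{I^{(n+1)} = i\}/n = \infty$ almost surely on the event $A := \{\liminf_{n\to\infty} \|\nabla_{w_i} f(W^{(n)})\| \geq \epsilon_0\}$, and then to finish with the martingale-difference form of the convergence theorem spelled out in the remark after Theorem~\ref{thm:martingale-convergence}. The first step is to convert the gradient lower bound on $A$ into a lower bound on $P_i^{(n)}$. By Lemma~\ref{lem:k-means-grad}, $\nabla_{w_i} f(w) = P_i(w) \cdot (w_i - M_i(w))$; and by Assumption~\ref{assume:finite}, both $W_i^{(n)}$ and $M_i(W^{(n)})$ lie in $B(0,R)$ almost surely, so $\|W_i^{(n)} - M_i(W^{(n)})\| \leq 2R$. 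Hence on $A$ there exists an $\mathbb{N}$-valued random variable $N$ with $P_i^{(n)} \geq \epsilon_0/(4R) =: p^\ast$ for all $n \geq N$.

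Next I would crudely lower bound the learning rate itself. Since $\widehat{P}_i^{(n)} \leq 1$ by construction and $t_n/n \to 0$ by hypothesis, for all sufficiently large $n$ we have $\max\{n\widehat{P}_i^{(n)},\, t_n\} \leq n$, so $H_i^{(n+1)} \geq \ind\{I^{(n+1)}=i\}/n$ eventually. It therefore suffices to show that $\sum_n \ind\{I^{(n+1)}=i\}/n$ diverges a.s.\ on $A$. For this, observe that $\Pr(I^{(n+1)}=i \mid \mathcal{F}_n) = P_i^{(n)}$, so the sequence $B_n := (\ind\{I^{(n+1)}=i\} - P_i^{(n)})/n$ is a martingale difference sequence satisfying $|B_n| \leq 1/n$ and $\sum_n \E[B_n^2] \leq \sum_n 1/n^2 < \infty$. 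The remark after Theorem~\ref{thm:martingale-convergence} then ensures $\sum_n B_n$ converges a.s.\ to a finite limit, so the two series $\sum_n \ind\{I^{(n+1)}=i\}/n$ and $\sum_n P_i^{(n)}/n$ differ by an a.s.\ finite amount. On $A$, the second is bounded below by $p^\ast \sum_{n \geq N} 1/n = \infty$, so the first diverges, giving $\sum_n H_i^{(n+1)} = \infty$ a.s.\ on $A$.

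The only mildly delicate point is that the threshold $N$ in the first step is a random time, but this is harmless: the martingale identity in the third step is global (independent of $N$), and the conclusion only requires that two tail series started at the same (random) $N$ differ by an a.s.\ finite amount. I note in passing that the hypothesis $\log n/s_n \to 0$ is not actually used in this argument---it appears to be inherited from the standing assumptions of Theorem~\ref{thm:lloydconvergence}, while only $t_n/n \to 0$ is required for this particular lemma.
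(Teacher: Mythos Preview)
Your proof is correct and is actually cleaner than the paper's. The paper establishes the same lower bound $P_i^{(n)} \geq \epsilon_0/(4R)$ for $n$ large, but then invests effort in showing, via Azuma--Hoeffding over windows of length $s_n$, that the estimator $\widehat{P}_i^{(n)}$ is bounded below by $\epsilon_0/(8R)$ with probability at least $1-1/n$; this is where the hypothesis $(\log n)/s_n \to 0$ enters. From there the paper concludes that $\E[H_i^{(n+1)}\mid\mathcal{F}_n]$ is at least a constant times $1/n$ with high probability, and finishes with a Borel--Cantelli argument over intervals $[m,T_r(m))$. You bypass all of this by exploiting the trivial deterministic bound $\widehat{P}_i^{(n)} \leq 1$, which together with $t_n \leq n$ already gives $H_i^{(n+1)} \geq \ind\{I^{(n+1)}=i\}/n$ eventually; the divergence then falls out of a single global martingale-convergence step rather than a block-by-block estimate. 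Your observation that only $t_n/n \to 0$ is needed is therefore genuine: the $s_n$ assumption is used in the paper's proof of this lemma but is not logically required for the conclusion.
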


\begin{proof}[Proof of \Cref{lem:lr-inf-lb}]
We saw in the proof of \Cref{lem:lr-lower-bound} that $P_i^{(n)} > \epsilon_0/2R$ is implied by: \[\|\nabla_{w_i} f(W^{(n)})\| > \epsilon,\]
since the gradient is $\nabla_{w_i} f(w) = P_i(w) \cdot \big(w_i - M_i(w)\big)$. Therefore, if the limit infimum condition holds, then there exists a random variable $N \in \mathbb{N}$ such that if $n > N$, then:
\[P_i^{(n)} \geq \frac{\epsilon_0}{4R}.\]
That is, the probability of updating the center $i$ eventually remains at least $\epsilon_0/4R$. Thus, $\widehat{P}_i^{(n)} > \epsilon_0/8R$ holds with high probability at any sufficiently large iteration. In particular, if $n$ is large enough to satisfy $n_\circ > N$ and $s_n > \frac{128 R^2}{\epsilon_0^2} \ln n$, then Azuma-Hoeffding's implies (\Cref{lem:est-concentration} uses the same technique),
\begin{align*}
    \Pr\left(\widehat{P}_j^{(n)} \leq \frac{\epsilon_0}{8R}\vphantom{\bigg|}\,\middle|\, \textcolor{gray}{\mathcal{F}_{n_\circ}}\right) \leq \Pr\left(\widehat{P}_j^{(n)} \leq \frac{1}{s_n} \sum_{n_\circ\leq n < n} P_j^{(n')} - \frac{\epsilon_0}{8R}\,\middle|\, \textcolor{gray}{\mathcal{F}_{n_\circ}}\right)
    \leq \exp\left( -\frac{s_n \epsilon_0^2}{128R^2} \right) \leq \frac{1}{n}.
\end{align*}
This bound on $\widehat{P}_i^{(n)}$ implies one on $H_i^{(n+1)}$. In particular, if $n$ is sufficiently large so that $t_n < n \epsilon_0 / 8R$, then:
\[\Pr\left(\textcolor{gray}{H_i^{(n+1)} = }\ \frac{P_i^{(n)}}{\max\{n\widehat{P}_i^{(n)}, t_n\}} = \frac{P_i^{(n)}}{n\widehat{P}_i^{(n)}}\ \textcolor{gray}{\geq \frac{1}{n} \frac{\epsilon_0}{4R}}\,\middle|\,\mathcal{F}_{n_\circ}\right) > 1 - \frac{1}{n},\]
where the inequality in gray comes from the lower bound $P_i^{(n)} \geq \epsilon_0/4R$ and the upper bound $\widehat{P}_i^{(n)} \leq 1$. And so, we have that for $n$ sufficiently large:
\[\Pr\left(\sum_{m \leq n' < T_r(m)} H_i^{(n'+1)} > c \,\middle|\, \mathcal{F}_{n_\circ} \right) > 1 - r,\]
where we may take $r < 1$ and we set $c = r\epsilon_0/ 4R$. This inequality follows directly from a union bound and the definition of $T_r$. Borel-Cantelli implies that the accumulated learning on the $i$th center increases by $c$ infinitely often, which implies that $\sum_{n \in \mathbb{N}} {H_i}^{(n)}$ diverges.
\end{proof}

\noindent Under assumptions on $s_n$ and $t_n$, the conditions of \Cref{lem:lr-lower-bound} and \Cref{lem:lr-inf-lb} are verified. Thus, conditions (A1) and (A2) of \Cref{thm:iterates-conv} are satisfied, proving \Cref{thm:lloydconvergence}.  \hfill $\blacksquare$

\vspace{12pt}
\begin{remark}[Generalized union bound]
The modified union bound used in \Cref{lem:lr-lower-bound} may be of generic interest: let $(\Omega, \mathcal{F}, P)$ be a probability space. Let $A, B, C \in \mathcal{F}$ be events such that $A^c \subset C$. Then:
\[P(A \cup B) \quad\textcolor{gray}{\overset{(i)}{=}}\quad P(A) + P(B \cap A^c) \quad\textcolor{gray}{\overset{(ii)}{\leq}}\quad P(A) + P(B \cap C) \quad\textcolor{gray}{\overset{(iii)}{\leq}}\quad P(A) + P(B\,|\,C),\]
where \textcolor{gray}{(i)} $A \cup B$ is the disjoint union $A \sqcup (B \cap A^c)$, \textcolor{gray}{(ii)} $B\cap A^c \subset B \cap C$, and \textcolor{gray}{(iii)} $P(B \cap C) \leq P(C) P(B\,|\,C)$.  This is useful because $P(B\,|\,C)$ may in general be easier to bound than $P(B\,|\, A^c)$, as was our case.
\end{remark}

\subsection{Consistency and concentration of $\widehat{P}_j^{(n)}$}
\label{sec:p-estimator}
The estimator $\widehat{P}_j^{(n)}$ for $P_j^{(n)} := P_j(W^{(n)})$ is consistent, provided $P_j$ is locally Lipschitz and that:
\begin{equation*} 
\frac{1}{t_n} \to 0 \qquad\textrm{and}\qquad \frac{s_n \log s_n}{n} \to 0, \qquad \textrm{as}\quad n\to\infty.
\end{equation*}
Specifically, we give non-asymptotic rates of concentration in \Cref{lem:est-concentration}. 

The estimator $\widehat{P}_j^{(n)}$ depends on the trajectory of the past $s_n$ iterates up to that point. In particular, since $I^{(n' + 1)}$ is drawn from $P(W^{(n')})$, Azuma-Hoeffding's shows that the estimator tends to concentrate around:
\begin{equation*}
\frac{1}{s_n} \sum_{n_\circ \leq n' < n} P_j^{(n')}.
\end{equation*}
Therefore, $\widehat{P}_j^{(n)}$ concentrates around $P_j^{(n)}$, as long as $P_j^{(n')}$ does not vary too much over $n_\circ \leq n' \leq n$. The amount of variation can be bounded because the maps $P_j : \mathcal{D}_R \to [0,1]$ are locally Lipschitz (\Cref{lem:local-lipschitz}). We just need to ensure that the iterates do not move too much---we achieve this by upper bounding the accumulated learning rates between $n_\circ$ and $n$, achieved by the next lemma: by bounding the learning rates, we can control the change in $P_j$ whenever iterates stay within a region $K$ on which $P_j$ is $L$-Lipschitz.

\begin{lemma} \label{lem:lr-ub-sn}
Let $H_j^{(n)}$ and $\widehat{P}_j^{(n)}$ be defined as in (\ref{eqn:learning-rate-empirical}). If $e \leq s_n + 1\leq \frac{n}{2}$, then:
\[\sum_{j \in [k]}\sum_{n_\circ \leq n' < n} H_j^{(n'+1)} \leq \frac{16k}{t_{n_\circ}} + \frac{16 k s_n \log s_n}{n} \quad \mathrm{a.s.}\]
Let $K \subset \mathcal{D}_R$ be given so that the restriction $P_j\big|_K : K \to [0,1]$ is $L$-Lipschitz. Conditioned on $W^{(n_\circ)}, \ldots, W^{(n)}$ remaining in $K$, then for all $n_\circ \leq n' \leq n$:
\begin{equation}\label{eqn:p-shift}
    \left|P_j^{(n')} - P_j^{(n)}\right| \leq 32kR L\cdot \left(\frac{1}{t_{n_\circ}} + \frac{s_n \log s_n}{n}\right)\quad\mathrm{a.s.}
\end{equation}
\end{lemma}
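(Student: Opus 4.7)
Both assertions reduce to bounding the accumulated learning rate $\Sigma := \sum_{j \in [k]} \sum_{n_\circ \leq n' < n} H_j^{(n'+1)}$. Indeed, the second follows from the first by combining the $L$-Lipschitz hypothesis on $P_j|_K$ with \Cref{lem:lr-iterates-bound}: using non-negativity of the $H_j$'s to majorise the displacement over $[n',n)$ by the full-interval sum,
\[
|P_j^{(n')} - P_j^{(n)}| \;\leq\; L\,\|W^{(n')} - W^{(n)}\| \;\leq\; 2RL\cdot \Sigma
\]
for every $n_\circ \leq n' \leq n$ on the conditioning event that iterates stay in $K$. Substituting the claimed $\Sigma \leq 16k\bigl(1/t_{n_\circ} + s_n \log s_n/n\bigr)$ and multiplying by $2RL$ yields the stated $32kRL$ bound.

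To estimate $\Sigma$, note that only one center is updated per iteration, so the double sum collapses to $\sum_{n_\circ \leq n' < n} \bigl(\max\{n'\widehat{P}_{I^{(n'+1)}}^{(n')}, t_{n'}\}\bigr)^{-1}$. The hypothesis $s_n + 1 \leq n/2$ gives $n' \geq n_\circ \geq n/2$, while monotonicity of $t_n$ gives $t_{n'} \geq t_{n_\circ}$. Setting $C_j^{(n')} := s_{n'}\widehat{P}_j^{(n')}$ for the count of updates to $j$ in its sliding window and regrouping by the updated center, I plan to reach the cleaner form
\[
\Sigma \;\leq\; \sum_{j \in [k]} \sum_{a=1}^{U_j} \frac{1}{\max\{(n/2)\, C_j^{(n_a^j)}/s_n,\; t_{n_\circ}\}},
\]
where $n_1^j < \cdots < n_{U_j}^j$ enumerate the update times of center $j$ inside $[n_\circ, n)$.

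The combinatorial core of the argument is to bound the per-center sum, which I plan to handle by splitting the updates into two groups. For the zero-count group (those $a$ with $C_j^{(n_a^j)} = 0$), each term is at most $1/t_{n_\circ}$; a gap argument shows that successive such events at center $j$ are separated in time by more than $s_{n_\circ}$ (otherwise the earlier update would still lie in the later window), which together with the length-$s_n$ interval caps their total number per center and yields an $O(k/t_{n_\circ})$ contribution. For the positive-count group, each term is at most $2s_n/(n\, C_j^{(n_a^j)})$; grouping consecutive positive-count updates into runs within which $C_j$ grows with the within-run position allows harmonic summation to bound $\sum_a 1/C_j^{(n_a^j)}$ by $O(\log s_n)$ per run, totalling $O(k s_n \log s_n/n)$. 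The main obstacle is this combinatorial step: because the sliding-window count $C_j^{(n')}$ is non-monotone (old updates leave the window as new ones enter), carefully controlling run lengths and the ratio $s_n/s_{n_\circ}$ (implicitly kept moderate by the standing hypothesis $s_n+1\le n/2$ for the $s_n$ of interest) is what delivers the explicit constant $16$.
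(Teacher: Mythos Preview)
Your reduction of \eqref{eqn:p-shift} to the accumulated learning-rate bound via \Cref{lem:lr-iterates-bound} and the $L$-Lipschitz hypothesis is exactly what the paper does.

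For the first inequality, the paper's route is considerably shorter than your zero-count/positive-count split. Fixing $j$, the paper bounds $\sum_{n_\circ\le n'<n} H_j^{(n'+1)}$ by the single ``worst-case'' schedule in which center $j$ had no updates prior to $n_\circ$ and is then updated at \emph{every} step of $[n_\circ,n)$. In that schedule, at step $n_\circ+m$ the window count is $\min(m,s_{n_\circ+m})$ while the window length is $s_{n_\circ+m}\le s_n$, so $\widehat P_j^{(n_\circ+m)}\ge m/s_n$; hence the per-center sum is dominated by one harmonic tail,
\[
\frac{1}{t_{n_\circ}}\;+\;\sum_{m=1}^{s_n-1}\frac{1}{(n-s_n)\cdot m/s_n}
\ \le\ \frac{1}{t_{n_\circ}}+\frac{16\,s_n\log s_n}{n},
\]
and summing over $j\in[k]$ gives the lemma. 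The only justification needed is that no other schedule can do worse, which the paper dispatches in one sentence: ``delaying an update simply introduces a zero in the sum and shifts the rest of the bounds to the right.''

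The two obstacles you flag---controlling the number of zero-count events through the ratio $s_n/s_{n_\circ}$, and the non-monotonicity of $C_j$ within your positive-count runs---are genuine complications of the case-split approach, and they are exactly what the paper sidesteps by never tracking counts along a generic schedule. Your plan could likely be completed, but it purchases nothing over the one-line worst-case comparison; in particular, neither the gap argument nor the run decomposition is needed once you observe that the greedy all-$j$ schedule already dominates.
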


\begin{proof}
Fix $j \in [k]$. The following chain of inequalities holds almost surely:
\begin{align*}
    \sum_{n_\circ \leq n' < n} H_j^{(n'+1)} &\leq \sum_{n_\circ \leq n' < n} \frac{\ind\{I^{(n'+1)} = j\}}{\max\{\textcolor{blue}{n'} \cdot \textcolor{orange}{\widehat{P}_j^{(n')}}, t_{n'}\}}
    \\&\leq \frac{1}{t_{n_\circ}} + \sum_{n' = 1}^{s_n-1} \frac{1}{\textcolor{blue}{(n - s_n)} \cdot \textcolor{orange}{\frac{n'}{s_{n}}}}
    \leq  \frac{1}{t_{n_\circ}} + \frac{16 s_n \log s_n}{n}\quad \textrm{a.s.}
\end{align*}
The first equality expands the definition of the learning rate (\ref{eqn:learning-rate-empirical}). The next inequality comes the worst-case scenario where the $j$th center has had no recent updates (so that $\widehat{P}_j^{(n - s_n)} = 0$), and it is updated every single time following during this window of length $s_n$. We've re-indexed the sum by subtracting $n - s_n$ from the original index. For the first term, since $\widehat{P}_j{}^{(n - s_n)} = 0$, we use the bound $H_j^{(n_\circ + 1)} \leq t_{n_\circ}^{-1}$. And for the rest, we bound each $\widehat{P}_j^{(n')}$ by $\frac{1}{s_n}, \frac{2}{s_n},\ldots, \frac{s_n - 1}{s_n}$, respectively. This is the worst-case scenario, since delaying an update simply introduces a zero in the sum and shifts the rest of the bounds to the right. The final inequality upper bounds the partial sums of the harmonic series, and uses the assumption $e \leq s_n \leq \frac{n}{2}$.

To obtain (\ref{eqn:p-shift}), \Cref{lem:lr-iterates-bound} converts a learning rate bound to one on $\|W^{(n')} - W^{(n)}\|$, introducing a factor of $2R$. Then, Lipschitz continuity bounds $|P_j^{(n')} - P_j^{(n)}|$, introducing a factor of $L$.
\end{proof}

The analysis to show that $\widehat{P}_j^{(n)}$ concentrates around $P_j^{(n)}$ would be quite straightforward if $P_j$ were globally Lipschitz---then we could use \Cref{lem:lr-ub-sn} to design conditions on $s_n$ and $t_n$ to force the accumulated learning rates to go to zero over periods of $s_n$,
\[\lim_{n \to \infty}\, \sum_{j \in[k]} \sum_{n_\circ \leq n' < n} H_j^{(n'+1)} = 0.\]
Thus over a small interval $s_n$, the iterates would remain close together, and the bias of $\widehat{P}_j^{(n)}$ would be forced to zero in the limit. And if $s_n \uparrow \infty$, then Azuma-Hoeffding's would imply increasingly tight concentration. 

Unfortunately, $P_j$ is not generally globally Lipschitz; the local Lipschitz constant at $w \in \mathcal{D}_R$ depends on the distances between centers $\|w_j - w_{j'}\|$, so we need to perform our analysis on a subset $K\subset \mathcal{D}_R$ on which $P_j$ is $L$-Lipschitz. While we need to know that the iterates $W^{(n_\circ)},\ldots, W^{(n)}$ remain in $K$, this event is not generally contained in $\mathcal{F}_{n_\circ}$. Directly conditioning on it would introduce new dependencies that prevent us from applying Azuma-Hoeffding's. We can overcome this issue by conditioning on an $\mathcal{F}_{n_\circ}$-measurable event contained within this event instead: that $W^{(n_\circ)}$ is contained in $K_\circ$, some $r_\circ$-core set of $K$.

\begin{lemma}[Estimator concentration]
\label{lem:est-concentration}
Let $H_j^{(n)}$ and $\widehat{P}_j^{(n)}$ be defined as in (\ref{eqn:learning-rate-empirical} and \ref{eqn:learning-rate-empirical-p}). Let $K \subset \mathcal{D}_R$ be given so that the restriction $P_j\big|_K : K \to [0,1]$ is $L$-Lipschitz. Let $K_\circ$ be an $r_\circ$-core set of $K$.
Let $c = \max\{1, 256 kRL\}$ and $a_n = c\cdot \left(\frac{1}{t_{n_\circ}} + \frac{s_n\log s_n}{n}\right)$. If  $s_n$ satisfies $ 4 n^{2/3} (\log 2n)^{1/3} \leq s_n \leq \frac{n}{2} - 1$ and $a_n < cr_\circ / 16k$, then:
\[\Pr\left(\left| \widehat{P}_j^{(n)} - P_j^{(n)}\right| <  \frac{3}{8} a_n\,\middle|\, \mathcal{F}_{n_\circ}, \ W^{(n_\circ)} \in K_\circ \right) > 1 - \frac{1}{n}.\]
In particular, a multiplicative bound holds:
\[\Pr\left(\frac{1}{2}  P_j^{(n)} < \widehat{P}_j^{(n)} < 2 P_j^{(n)}\,\middle|\, \mathcal{F}_{n_\circ}, \  P_j^{(n_\circ)} > a_n, \ W^{(n_\circ)} \in K_\circ \right) > 1 - \frac{1}{n}.\]
\end{lemma}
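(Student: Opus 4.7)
The plan is to split the error $\widehat{P}_j^{(n)} - P_j^{(n)}$ into a \emph{bias} piece and a \emph{variance} piece, namely
\[
\widehat{P}_j^{(n)} - P_j^{(n)} \;=\; \underbrace{\widehat{P}_j^{(n)} - \tfrac{1}{s_n}\!\sum_{n_\circ\le n'<n} P_j^{(n')}}_{\text{martingale fluctuation}} \;+\; \underbrace{\tfrac{1}{s_n}\!\sum_{n_\circ\le n'<n}\big(P_j^{(n')}-P_j^{(n)}\big)}_{\text{drift of the underlying probabilities}},
\]
and control each by at most $\tfrac{1}{4}a_n$ and $\tfrac{1}{8}a_n$ respectively, giving the $\tfrac{3}{8}a_n$ bound after a union bound. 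A single conditioning event $\{W^{(n_\circ)}\in K_\circ\}\in\mathcal{F}_{n_\circ}$ will underlie the whole analysis, which sidesteps the issue of conditioning on the future event that iterates remain in $K$.

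For the bias piece, I would apply \Cref{lem:lr-ub-sn}: its hypothesis $e\le s_n+1\le n/2$ is contained in the hypothesis of the lemma, so almost surely $\sum_{j'\in[k]}\sum_{n_\circ\le n'<n} H_{j'}^{(n'+1)}\le \tfrac{16k}{c}\,a_n < r_\circ$, where the last inequality uses the hypothesis $a_n<cr_\circ/16k$. Since $K_\circ$ is an $r_\circ$-core set of $K$ and $W^{(n_\circ)}\in K_\circ$, this forces $W^{(n')}\in K$ for all $n_\circ\le n'\le n$ almost surely. The Lipschitz part of \Cref{lem:lr-ub-sn} then yields $\bigl|P_j^{(n')}-P_j^{(n)}\bigr|\le \tfrac{32kRL}{c}\,a_n\le \tfrac18 a_n$ (using $c\ge 256kRL$), and averaging over $n_\circ\le n'<n$ preserves this bound.

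For the variance piece, note that $D_{n'} := \ind\{I^{(n'+1)}=j\} - P_j^{(n')}$ is an $\mathcal{F}_{n'+1}$-measurable martingale difference sequence with $|D_{n'}|\le 1$, independent of whether iterates stay in $K$. Azuma--Hoeffding gives
\[
\Pr\!\left(\Bigl|\widehat{P}_j^{(n)}-\tfrac{1}{s_n}\!\!\sum_{n_\circ\le n'<n}\!P_j^{(n')}\Bigr|\ge \tfrac{a_n}{4}\,\Big|\,\mathcal{F}_{n_\circ}\right)\le 2\exp\!\Bigl(-\tfrac{s_n a_n^2}{32}\Bigr).
\]
Using $a_n\ge c\cdot \tfrac{s_n\log s_n}{n}\ge \tfrac{s_n\log s_n}{n}$, the exponent is at least $s_n^3(\log s_n)^2/(32 n^2)$, and the hypothesis $s_n\ge 4n^{2/3}(\log 2n)^{1/3}$ makes this exceed $\log(2n)$, giving tail probability below $1/n$. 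Combining with the bias bound via a union bound yields the first inequality. The main obstacle here is purely bookkeeping: keeping the conditioning on the $\mathcal{F}_{n_\circ}$-measurable event $\{W^{(n_\circ)}\in K_\circ\}$ consistent across the bias and variance terms, which is what motivated introducing the core set in the first place.

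For the multiplicative corollary, I would use that on the conditioning event $\{P_j^{(n_\circ)}>a_n\}$, the bias bound $|P_j^{(n)}-P_j^{(n_\circ)}|\le \tfrac18 a_n$ implies $P_j^{(n)}\ge \tfrac{7}{8}a_n$, so $\tfrac{3}{8}a_n\le \tfrac{3}{7}P_j^{(n)}$. The first inequality then gives $|\widehat{P}_j^{(n)}-P_j^{(n)}|< \tfrac{3}{7}P_j^{(n)}$, from which $\tfrac12 P_j^{(n)}<\tfrac47 P_j^{(n)}<\widehat{P}_j^{(n)}<\tfrac{10}{7}P_j^{(n)}<2P_j^{(n)}$ follows immediately.
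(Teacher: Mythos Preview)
Your proposal is correct and follows essentially the same approach as the paper: split into a martingale fluctuation controlled by Azuma--Hoeffding and a drift term controlled almost surely via \Cref{lem:lr-ub-sn} and the core-set property, then derive the multiplicative bound from $P_j^{(n)}\ge \tfrac{7}{8}a_n$. One terminological nit: since the drift bound holds almost surely given the conditioning, what you call a ``union bound'' in combining the two pieces is really just the triangle inequality on the decomposition---there is only one probabilistic event to account for.
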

\begin{proof}
The following sequence during the interval $n_\circ \leq n' < n$ is a martingale difference sequence:
\[\ind\{I^{(n'+1)} = j\} - P_j^{(n')}.\]
In fact, since the event $\{W^{(n_\circ)} \in K_\circ\}$ is $\mathcal{F}_{n_\circ}$-measurable, we can condition on it, and the sequence remains a martingale difference sequence. Then, $\widehat{P}_j^{(n)}$ is concentrated:
\begin{align*}
\Pr\left(\bigg|\widehat{P}_j^{(n)} - \frac{1}{s_n} \sum_{n_\circ\leq n' < n} P_j^{(n')}\bigg| \geq \frac{a_n}{4}\,\middle|\, \textcolor{gray}{\mathcal{F}_{n_\circ},\ W^{(n_\circ)} \in K_\circ} \right) 
&\overset{(i)}{\leq} 2 \exp\left( -\frac{s_n a_n^2}{32}\right) 
\\&\overset{(ii)}{\leq} 2 \exp\left(- \frac{s_n^3}{64n^2}\right) 
\overset{(iii)}{\leq} \frac{1}{n},
\end{align*}
where (i) follows from Azuma-Hoeffding's, (ii) from $\frac{1}{32} a_n^2 \geq \frac{1}{64} s_n^2/n^2$ since $c \geq 1$, and (iii) from plugging in the lower bound on $s_n$ in the theorem statement. 

To complete the theorem, we need to relate $P_j^{(n')}$ to $P_j^{(n)}$, which we can do whenever the iterates remain in $K$. Indeed, we conditioned on $W^{(n_\circ)} \in K_\circ$, and we also have: 
\[\sum_{j \in [k]} \sum_{n_\circ \leq n' < n} H_j^{(n'+1)} \overset{(i)}{\leq} \frac{16k a_n}{c} \overset{(ii)}{<} r_\circ \quad \mathrm{a.s.},\]
where (i) is the first result of \Cref{lem:lr-ub-sn}, which we may apply since $4 \leq s_n \leq \frac{n}{2} - 1$, and (ii) we assumed that $a_n < cr_\circ/16k$. By the core-set property of $K_\circ$, the iterates remain in $K$ during this interval on which $P_j$ is $L$-Lipschitz. We can now apply the second result (\ref{eqn:p-shift}) of \Cref{lem:lr-ub-sn}, which shows that for all $n_\circ \leq n' \leq n$,
\begin{equation*} 
\left|P_j^{(n')} - P_j^{(n)}\right| \leq 32kRL \cdot \left(\frac{1}{t_{n_\circ}} + \frac{s_n \log s_n}{n}\right) \leq \frac{a_n}{8}\quad \mathrm{a.s.}
\end{equation*}
By triangle inequality:
\[\bigg| \frac{1}{s_n} \sum_{n_\circ \leq n' < n} P_j^{(n')} - P_j^{(n)}\bigg| \leq \frac{a_n}{8} \quad \mathrm{a.s.}\]
A further application of triangle inequality yields the desired additive concentration bound:
\[\Pr\left(\left|\widehat{P}_j^{(n)} - P_j^{(n)}\right| \geq \frac{3}{8} a_n\,\middle|\, \textcolor{gray}{\mathcal{F}_{n_\circ},  \ W^{(n_\circ)} \in K_\circ} \right) \leq \frac{1}{n}.\]
If we further condition on the $\mathcal{F}_{n_\circ}$-measurable event $\{P_j^{(n_\circ)} > a_n\}$, then (\ref{eqn:p-shift}) implies $P_j^{(n)} > \frac{7}{8}a_n$, from which we also obtain the multiplicative bound.
\end{proof}

\subsection{Local Lipschitzness of $P_j$}

\begin{lemma}[$P_j$ is locally Lipschitz] \label{lem:local-lipschitz}
Let $p$ be a density supported in the closed ball $B(0,R)$. If $p$ is continuous on $B(0,R)$, then then the maps $P_j : \mathcal{D}_R \to [0,1]$ are locally Lipschitz.
\end{lemma}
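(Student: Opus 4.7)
Fix $w_0 \in \mathcal{D}_R$ and let $d_0 := \min_{i \neq j}\|w_{0,i} - w_{0,j}\| > 0$, which is positive because $w_0$ is non-degenerate. The plan is to exhibit a neighborhood $U$ of $w_0$ in $\mathcal{D}_R$ and a constant $L$ such that $|P_j(w) - P_j(w')| \leq L\|w - w'\|$ for all $w, w' \in U$. I would take $U$ to be the ball of radius $d_0/4$ around $w_0$ intersected with $\mathcal{D}_R$; on $U$, every pairwise separation $\|w_i - w_j\|$ stays $\geq d_0/2$. Since $p$ is continuous on the compact ball $B(0,R)$, it is bounded, say $p \leq M$, and this converts every volume estimate into a mass estimate.

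The core idea is to control the symmetric difference $V_j(w) \triangle V_j(w')$. For $i \neq j$ define the affine-in-$x$ function
\[
\phi_{ij}(w, x) \;:=\; \|x - w_j\|^2 - \|x - w_i\|^2 \;=\; 2\langle x,\, w_i - w_j\rangle + \|w_j\|^2 - \|w_i\|^2,
\]
so that $V_j(w) = \{x : \phi_{ij}(w,x) \leq 0\ \forall i \neq j\}$ and $\nabla_x \phi_{ij}(w, x) = 2(w_i - w_j)$. If $x$ lies in the symmetric difference, then for some $i \neq j$, $\phi_{ij}(\cdot,x)$ changes sign between $w$ and $w'$, hence $|\phi_{ij}(w, x)| \leq |\phi_{ij}(w,x) - \phi_{ij}(w',x)|$. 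A direct computation using $\|x\| \leq R$ and $w, w' \in B(0,R)$ yields $|\phi_{ij}(w,x) - \phi_{ij}(w',x)| \leq C_R \|w - w'\|$ for an explicit constant $C_R$ depending only on $R$.

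Therefore $V_j(w) \triangle V_j(w') \subset \bigcup_{i \neq j} S_i$, where $S_i$ is the slab
\[
S_i \;=\; B(0,R) \,\cap\, \{x : |\phi_{ij}(w, x)| \leq C_R \|w - w'\|\}.
\]
Because $|\nabla_x \phi_{ij}(w,x)| = 2\|w_i - w_j\| \geq d_0$ on $U$, the slab $S_i$ has thickness at most $C_R \|w-w'\| / d_0$ in the direction normal to the bisecting hyperplane of $w_i$ and $w_j$, so its Lebesgue volume is bounded by $c_d R^{d-1} C_R \|w-w'\|/d_0$ for a dimensional constant $c_d$. Integrating $p \leq M$ over each $S_i$ and summing over the $k-1$ choices of $i \neq j$ gives
\[
|P_j(w) - P_j(w')| \;\leq\; \int_{V_j(w) \triangle V_j(w')} p \;\leq\; (k-1)\,M\, c_d R^{d-1}\, \frac{C_R}{d_0}\, \|w - w'\|,
\]
which is the desired Lipschitz bound on $U$.

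I expect no substantive obstacle; the only care needed is bookkeeping the constants and verifying that $V_j(w) \triangle V_j(w')$ really is covered by the union of slabs (this is where non-degeneracy on $U$ is essential, since otherwise some $\|w_i - w_j\|$ could vanish and the slab thickness would blow up). Continuity of $p$ enters only through boundedness on $B(0,R)$; the same argument in fact shows local Lipschitzness whenever $p$ is locally bounded.
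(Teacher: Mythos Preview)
Your proof is correct and follows essentially the same approach as the paper: bound $|P_j(w)-P_j(w')|$ by $p_{\max}$ times the Lebesgue measure of $V_j(w)\triangle V_j(w')$, then show this symmetric difference is contained in a union of thin slabs around the $(i,j)$ bisecting hyperplanes whose thickness is $O(\|w-w'\|/\min_{i\neq j}\|w_i-w_j\|)$. The paper carries this out via an explicit change of coordinates in the $k=2$ case and then reduces general $k$ to $k=2$, whereas your use of the affine functions $\phi_{ij}$ handles all $k$ at once and is somewhat cleaner; the geometric content is the same.
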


\begin{proof}
We first prove this in the setting where there are only two centers (i.e.\@ $k = 2$), before generalizing. Given two tuples of centers $w, w' \in \mathcal{D}_R$. Then the difference $P_j(w) - P_j(w')$ is:
\[P_j(w) - P_j(w') =  \int_{V_j(w) \setminus V_j(w')} p(x)\, dx - \int_{V_j(w') \setminus V_j(w)} p(x)\, dx.\]
Since $p$ is continuous on the closed set $B(0,R)$, it attains a maximum $p_\mathrm{max} = \sup\, p(x) < \infty$. Let $\lambda$ be the Lebesgue measure. It follows by triangle inequality that:
\[|P_j(w) - P_j(w')| \leq p_\mathrm{max}\cdot  \bigg(\lambda\big(\textcolor{gray}{V_j(w) \setminus V_j(w')}\big) + \lambda\big(\textcolor{gray}{V_j(w') \setminus V_j(w)}\big)\bigg).\]
Thus, to prove that $P_j$ is locally Lipschitz, we need to bound how much the $j$th Voronoi cell can grow/shrink when the two centers $w$ are perturbed to $w' \in \mathcal{D}_R$. As $w \in \mathcal{D}_R$, the two centers are separated $\|w_1 - w_2\| > 0$. We claim that if the perturbation $\|w - w'\|$ is a factor smaller than the separation, $\|w - w'\| \leq \frac{1}{4} \|w_1 - w_2\|$, then the $j$th Voronoi cell can only grow linearly with $\|w - w'\|$,
\[\lambda\big(\textcolor{gray}{V_j(w') \setminus V_j(w)}\big) \leq L_w \|w - w'\|,\]
for some $L_w > 0$. And, the same can be said for the other term, measuring how much the region can shrink. If this claim holds, then $P_j$ is locally Lipschitz, where the local Lipschitz constant at $w$ is $p_\mathrm{max}\cdot 2L_w$.

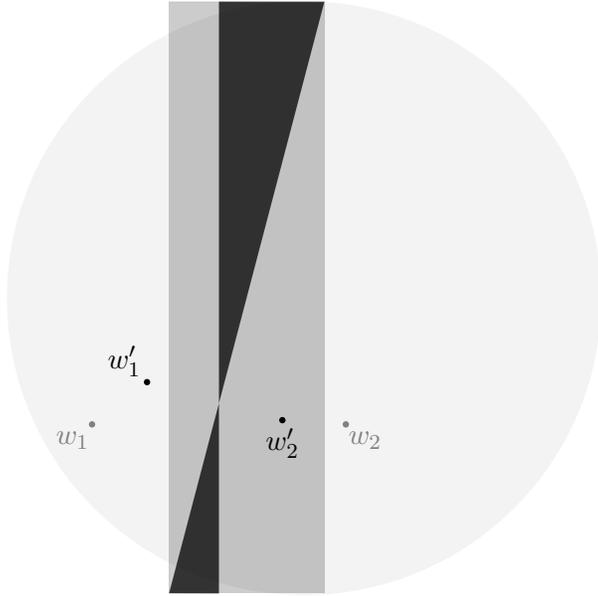
\begin{figure}
    \centering
    \def \globalscale {2.000000}
\begin{tikzpicture}[y=0.80pt, x=0.80pt, yscale=-\globalscale, xscale=\globalscale, inner sep=0pt, outer sep=0pt]
  \path[fill=black,fill opacity=0.05,even odd rule,line width=0.212pt] (70.1325,70.1323) ellipse (1.9756cm and 1.9756cm);

  \path[fill=black,fill opacity=0.2,line width=0.394pt,rounded corners=0.0000cm] (38.2029,0.0312) rectangle (74.9252,139.9688);

  \path[fill=black,fill opacity=0.75,line cap=butt,line join=miter,line width=0.212pt] (74.9252,0.0312) -- (38.2029,139.9688) -- (50.0000,140.0000) -- (50.0000,0.0000) -- cycle;

  \filldraw[gray] (20,100) circle (0.5pt) node[label={[label distance=2.5]250:$w_1$}]{};

  \filldraw[gray] (80,100) circle (0.5pt) node[label={[label distance=2.5]290:$w_2$}]{};

  \filldraw[black] (33,90) circle (0.5pt) node[label={[label distance=2.5]135:$w_1'$}]{};

  \filldraw[black] (65,99) circle (0.5pt) node[label={[label distance=2.5]270:$w_2'$}]{};
\end{tikzpicture}
    \captionsetup{width=0.9\textwidth}
    \caption{A two-dimensional projection of the 2-means problem in $\mathbb{R}^d$. The light gray disk represents the support of the distribution $p$, which has a diameter of $2R$. The initial tuple $w = (w_1,w_2)$ partitions the space along the vertical hyperplane. After a small perturbation to $w' = (w_1', w_2')$, a new Voronoi partition is induced, where the black region corresponds the symmetric difference $V_1(w)\Delta V_1(w') = V_2(w) \Delta V_2(w')$. The probability mass of this region can be upper bounded by the rectangular gray region whose width is $O(\|w - w'\|)$ and lengths in all other directions are $2R$.}
    \label{fig:lipschitz}
\end{figure}

Fix $w \in \mathcal{D}_R$ and let $2r = \|w_1 - w_2\|$ be the separation of its two centers. By a change of coordinates, we may without loss of generality assume that:
\[w_1 = ( -r, 0,\ldots, 0) \qquad \textrm{and}\qquad w_2 = (r,0\,\ldots,0).\]
Thus, the boundary of their Voronoi cells is the hyperplane $\{x \in \mathbb{R}^d : x_1 = 0\}$. We now show that if the perturbed centers $w'$ satisfy $\|w - w'\| = \epsilon \leq \frac{r}{2}$, then $V_1(w')$ is contained in the halfspace:
\[V_1(w') \subset \left\{x \in \mathbb{R}^d : x_1 \leq \left(1 + \frac{2R}{r} \right) \epsilon\right\},\] from which local Lipschitzness follows:
\[\lambda\big(\textcolor{gray}{V_1(w') \setminus V_1(w)}\big) \leq (2R)^{d-1}\left(1 + \frac{2R}{r}\right) \cdot \|w - w'\|,\]
since $V_1(w') \setminus V_1(w)$ is contained in the rectangular region where the last $d - 1$ coordinates have length $2R$ and the first coordinate length $(1 + 2R/r)\epsilon$. \Cref{fig:lipschitz} depicts this argument.

We show that $V_1(w')$ is contained in the above halfspace by upper bounding the first coordinate of points in $V_1(w')$. Note that the new boundary induced by $w'$ is the hyperplane $H$ intersecting $\frac{1}{2} (w_1' + w_2')$ defined by the normal vector $w_1' - w_2'$:
\[H:= \frac{1}{2} (w_1' + w_2')  + \big\{x \in \mathbb{R}^d : (w_1' - w_2')^\top x = 0\big\} .\]
Thus, $V_1(w')$ is to the left of $H$. The first term $\frac{1}{2} \|w_1' + w_2'\|$ contributes at most $\epsilon$ to the first coordinate of points in $H$, since $\|w - w'\| = \epsilon$. Since after the change of coordinates, all points in $B(0,R)$ must now be at most a distance of $2R$ away from $\frac{1}{2} (w_1' + w_2')$, we just need to bound the first coordinate of points in:
\[\big\{x \in B(0,2R) : (w_1' - w_2')^\top x = 0\big\}.\]
Let $w_1' - w_2' = (\alpha_1,\ldots, \alpha_d)$. Then if $x$ in this set satisfies:
\[|x_1| = \left| \frac{\alpha_2 x_2 + \cdots + \alpha_d x_d}{\alpha_1}\right| \leq \frac{\|w_1' - w_2'\| \cdot \|x\|}{r},\]
by Cauchy-Schwarz and the fact that $|\alpha_1| \geq r$, which follows  from the form of $w$ and that the perturbation is less than $r/2$. That is, $|x_1| \leq 2R\epsilon/r$. Thus, $V_1(w')$ is contained the above halfspace.

At this point, we have shown the result for $k = 2$. The setting for general $k$ is an easy extension. Let $\Delta$ be the symmetric difference. Then as before, we need to show:
\[\lambda \big(\textcolor{gray}{V_j(w) \Delta V_j(w')}\big) \leq 2L_w \|w - w'\|, \]
for some $L_w > 0$ and $w'$ in a neighborhood of $w$.

Given $w$ and fixed $j$, consider a collection of $k-1$ induced 2-means problems constructed on $\widetilde{w}_{\ell} := (w_j, w_{\ell})$ for $\ell \ne j$. Let $\widetilde{V}$ map the 2-center $\widetilde{w} \in \mathbb{R}^{2 \times d}$ to its Voronoi partitions. Then:
\[V_j(w) \Delta V_j(w') \subset \bigcup_{\ell \ne j} \widetilde{V}_j(\widetilde{w}_{\ell}) \Delta \widetilde{V}_j(\widetilde{w}_\ell').\]
It follows that we may reduce to the 2-center case, since:
\[\lambda\big(\textcolor{gray}{V_j(w) \Delta V_j(w')}\big) \leq \sum_{\ell \ne j} \lambda\big(\textcolor{gray}{\widetilde{V}_j(\widetilde{w}_{\ell}) \Delta \widetilde{V}_j(\widetilde{w}_\ell')}\big)\qedhere\]
\end{proof}

\subsection{Properties of $T(m)$}
Recall we defined for $r > 0$, the function $T_r : \mathbb{N} \to \mathbb{N}$ so that $T_r(m)$ is the unique natural number so that:
\begin{equation} 
    \sum_{m \leq n \textcolor{blue}{<} T_r(m)} \frac{1}{n} \leq r < \sum_{m \leq n \textcolor{orange}{\leq} T_r(m)} \frac{1}{n}. \tag{\ref{eqn:time-bound}}
\end{equation}
The following lemma and corollary give properties of $T_r$.

\begin{lemma}\label{lem:harmonic-partial-sum}
Let $1 < m < m'$ be in $\mathbb{N}$. Then:
\[\log \frac{m'}{m} \leq \sum_{m \leq n < m'} \frac{1}{n} \leq \log \frac{m' - 1}{m - 1}.\]
\end{lemma}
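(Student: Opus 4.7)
The plan is to prove both inequalities by the standard integral comparison test, using that the function $x \mapsto 1/x$ is strictly decreasing on $(0,\infty)$.

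For the lower bound, I would observe that for each integer $n$ with $m \leq n < m'$ and each $x \in [n, n+1]$, monotonicity gives $1/n \geq 1/x$, hence
\[\frac{1}{n} \geq \int_n^{n+1} \frac{1}{x}\, dx.\]
Summing over $n = m, m+1, \ldots, m'-1$, the integrals telescope into a single integral from $m$ to $m'$, yielding
\[\sum_{m \leq n < m'} \frac{1}{n} \;\geq\; \int_m^{m'} \frac{1}{x}\, dx \;=\; \log\frac{m'}{m}.\]

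For the upper bound, I would use the opposite comparison: for each $n$ with $m \leq n < m'$ and each $x \in [n-1, n]$, monotonicity gives $1/n \leq 1/x$, so
\[\frac{1}{n} \leq \int_{n-1}^{n} \frac{1}{x}\, dx.\]
This uses $m > 1$ so that $n - 1 \geq 1 > 0$ and the integrand remains well-defined. Summing from $n = m$ to $m'-1$ and telescoping gives
\[\sum_{m \leq n < m'} \frac{1}{n} \;\leq\; \int_{m-1}^{m'-1} \frac{1}{x}\, dx \;=\; \log\frac{m'-1}{m-1},\]
which completes the proof. There is no real obstacle here; the only subtlety is the hypothesis $m > 1$, which is needed precisely so that the left endpoint $m - 1$ of the upper-bound integral is strictly positive.
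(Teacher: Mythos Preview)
Your proof is correct and is essentially the same integral comparison argument the paper uses. The only cosmetic difference is that for the upper bound the paper writes $\sum_{m\le n<m'} \frac{1}{n} \le \int_m^{m'} \frac{1}{x-1}\,dx$, which after the substitution $u=x-1$ is exactly your $\int_{m-1}^{m'-1}\frac{1}{x}\,dx$.
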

\begin{proof}
\[
   \log \frac{m'}{m} \leq  \int_m^{m'} \frac{1}{x} \, dx \leq \sum_{m \leq n < m'} \frac{1}{n} \leq \int_m^{m'} \frac{1}{x - 1}\, dx = \log \frac{m'-1}{m-1}.\qedhere
\]
\end{proof}

\vspace{10pt}

\begin{corollary}\label{cor:analytic-tr}
Let $r > 0$. Let $\alpha := e^r - 1$ and set $T \equiv T_r$. Then:
\[\alpha (m-1)  \leq T(m) - m \leq \alpha m.\]
\end{corollary}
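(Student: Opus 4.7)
The plan is to derive both bounds as direct algebraic consequences of the harmonic partial-sum estimates in \Cref{lem:harmonic-partial-sum} combined with the defining sandwich inequality for $T_r(m)$:
\begin{equation*}
\sum_{m \leq n < T(m)} \frac{1}{n} \;\leq\; r \;<\; \sum_{m \leq n \leq T(m)} \frac{1}{n}.
\end{equation*}

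\textbf{Upper bound $T(m) - m \leq \alpha m$.} I would start from the left half of the sandwich, $\sum_{m \leq n < T(m)} \tfrac{1}{n} \leq r$, and apply the lower estimate from \Cref{lem:harmonic-partial-sum} with $m' = T(m)$, giving $\log(T(m)/m) \leq r$. Exponentiating yields $T(m) \leq e^r m = (1+\alpha)m$, so $T(m) - m \leq \alpha m$.

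\textbf{Lower bound $T(m) - m \geq \alpha(m-1)$.} Here I would use the right half of the sandwich, $r < \sum_{m \leq n \leq T(m)} \tfrac{1}{n}$, and apply the upper estimate from \Cref{lem:harmonic-partial-sum} with $m' = T(m) + 1$, which gives $r < \log(T(m)/(m-1))$. Exponentiating yields $T(m) > e^r(m-1) = (1+\alpha)(m-1)$, which rearranges to $T(m) - m > \alpha(m-1) - 1$. Because $T(m) - m$ is an integer, this strict inequality upgrades to $T(m) - m \geq \alpha(m-1)$ whenever $\alpha(m-1)$ is an integer; in the generic case one instead notes directly that $T(m) \geq \lceil e^r(m-1)\rceil$, and rounds up to the stated bound. (If the paper intends the slightly weaker bound $T(m) - m \geq \alpha(m-1) - 1$, no rounding argument is needed; the statement as written follows from the integer-valuedness of $T(m)$.)

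There is no real obstacle: the whole argument is a half-page manipulation of logarithms. The only thing to be careful about is the choice of endpoints when applying the partial-sum lemma---using $m' = T(m)$ for the strict-below sum and $m' = T(m)+1$ for the closed sum---so that the two sides of the sandwich produce the two sides of the corollary with matching $(m-1)$ versus $m$ factors. Once the endpoints are set correctly, both bounds drop out by a single exponentiation each.
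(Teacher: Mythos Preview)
Your approach is exactly the paper's: combine the defining sandwich for $T_r(m)$ with the two sides of \Cref{lem:harmonic-partial-sum} (using $m'=T(m)$ for the left sum and $m'=T(m)+1$ for the right), then exponentiate. Your caution about the lower bound is well placed---the lemma with $m'=T(m)+1$ yields $r<\log\!\bigl(T(m)/(m-1)\bigr)$, hence only $T(m)-m>\alpha(m-1)-1$ directly; the paper's displayed chain writes $\log\!\bigl((T(m)-1)/(m-1)\bigr)$, which appears to be a typo, and in any case the applications in the paper only use $T(m)\geq e^{r}(m-1)$, which does follow.
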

\begin{proof}
Combining \Cref{lem:harmonic-partial-sum} with the definition of $T(m)$, we have:
\[\log \frac{T(m)}{m} \leq \sum_{m \leq n < T(m)} \frac{1}{n} \leq r < \sum_{m \leq n <T(m) + 1} \frac{1}{n}  \leq \log \frac{T(m)-1}{m-1}.\]
Rearranging yields the result.
\end{proof}

\section*{Acknowledgements}
Thanks to Yian Ma, Anthony Thomas, Zhi Wang, and AISTATS 2022 reviewers for helpful discussions that improved this work and its presentation.

\bibliography{references} 

\begin{thebibliography}{18}
\providecommand{\natexlab}[1]{#1}
\providecommand{\url}[1]{\texttt{#1}}
\expandafter\ifx\csname urlstyle\endcsname\relax
  \providecommand{\doi}[1]{doi: #1}\else
  \providecommand{\doi}{doi: \begingroup \urlstyle{rm}\Url}\fi

\bibitem[Aloise et~al.(2009)Aloise, Deshpande, Hansen, and Popat]{aloise2009np}
Daniel Aloise, Amit Deshpande, Pierre Hansen, and Preyas Popat.
\newblock {NP}-hardness of {E}uclidean sum-of-squares clustering.
\newblock \emph{Machine learning}, 75\penalty0 (2):\penalty0 245--248, 2009.

\bibitem[Awasthi et~al.(2015)Awasthi, Charikar, Krishnaswamy, and
  Sinop]{awasthi2015hardness}
Pranjal Awasthi, Moses Charikar, Ravishankar Krishnaswamy, and Ali~Kemal Sinop.
\newblock The hardness of approximation of {E}uclidean $k$-means.
\newblock \emph{arXiv preprint arXiv:1502.03316}, 2015.

\bibitem[Bachem et~al.(2017)Bachem, Lucic, Hassani, and
  Krause]{bachem2017uniform}
Olivier Bachem, Mario Lucic, S~Hamed Hassani, and Andreas Krause.
\newblock Uniform deviation bounds for $k$-means clustering.
\newblock In \emph{International Conference on Machine Learning}, pages
  283--291. PMLR, 2017.

\bibitem[Bartlett et~al.(1998)Bartlett, Linder, and
  Lugosi]{bartlett1998minimax}
Peter~L Bartlett, Tam{\'a}s Linder, and G{\'a}bor Lugosi.
\newblock The minimax distortion redundancy in empirical quantizer design.
\newblock \emph{IEEE Transactions on Information theory}, 44\penalty0
  (5):\penalty0 1802--1813, 1998.

\bibitem[Ben-David(2007)]{ben2007framework}
Shai Ben-David.
\newblock A framework for statistical clustering with constant time
  approximation algorithms for \mbox{$k$-median} and \mbox{$k$-means}
  clustering.
\newblock \emph{Machine Learning}, 66\penalty0 (2):\penalty0 243--257, 2007.

\bibitem[Bertsekas and Tsitsiklis(2000)]{bertsekas2000gradient}
Dimitri~P Bertsekas and John~N Tsitsiklis.
\newblock Gradient convergence in gradient methods with errors.
\newblock \emph{SIAM Journal on Optimization}, 2000.

\bibitem[Bottou(1998)]{bottou1998online}
L\'{e}on Bottou.
\newblock Online algorithms and stochastic approximations.
\newblock In \emph{Online Learning and Neural Networks}. Cambridge University
  Press, 1998.

\bibitem[Bottou and Bengio(1995)]{bottou1995convergence}
L\'{e}on Bottou and Yoshua Bengio.
\newblock Convergence properties of the $k$-means algorithms.
\newblock In \emph{Advances in Neural Information Processing Systems}, 1995.

\bibitem[Capp{\'e} and Moulines(2009)]{cappe2009line}
Olivier Capp{\'e} and Eric Moulines.
\newblock Online expectation--maximization algorithm for latent data models.
\newblock \emph{Journal of the Royal Statistical Society: Series B (Statistical
  Methodology)}, 2009.

\bibitem[Durrett(2019)]{durrett2019probability}
Rick Durrett.
\newblock \emph{Probability: theory and examples}.
\newblock Cambridge University Press, 2019.

\bibitem[Grinfeld(2013)]{grinfeld2013introduction}
Pavel Grinfeld.
\newblock \emph{Introduction to tensor analysis and the calculus of moving
  surfaces}.
\newblock Springer, 2013.

\bibitem[Karimi et~al.(2019)Karimi, Wai, Moulines, and
  Lavielle]{karimi2019global}
Belhal Karimi, Hoi-To Wai, Eric Moulines, and Marc Lavielle.
\newblock On the global convergence of (fast) incremental expectation
  maximization methods.
\newblock \emph{arXiv preprint arXiv:1910.12521}, 2019.

\bibitem[Li and Orabona(2019)]{li2019convergence}
Xiaoyu Li and Francesco Orabona.
\newblock On the convergence of stochastic gradient descent with adaptive
  stepsizes.
\newblock In \emph{The 22nd International Conference on Artificial Intelligence
  and Statistics}. PMLR, 2019.

\bibitem[Lloyd(1982)]{lloyd1982least}
Stuart Lloyd.
\newblock Least squares quantization in {PCM}.
\newblock \emph{IEEE Transactions on Information Theory}, 1982.

\bibitem[Mairal(2015)]{mairal2015incremental}
Julien Mairal.
\newblock Incremental majorization-minimization optimization with application
  to large-scale machine learning.
\newblock \emph{SIAM Journal on Optimization}, 2015.

\bibitem[Paul et~al.(2021)Paul, Chakraborty, Das, and Xu]{paul2021uniform}
Debolina Paul, Saptarshi Chakraborty, Swagatam Das, and Jason Xu.
\newblock Uniform concentration bounds toward a unified framework for robust
  clustering.
\newblock \emph{Advances in Neural Information Processing Systems}, 34, 2021.

\bibitem[Pollard(1981)]{pollard1981strong}
David Pollard.
\newblock Strong consistency of $k$-means clustering.
\newblock \emph{The Annals of Statistics}, pages 135--140, 1981.

\bibitem[Tang and Monteleoni(2017)]{tang2017convergence}
Cheng Tang and Claire Monteleoni.
\newblock Convergence rate of stochastic $k$-means.
\newblock In \emph{Artificial Intelligence and Statistics}, 2017.

\end{thebibliography}
\end{document}